\newtheorem{assumption}{Assumption}
\newcommand{\NAM}{\textsc{CMAB-SM}}
\renewcommand{\cite}[1]{\citep{#1}}
\pgfplotsset{compat=1.15}
\begin{document}
\title{Stochastic Top-$K$ Subset Bandits with Linear Space and Non-Linear Feedback}
\altauthor{%
 \Name{Mridul Agarwal} \Email{agarw180@purdue.edu}\\
 \addr Purdue University, West Lafayette, IN, USA 
 \AND
 \Name{Vaneet Aggarwal} \Email{vaneet@purdue.edu}\\
 \addr Purdue University, West Lafayette, IN, USA 
 \AND
 \Name{Christopher J. Quinn} \Email{cjquinn@iastate.edu}\\
 \addr Iowa State University, Ames, IA, USA
 \AND
 \Name{Abhishek K. Umrawal} \Email{aumrawal@purdue.edu}\\
 \addr Purdue University, West Lafayette, IN, USA 
}

\maketitle

\begin{abstract}
Many real-world problems like Social Influence Maximization face the dilemma of choosing the best $K$ out of $N$ options at a given time instant. This setup can be modeled as a combinatorial bandit which chooses $K$ out of $N$ arms at each time, with an aim to achieve an efficient trade-off between exploration and exploitation. This is the first work for combinatorial bandits where the feedback received can be a non-linear function of the chosen $K$ arms.  The direct use of multi-armed bandit requires choosing among $N$-choose-$K$ options making the state space large. In this paper, we present a novel algorithm which is computationally efficient and the storage is linear in $N$. The proposed algorithm is a divide-and-conquer based strategy, that we call CMAB-SM. Further, the proposed algorithm achieves a \textit{regret bound} of $\tilde O(K^{\frac{1}{2}}N^{\frac{1}{3}}T^{\frac{2}{3}})$ for a time horizon $T$, which is \textit{sub-linear} in all parameters $T$, $N$, and $K$. %When applied to the problem of Social Influence Maximization, the performance of the proposed algorithm surpasses the UCB algorithm and some more sophisticated domain-specific methods. 
\end{abstract}

\section{Introduction}

% \textit{Multi-Armed Bandits} (MABs) are extensively studied in the literature for sequential decision-making problem, and as a starting point in reinforcement learning. The standard MAB problem consists of an agent choosing an arm at each time-step from $N$ arms, each having a distribution of the reward with an unknown mean. The objective is, given the history of chosen arms and obtained rewards, to choose an arm at each time-step which provides the maximum reward. The performance is measured as the gap between the reward obtained if the optimal arm was played at each time and the arms played by the algorithm, which is known as regret.

Multi-Armed Bandits (MAB) can be used to solve problems in domains where an agent chooses an arm to play at each time instant and receives a reward. The goal of the agent is to perform online learning to select the best arm as early as possible after some initial exploration. However, many real-world problems are combinatorial in nature, where the agent chooses $K$ out of $N$ arms at each time and receives an aggregate reward. For example, the problem of Social Influence Maximization where the aim is to select a subset of individuals in a social network to adopt a new product or innovation, and the target is to trigger a large cascade of further adoptions \citep{domingos2001mining}. We have studied it later in great detail. Other applications include daily advertising campaign characterized by a set of sub-campaigns \citep{zhang2012joint,nuara2018combinatorial}, erasure-coded storage \citep{xiang2016joint} where $K$ out of $N$ servers are chosen to obtain the content for each request,  and cross-selling item selection with $K$ items in the bundle \citep{1250942}. 

We consider the setting where the agent plays a complex action made by choosing $K$ out of $N$ arms at each time and only receives an aggregate reward. This reward is a function of the rewards of different arms. This is an instance of Combinatorial Multi-Armed Bandit (CMAB) problem. Based on the history of actions played and rewards observed, the action at the next time is taken. This paper aims to find an efficient  algorithm to minimize the gap between the reward obtained by choosing the best $K$ arms at each time and the arms chosen by the CMAB algorithm, for a given time horizon $T$. Only an aggregate reward, a possibly non-linear function of individual arms' rewards, is observed.  % We acknowledge that the obtained reward may be a non-linear function of the rewards of each arm (which are not individually known). 
The non-linear function makes it hard to estimate the rewards of each arm individually. Thus,  algorithms that either assume access to, or first estimate, the individual arm rewards do not directly work \citep{pmlr-v28-chen13a,pmlr-v32-lind14}.  To our knowledge, this is the first work that proposes an algorithm for the CMAB problem with non-linear aggregate rewards without any extra feedback. %We note that even though the problem with the availability of individual arm rewards has been studied in the past and this is the first work that considers regret bounds where the individual arm rewards are not known and cannot be easily recovered to the best of our knowledge.

One way for solving CMAB with non-linear feedback is to treat every action (consisting of $K$ arms) as an arm and apply the classical MAB framework using Upper Confidence Bound (UCB) algorithm \citep{auer2010ucb}. However, as the number of arms $N$ and the subset size $K$ increases, the total number of actions increases exponentially. Consider a case where $N=30$, $K=15$, and playing each action costs $1$ second. The time to explore all possible actions a single time will take {\color{black}${{30}\choose{15}}$ seconds or} approximately $5$ years. Moreover, to store all the possible actions and their rewards requires large memory, which might not be possible for applications with limited storage constraints \citep{pmlr-v84-liau18a}. This paper proposes a novel algorithm, called \NAM , which explores only $O(N)$ actions, and achieves a regret bound of $\tilde O(K^\frac{1}{2}N^\frac{1}{3}T^\frac{2}{3})$ \footnote{$\tilde{O}()$ hides the logarithmic factors.}. We note that using the UCB approach as mentioned, the regret bound will be $O\Big(\sqrt{{{N}\choose{K}}T}\Big)$, which is better than the proposed bound of $O(T^{2/3})$ for small values of $N$ and $K$. However, this is not the case when $N$ and $K$ are large. Again for the case of $N=30$, and $K=15$, the regret bounds of \NAM\ will match that of %exceeds regret of 
UCB only when $T$ is approximately $4\times 10^{26}$. Further, \NAM\ has a storage complexity of $O(N)$, and per round time complexity of %computation time in each round of 
{\color{black}$\tilde{O}(K)$}, making it efficient. %ly implementable.

\subsection{Our Contributions}%{Key techniques and results}

The main contributions of this paper can be summarized as follows:
\begin{enumerate}[leftmargin=*]
    \item We propose \NAM , the first, efficient algorithm for the CMAB problem when the reward is non-linear and no additional feedback about individual arms is available.  

    \item We use the theory of stochastic dominance to obtain ordering of individual arms instead of estimating arm distributions directly, which alleviates the need of linear feedback. This approach may be of independent interest for MAB problems. 

    \item We prove that \NAM\  achieves a regret of $\widetilde{O}({K^\frac{1}{2}N^\frac{1}{3}T^\frac{2}{3}})$ when the exploration-exploitation procedure runs for time horizon $T$. Thus the regret is \emph{sub-linear} with respect to each parameter.  

    \item We prove the algorithm is efficient, with time complexity of $O(TK\log K)$ and space complexity of $O(N)$.

    \item We apply \NAM\ to some synthetic problems and show that it outperforms the UCB algorithm. 
    % \item We also apply \NAM \ to a real-world problem of social influence maximization and show that it outperforms the $\epsilon$-CD algorithm (a domain-specific algorithm based on the \textit{credit distribution model} \cite{goyal2011data}) and UCB algorithm.

    %%%  original (pre 5pm May 25)
    
    % \item An efficient algorithm, called CMAB-SM, is proposed for CMAB problem. CMAB-SM first divides the problem space into groups and sorts the arms in each group. Then, it keeps on building the optimal actions by merging the groups one-by-one.
    
    % \item CMAB-SM is shown to be efficient in space and computation complexities in each round as compared the standard UCB approach by considering each action separately. CMAB-SM has a space complexity of $O(N)$, and achieves a regret of $\widetilde{O}({K^\frac{1}{2}N^\frac{1}{3}T^\frac{2}{3}})$ when the exploration-exploitation procedure runs for time horizon $T$.
    
    % \item Detailed analysis to bound the regret for CMAB-SM using the theory of stochastic dominance.
    
    % \item Detailed discussion is presented of an application of CMAB-SM to Social Influence Maximization. Comparisons are made with the UCB algiorithm and some domain-specifc methods.
\end{enumerate}

\subsection{Key techniques}  We now summarize the method and proof techniques used.  \NAM\ divides all $N$ arms into groups of $K+1$ arms, such that each group contains only $K+1$ actions as there are $K$ arms to choose from $K+1$ arms. Since choosing $K$ out of $K+1$ is equivalent to removing $1$ out of $K+1$, there is a one-to-one mapping between arms and actions in a group. We sort the actions (and the corresponding arms) in each group which requires time steps of polynomial order in $K$. We then merge those groups one by one and obtain the best $K$ arms. % and the action constructed using those $K$ arms.

For the analysis, %To obtain the regret bounds 
we assume certain properties on the reward distributions of the different arms, and on the function of rewards of individual arms. More precisely, we use the theory of stochastic dominance to differentiate between cumulative distribution functions. The reward distributions of different arms are assumed to dominate or be dominated by each other. Further, the non-linear function is assumed to be symmetric in the rewards obtained from each arm, and the mean of the function is assumed to be continuous (in terms of dominated inputs). These properties are satisfied in case of a few reward distributions (e.g., Bernoulli rewards), and few non-linear functions (e.g., maximum).

\subsection{Related Work} \label{related_work}
Combinatorial Bandits have been studied where the agent chooses $K$ of the $N$ arms in each round  \citep{NIPS2007_3371,Cesa-Bianchi:2012:CB:2240304.2240495,Audibert:2014:ROC:2765232.2765234,dani08stochastic,abbasi-yadkori11improved}. In these works, it is assumed that the reward function in each round is linear in the different arms. They consider a setting where at time $t$, the agent selects an arm $x_t\in D_t$ and observes a reward $\theta^Tx_t$, where $D_t\subset\mathbb{R}^K$ is the decision set and $\theta\in\mathbb{R}^K$ is a constant vector. Due to this linear function, the problem is also called online linear optimization. The algorithms proposed in these works use the linearity of the reward function to estimate rewards of individual arms and achieve a regret of $\Tilde{O}\left(\sqrt{T}\right)$. Weights are then assigned to each of the $\binom{N}{K}$ actions to decide the action in the next round; such approaches are not computationally efficient for large $N$. The work of \citep{pmlr-v84-liau18a} reduces the space complexity at the cost of regret bounds. However, they still loop over all the arms and the regret bound becomes exponential in $K$.  For a linear function, such as sum of rewards, we can construct arms which is a binary $K$-sparse vector of length $N$. Such a setting has $N$ unknown variables, and those unknowns could be obtained using least squares as done by \cite{dani08stochastic} or regularized least squares \cite{abbasi-yadkori11improved}. We consider non-linear functions for which the expected rewards of individual arms could not be obtained using least squares solution.

\cite{filippi10parametric,jun17scalable,li17provably} studied the problem of generalized linear models (GLM) where reward $r_t$ is a function $\left(f:\mathbb{R}\to\mathbb{R}\right)$ of $z=\theta^T x_t$. Generalized linear models assume the distribution of bandit reward $r_t$ belongs to a cannonical exponential family. The exponential distribution allows the use of log-likelihood maximization to obtain estimates of arm parameters which increase the likelihood of observed rewards.  Generalized Linear Models assume that the expected reward of the arm played is a non linear function of the linear combination of features of the action played with a fixed parameter. This is different than our setup where we assume the reward of the action played is a non linear function of individual realization of rewards of each arm. 

GLM (and linear) models have long and rich history across many disciplines such as finding a target item among multiple options \citep{48839}. GLMs also have many interesting theoretical and statistical properties. But there are settings where GLMs do not accurately model rewards. For example, in the case where multiple arms are selected and the joint reward is the maximum of individual rewards, the joint rewards is not a linear combination of the individual arm rewards.  

\citep{kveton2014matroid} provides a UCB style algorithm for matroid bandits, where the agent selects a maximal independent set of rank $K$ to maximize sum of rewards of each arm. They assume rewards of each of the $K$ arms is also observed in each round. Such a setup, where the rewards of each of the $K$ arms is also available to agent, is referred to as a semi-bandit problem. \citep{gai2010learning} also considered the problem of semi-bandits for the problem of maximum weighted matching for cognitive radio applications. \citep{kveton2015tight} showed that the UCB algorithm  provides a tight regret bound for semi-bandit combinatorial bandit problem with linear reward function. The authors of \citep{pmlr-v28-chen13a} considered combinatorial semi-bandit problem with non-linear rewards  using a UCB style analysis. The authors of \citep{pmlr-v32-lind14} assumed combinatorial bandit problem with non-linear reward function and feedback, where the feedback  is a linear combination of rewards of the $K$ arms. Such feedback of linear function of rewards allows for the recovery of individual rewards.  In contrast to prior works, this paper does not consider the availability of individual arm rewards or a linear feedback.  With only aggregate, non-linear feedback, it might not be possible to obtain the exact values of the rewards of base arms.  

%analyze the non linear rewards setting however they considered the partial feedback to be available which is a linear combination of rewards of base arms which made it possible to recover the individual rewards of base arms.

%For combinatorial stochastic multi-armed bandits prior works involve works of \citep{gai2010learning} who provided an UCB style algorithm to solve the problem. However in their framework, rewards of all the K arms chosen were observed simultaneously. This setting is referred as semi-bandit problem. \citep{kveton2015tight} provided the analysis that the UCB algorithm indeed provides a tight regret bound. \citep{pmlr-v28-chen13a} did regret analysis for non linear rewards using a UCB style analysis however their problem under consideration was semi-bandit. \citep{pmlr-v32-lind14} analyze the non linear rewards setting however they considered the partial feedback to be available which is a linear combination of rewards of base arms which made it possible to recover the individual rewards of base arms.

%In this paper, we consider the problem of stochastic combinatorial bandits where 

 %where out of N arms agent chooses K arms simultaneously at a given time instant and a composite reward is observed instead of rewards of the chosen arms. 

\subsection{Organization}

The rest of the paper is organized as follows. %We provide a discussion of related work in Section \ref{related_work}.  
In Section \ref{formulation}, we provide the model under consideration and the assumptions that are taken for the analysis. Section \ref{algorithm_and_analysis} presents the proposed algorithm, CMAB-SM. The main result is provided in Section \ref{main_results}. Section \ref{sec:synth_eval} illustrates our results on a synthetic example where a continuous reward distribution and a non-linear reward function is chosen. 
% In section \ref{evaluation}, we present the results for the application to Influence Maximization and compare the proposed approach with some some standard domain-specific methods. 
Section \ref{conclusion} presents the conclusions with directions for future work.  %Detailed algorithms, proofs, and additional empirical results are included in the Supplementary Material.

\section{Problem Formulation and Assumptions} \label{formulation}

\subsection{Problem Setup}
We now describe the  stochastic combinatorial multi-armed bandit problem we consider.  There are  $N$ ``arms'' labeled as $i \in {[N] = } \{1, 2, \cdots, N\}$.  Each time an arm is chosen or ``played,'' there is a reward. Let $X_{i,t} \in [0,1]$ be a random variable denoting the reward of the $i^{th}$ arm, at time-step $t$ (also referred to as time $t$). We assume that $X_{i,t}$ are independent across time and arms, and for any arm the distribution is identical at all times. {Also, if some analysis is independent of time variable, we will drop the subscript $t$ and write only $X_i$.}  The rewards could be discrete valued, continuous valued, or mixed. 

At each time instant, the agent chooses an action ${\bf a} = (a_1, a_2,\cdots, a_K)$ which is a $K$-tuple of arms. Let   $\mathcal{A} = \{{\bf a}\in[N]^K \ \big| \  {\bf a}(i) \neq {\bf a}(j)\ \forall \ i,j:\ 1 \leq i< j\leq K\}$   be the set of all such actions which can be constructed using $N$ arms. Thus the cardinality of $\mathcal{A}$ is $\binom{N}{K}$. We denote the action played at time $t$ as ${\bf a}_t \in \mathcal{A}$.   For an action ${\bf a}=(a_1, a_2, \cdots, a_K)$  let ${\bf d}_{{\bf a}, t} = \big(X_{a_1,t}, X_{a_2,t}, \allowbreak \dots, X_{a_K,t}\big)\in [0,1]^K$ denote the column vector of arm rewards at time $t$ from action ${\bf a}$. %, thus ${\bf d}_{{\bf a}, t} \in [0,1]^K$. 
The reward $r_{\bf a}(t)$ of  action ${\bf a}$ at time $t$ is a bounded function $f:[0,1]^K\to[0,1]$ of the rewards from  the arms chosen in that action, %Let $r_{\bf a}(t)$ be the reward observed by playing action ${\bf a}$ at time $t$ be a bounded function, $f:[0,1]^K\to[0,1]$, of rewards obtained by individual arms building the action.
% Thus, 
\begin{align*}
r_{\bf a}(t) &= f({\bf d}_{{\bf a}, t}). \label{eq:R_at_t}
\end{align*}

Later in the text, we will skip index $t$ for brevity, where it is unambiguous. If at time $t$, action ${\bf a}_t$ is played, ${\bf d}_{{\bf a}_t, t}$ will be simplified to ${\bf d}_{{\bf a}_t}$. Also if we analyze behavior for action ${\bf a}$ such that its reward vector is independent and identically distributed across time, we will drop the subscript and use only ${\bf d}_{\bf a}$ for brevity.

 In many practical systems, the real reward is a non-linear function of noisy reward instead of a non-linear function of expected rewards plus noise. For example, consider a distributed system. A job may be forked into multiple parallel tasks. The completion time of the job depends is the maximum time taken to finish any of the task. Hence, the completion time of the job is a non-linear function of the completion time of the sub-tasks. Another example for non-linear function of individual rewards is item selection in cross selling. In cross-selling the total profit of the seller is sum of individual profits plus an additional advantage made from the combined transaction. The additional advantage can be modelled as a quadratic term of the individual items sold \citep{1250942}.

For a linear case, the two cases are equivalent as

\begin{align}
    r_t = f({\bf d}_{{\bf a}_t}) = \theta^T{\bf d}_{{\bf a}_t} = \theta^T(\mathbb{E}\left[{\bf d}_{{\bf a}_t}\right]+\eta_t) = \theta^T\mathbb{E}\left[{\bf d}_{{\bf a}_t}\right]+\theta^T\eta_t  = f\left(\mathbb{E}\left[{\bf d}_{{\bf a}_t}\right]\right) + \epsilon_t
\end{align}

For non-linear case, this formulation does not simply reduce to function of expected rewards plus noise. However, we can still write the reward as the expected bandit reward plus noise, or
\begin{align}
    r_t &= \mathbb{E}[f({\bf d}_{{\bf a}_t})] + \epsilon_t
\end{align}
Similarly to the work of \citep{auer2010ucb}, we aim to maximize the expected reward for the actions selected over time. Further, we intend to improve the finite time regret bounds for the same. We denote the expected reward of any action ${\bf a} \in \mathcal{A}$ as $\mu_{\bf a}$, or $\mu_{\bf a} = \mathbb{E}_{{\bf d}_{{\bf a}_t}|{\bf a}_t={\bf a}}[r_{\bf a}]$.

We assume that there is an action ${\bf a}^*$ for which the expected reward  $\mu_{ {\bf a}^*}$ is highest among all actions ${\bf a} \in \mathcal{A}$,
\begin{align*}
{\bf a}^* &= \arg\max_{{\bf a}\in\mathcal{A}}\mu_{\bf a}%\\
% \mu_{{\bf a}^*} &= \max_{{\bf a} \in \mathcal{A}}\mu_{\bf a}
\end{align*} 
We refer to this action ${\bf a}^*$ as ``optimal.''
% The Gap of an action ${\bf a} \in \mathcal{A}$ with respect to the optimal action is simply written as $\Delta_{\bf a}$, or, $\Delta_{\bf a} = \Delta_{{\bf a}^*, {\bf a}}$
%
Given an optimal action, regret for an action at time $t$ can be defined as follows. 

\begin{definition}[Regret]
    The regret of an action ${\bf a}_t$ at time $t$ is defined as the difference between the reward obtained by the optimal action and the reward obtained by ${\bf a}_t$, or 
    \begin{eqnarray}
        R(t) = r_{{\bf a}^*}(t) - r_{{\bf a}_t}(t)
    \end{eqnarray}
\end{definition}

The  objective is to minimize the expected regret {(also known as pseudo-regret)} accumulated during the entire time horizon, 
\begin{align}
W(T)= \mathbb{E}_{{\bf a}_1, r_{{\bf a}_1}(1), \cdots, {\bf a}_T, r_{{\bf a}_T}(T) }\left[\sum^{T}_{t=1}R(t)\right]
\end{align}

%We assume that the  function $f$ is  symmetric, so that the ordering within the argument does not matter. Further, we assume that the function is element-wise increasing, and Bi-Lipschitz continuous.
%A detailed formulation of the problem and the formal  assumptions are provided in Appendix  \ref{appendix_formulate}.

\subsection{Assumptions}\label{appendix_formulate}
We now discuss the technical assumptions which are required to prove the regret bounds for the \NAM\ algorithm. We note that many of the assumptions are not required for the algorithm to work, only to prove the guarantees. 

The function $f$ is assumed to be symmetric, so that the ordering within the tuple   does not matter. In other words, the rewards for an action is symmetric in its constituent arms. This assumption is true for certain problem settings where the ordering among the individual arms is not important, like the maximum of rewards, or the sum of rewards of the individual arms. This assumption is given as follows.  
\begin{assumption}[Symmetry] \label{symmentry_assumption}
     $f$ is a symmetric function of the rewards obtained by the constituent arms. More precisely, let $\Pi({\bf d})$, be an arbitrary permutation of ${\bf d}$. Then, the reward observed will be identical for both $\Pi({\bf d})$ and ${\bf d}$, or 
\begin{eqnarray}
    f\left({\bf d}\right) &=& f\left(\Pi\left({\bf d}\right)\right)
\end{eqnarray}
\end{assumption}

In the rest of the text, we denote $\Pi(\cdot)$ as a permutation, where $\Pi({\bf y})$ is one of the possible permutations {of the vector {\bf y}}. We now define gap $\Delta$ between two actions as follows.
\begin{definition}[Gap]
    The Gap $\Delta_{{\bf a}_1, {\bf a}_2}$ between  any two actions ${\bf a}_1, {\bf a}_2 \in \mathcal{A}$ is defined as the difference of expected rewards of the actions, or 
    \begin{eqnarray}
        \Delta_{{\bf a}_1, {\bf a}_2} = \mu_{{\bf a}_1} - \mu_{{\bf a}_2} = \mathbb{E}[f({\bf d}_{{\bf a}_1})] - \mathbb{E}[f({\bf d}_{{\bf a}_2})]
    \end{eqnarray}
\end{definition}

We assume that there is an optimal action ${\bf a}^*$ for which the expected reward is highest among all actions ${\bf a} \in \mathcal{A}$. We denote the reward of the optimal action by $\mu_{{\bf a}^*}$. The Gap of an action ${\bf a} \in \mathcal{A}$ with respect to the optimal action is simply written as $\Delta_{\bf a}$, or, $\Delta_{\bf a} = \Delta_{{\bf a}^*, {\bf a}}$

\begin{remark} \label{ert1}
Using linearity of expectation it can be seen that, $\mathbb{E}\left[R(t)\right] = \Delta_{{\bf a}_t}$
\end{remark}

From remark \ref{ert1}, the expected regret accumulated during the entire time horizon can be written as,
\begin{eqnarray}
  W(T) =  \mathbb{E}\left[\sum^{T}_{t=1}R(t)\right] =  \sum^{T}_{t=1}\Delta_{{\bf a}_t}
\end{eqnarray}
We define the maximum regret of all possible actions as $R_{max} = \max_{{\bf a}\in \mathcal{A}}{\Delta_{{\bf a}}}$.

We now use the concept of stochastic dominance to order two arms. Assume that there exists a first-order stochastic dominance between any two arms which is defined as follows. 
\begin{definition}[First-Order Stochastic Dominance]
     A random variable $X$ has first-order stochastic dominance (FSD) over another random variable $Y$ (or $X \succ Y$), if   for any outcome $x$, $X$ gives at least as high a probability of receiving at least $x$ as does $Y$, and for some $x$, $X$ gives a higher probability of receiving at least $x$, 
     \begin{align}
        X \succ Y \Leftrightarrow P\left(X \ge x\right) \geq P\left(Y \ge x\right) \forall x \in \mathbb{R}, \nonumber \\
        P\left(X \ge x\right) > P\left(Y \ge x\right) \text{ for some } x \in \mathbb{R}. \label{eq:majorization_def}
    \end{align}
\end{definition}

\begin{assumption}[FSD between arms] \label{majorized_arms}
    There exists a %\hl{ strict}
    dominance ordering between all the arms, which is defined using FSD. In other words, for each pair of arms $i$ and $j$, either $X_{i} \succ X_{j}$ or $X_{j} \succ X_{j}$.
\end{assumption}

The FSD implies second order stochastic dominance, which indicates that the mean of the dominating random variable is at least as much as the mean of the dominated random variable \citep{hadar1969rules,bawa1975optimal}. This is summarized in the following lemma. 
\begin{lemma}[\citep{hadar1969rules,bawa1975optimal}] \label{expectation_from_majorization}
    If a random variable $X$ has FSD over another random variable $Y$ 
    (or, $X \succ Y$ ), then the expected value of $X$ is at least the expected value of $Y$, or 
    \begin{eqnarray}
        \mathbb{E}\left[X\right] > \mathbb{E}\left[Y\right]
    \end{eqnarray}
\end{lemma}

\begin{remark}
From Lemma \ref{expectation_from_majorization}, we note that if arm $i$ dominates arm $j$, then the mean reward for arm $i$ is strictly greater than that of arm $j$. Thus, under Assumption~\ref{majorized_arms}, %Since we assume that given any two arms, one of them dominates the other, the mean rewards of each arm is different, or 
    \begin{align}
        \mathbb{E}[X_{i}] \neq \mathbb{E}[X_{j}], \forall i, j \in \{1,\cdots,N\} \label{eq:stric_order}
    \end{align}
\end{remark}
Such strict dominance exists for Bernoulli and exponential reward distribution functions. 
% \begin{assumption}
%     There exists a strict ordering between mean rewards of any two arms, or mean rewards of any two arms are not same. 
%     \begin{eqnarray}
%         \mathbb{E}[X_i] \neq \mathbb{E}[X_j], \forall i, j \in \{1,\cdots,N\} 
%     \end{eqnarray}
% \end{assumption}

Since we can construct a new action by changing the arms of an existing action, we define the  replacement function $h(\cdot)$ which changes an element $i$ of a given reward vector ${\bf d}$ (where each entry in the reward vector is a random variable with the distribution of the corresponding arms).
\begin{definition}[Replacement function]
    The replacement function $h(\cdot)$ is defined as a function on $\mathbb{R}^{K+2}$, which replaces the $i^{th}$ element of vector ${\bf d}$ with $x$, or 
    \begin{eqnarray*}
     h({\bf d}, i, x)  = \left({\bf d}(1),\  \dots,\ {\bf d}(i-1),\ x,\ {\bf d}(i+1),\ \dots,\ {\bf d}(K) \right).
%     h({\bf d}, i, x)  =  \left\{ 
%         \begin{array}{lr}
%         {\bf d}(j), &\forall j\neq i \\
%          x, &j = i 
%          \end{array} 
%   \right.
    \end{eqnarray*}
\end{definition}

For a random variable $X$, $h({\bf d}, i, X)$ is also a random variable. % where $X$ is a random variable, \hl{and takes the value corresponding to $h({\bf d}, i, x)$.} 
We also assume that the expected reward of an action is strictly increasing function of the rewards obtained by the individual arms.

\begin{assumption}[Strictly Increasing] \label{monotone_assumption}
    $f(\cdot)$ is element-wise, strictly increasing function of the individual rewards obtained by the constituent arms. More precisely, 
    \begin{equation}
        f\left(h\left({\bf d}, i, x\right)\right)\  > f\left(h\left({\bf d}, i, y\right)\right) % \nonumber\\
        \  \forall x > y \text{ ; }  x, y \in [0,1]\ \forall{\bf d}\in\mathbb{R}^K 
    \end{equation}
\end{assumption}
Even though we assume  strictly increasing function,  the analysis also holds for strictly decreasing function $f$ by transforming the reward function as $f_{n}({\bf d}) = 1-f({\bf d})$. In order to compare the distance between individual reward vectors from two different actions, we need to find the difference in the two individual reward vectors up to a permutation, since the reward function is permutation invariant. With this distance metric in mind, we assume that $f(\cdot)$ is Lipschitz continuous (in an expected sense), which is formally described in the following. 
\begin{assumption}[Continuity of expected rewards] \label{continuous_assumption}
    The expected value of $f(\cdot)$ is Lipschitz continuous with respect to the expected value of the rewards obtained by the individual arms, meaning %. More precisely, we have
    \begin{align}
        \big|\ \mathbb{E}\left[f({\bf d}_1)\right] - \mathbb{E}\left[f({\bf d}_2)\right]\big| \ %\nonumber\\ 
        \leq \ U_1 \min_{\Pi}{\big|\big|}\mathbb{E}[{\bf d}_1] - \Pi(\mathbb{E}[{\bf d}_2]){\big|\big|_2} \label{eq:cont_lower_defn_n}
    \end{align}
    for any given  random vectors ${\bf d}_1$ and ${\bf d}_2$ and for some $U_1<\infty$,  where $\Pi$ is minimized over all permutations of $\{1, \cdots, K\}$. 
\end{assumption}   
 
%    \begin{eqnarray}
 %       \big|\mathbb{E}_{a_1}\left[f\left({\bf d}\right)\right] - \mathbb{E}_{a_2}\left[f\left({\bf d}\right)\right]\big| &\leq& U\big|\left(\mathbb{E}_{a_1}\left[{\bf d}\right] - \mathbb{E}_{a_2}\left[{\bf d}\right]\right)\big| %\label{eq:cont_upper_defn}
 %   \end{eqnarray}
 %   where {\bf d} is a random vector in K dimensions, and U is a positive constant.

\begin{corollary} \label{single_variable_continuity}
	Assumption \ref{continuous_assumption} also implies
	\begin{align}
	    \big|\mathbb{E}\left[f(h\left({\bf d}, i, X\right))\right] - \mathbb{E}\left[f(h\left({\bf d}, i, Y\right))\right]\big| \ %\nonumber \\
	    \leq\ U_1 \big| \mathbb{E}[X] - \mathbb{E}[Y]\big| \label{eq:cont_lower_defn}
	\end{align} 

	for any given random vector ${\bf d}$ and any $i \in \{1, \cdots, K\}$. 
\end{corollary}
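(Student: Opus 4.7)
The plan is to obtain Corollary \ref{single_variable_continuity} as a direct specialization of Assumption \ref{continuous_assumption}. Namely, I would instantiate the general inequality \eqref{eq:cont_lower_defn_n} with the two random vectors ${\bf d}_1 = h({\bf d}, i, X)$ and ${\bf d}_2 = h({\bf d}, i, Y)$. These two vectors agree in every coordinate except the $i$-th one, where they take the (random) values $X$ and $Y$ respectively. Consequently the coordinate-wise expectations $\mathbb{E}[{\bf d}_1]$ and $\mathbb{E}[{\bf d}_2]$ also agree in every coordinate except position $i$, where they differ by $\mathbb{E}[X] - \mathbb{E}[Y]$.

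The next step is to bound the minimum over permutations appearing in \eqref{eq:cont_lower_defn_n}. Since the minimum is no larger than the value attained by any particular permutation, I would simply evaluate it at the identity permutation $\Pi = \mathrm{id}$. For that choice the difference vector $\mathbb{E}[{\bf d}_1] - \mathrm{id}(\mathbb{E}[{\bf d}_2])$ has all coordinates equal to zero except the $i$-th one, whose value is $\mathbb{E}[X] - \mathbb{E}[Y]$. Hence its Euclidean norm equals $|\mathbb{E}[X] - \mathbb{E}[Y]|$, and the minimum over all permutations is at most this quantity.

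Combining these two observations with Assumption \ref{continuous_assumption} yields
\begin{equation*}
\big|\mathbb{E}[f(h({\bf d}, i, X))] - \mathbb{E}[f(h({\bf d}, i, Y))]\big| \;\leq\; U_1 \min_{\Pi}\big|\mathbb{E}[{\bf d}_1] - \Pi(\mathbb{E}[{\bf d}_2])\big|_2 \;\leq\; U_1\,|\mathbb{E}[X] - \mathbb{E}[Y]|,
\end{equation*}
which is exactly \eqref{eq:cont_lower_defn}. There is essentially no obstacle here; the only mildly subtle point is recognizing that the symmetry assumption on $f$ makes the permutation-minimized distance the natural metric, and that for vectors differing in a single coordinate the identity permutation already achieves a clean bound, so no combinatorial argument over permutations is needed.
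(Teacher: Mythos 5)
Your proposal is correct and is essentially the argument the paper intends: the paper states this corollary without proof as an immediate specialization of Assumption \ref{continuous_assumption}, and your instantiation with ${\bf d}_1 = h({\bf d},i,X)$, ${\bf d}_2 = h({\bf d},i,Y)$ together with the identity-permutation bound on the minimum is exactly the routine verification being omitted.
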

We further assume a lower bound in \eqref{eq:cont_lower_defn} as formally stated in the following assumption. 
\begin{assumption}[Continuity of individual expected rewards] \label{inverse_continuity}
    We also assume that the continuity given in \eqref{eq:cont_lower_defn} also has a similar lower bound. More precisely, there is a $U_2<\infty$ such that 
    %For a fixed $i^{th}$ arm in an action, expected value of rewards is Lipschitz continuous with respect to the expected value of reward of the action . More precisely, we have
    \begin{align}
    \big|\mathbb{E}[X] - \mathbb{E}[Y]\big|  %\nonumber \\
     \leq  U_2\big|\left(\mathbb{E}\left[f(h\left({\bf d}, i, X\right))\right] - \mathbb{E}\left[f(h\left({\bf d}, i, Y\right))\right]\right)\big|
    \end{align}
    for any given  random vectors ${\bf d}$ and any $i \in \{1, \cdots, K\}$. 
\end{assumption}   
Assumption \ref{inverse_continuity} holds for many well behaved functions in practical scenarios. For example, $f(\cdot) = \max(\cdot)$ with Bernoulli rewards for individual arms\footnote{We note that maximum function, in general, does not satisfy Assumption \ref{inverse_continuity}. However, if an arm exists such that its rewards are always higher compared to other arms, the agent can place any other arm among the $K-1$ arms, without incurring any regret.}, $f(\cdot)$ =  sum of individual rewards, or $f(\cdot)$ = concave utility function of sum of individual rewards.

\begin{corollary} \label{double_sided_continuity}
	Combining Corollary \ref{single_variable_continuity} and Assumption \ref{inverse_continuity} and defining $U = \max(U_1, U_2)$, we have
	\begin{align}
	    \frac{1}{U}\big|\mathbb{E}[X] - \mathbb{E}[Y]\big| %\nonumber \\
	    &\leq \big|\left(\mathbb{E}\left[f(h\left({\bf d}, i, X\right))\right] - \mathbb{E}\left[f(h\left({\bf d}, i, Y\right))\right]\right)\big|\\
	    &\leq U \big|\mathbb{E}[X] - \mathbb{E}[Y]\big|, \label{eq:cont_defn}
	\end{align}
	for any given random vector ${\bf d}$ and any $i \in \{1, \cdots, K\}$. 
\end{corollary}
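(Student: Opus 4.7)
The final statement, Corollary \ref{double_sided_continuity}, is purely a packaging result: it combines Corollary \ref{single_variable_continuity} (which gives the upper bound) with Assumption \ref{inverse_continuity} (which gives the lower bound), and rewrites both using the single constant $U = \max(U_1, U_2)$. So the plan is to take each of the two one-sided inequalities in turn and loosen its constant to $U$, then concatenate.

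More concretely, I would first write down Corollary \ref{single_variable_continuity} verbatim to obtain
\begin{eqnarray*}
\big|\mathbb{E}[f(h({\bf d}, i, X))] - \mathbb{E}[f(h({\bf d}, i, Y))]\big| \leq U_1\, \big|\mathbb{E}[X] - \mathbb{E}[Y]\big|,
\end{eqnarray*}
and observe that since $U = \max(U_1, U_2) \geq U_1$, the right-hand side is bounded above by $U\,|\mathbb{E}[X] - \mathbb{E}[Y]|$. This yields the upper inequality in \eqref{eq:cont_defn}.

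Second, I would invoke Assumption \ref{inverse_continuity} in the form
\begin{eqnarray*}
\big|\mathbb{E}[X] - \mathbb{E}[Y]\big| \leq U_2\, \big|\mathbb{E}[f(h({\bf d}, i, X))] - \mathbb{E}[f(h({\bf d}, i, Y))]\big|,
\end{eqnarray*}
divide both sides by $U_2 > 0$, and use $U \geq U_2$ (so $1/U \leq 1/U_2$) to conclude
\begin{eqnarray*}
\frac{1}{U}\,\big|\mathbb{E}[X] - \mathbb{E}[Y]\big| \leq \frac{1}{U_2}\,\big|\mathbb{E}[X] - \mathbb{E}[Y]\big| \leq \big|\mathbb{E}[f(h({\bf d}, i, X))] - \mathbb{E}[f(h({\bf d}, i, Y))]\big|,
\end{eqnarray*}
which is the lower inequality in \eqref{eq:cont_defn}. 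Chaining the two then gives the advertised two-sided bound.

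There is no real obstacle here; the corollary is a bookkeeping step whose sole purpose is to present a single Lipschitz-like constant $U$ that will be convenient downstream in the regret analysis (rather than carrying $U_1$ and $U_2$ separately through every calculation). The only subtlety worth being careful about is that Assumption \ref{inverse_continuity} as stated still requires $U_2 < \infty$ and is asserted for arbitrary ${\bf d}$ and any coordinate $i \in \{1,\ldots,K\}$, matching precisely the quantifiers used in Corollary \ref{single_variable_continuity}, so the two inequalities can be combined with the same ${\bf d}$, $i$, $X$, $Y$ without further assumption.
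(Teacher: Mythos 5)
Your proposal is correct and is exactly the argument the paper intends (the paper states this corollary without an explicit proof precisely because it is the immediate combination you describe: weaken $U_1$ to $U=\max(U_1,U_2)$ for the upper bound, and weaken $1/U_2$ to $1/U$ for the lower bound). No gaps.
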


We note that linear bandits become a special case of the assumptions we considered.
% Since expected valued of function f is strictly increasing and Lipschitz continuous, the change in expected value of reward function by changing only arm is bounded from below as
% \begin{eqnarray}
%     \frac{1}{U}\big|\left(\mathbb{E}_{x}\left[h\left({\bf d}, i, x\right)\right] - \mathbb{E}_{y}\left[h\left({\bf d}, i, y\right)\right]\right)\big| &\leq& \big|\mathbb{E}_{a_1}\left[f\left(h\left({\bf d}, i, x\right)\right)\right] - \mathbb{E}_{a_2}\left[f\left(h\left({\bf d}, i, y\right)\right)\right]\big| \label{eq:cont_lower_defn}
% \end{eqnarray}

% \mathbb{E}[R_{A,t}] &=& \sum_{i \in A_t} f(p_i) \label{eq:ERAt}\\
% var(R_{A,t}) &=& \sum_{i \in A_t} p_i(1-p_i) \label{eq:VRAt}

\section{Proposed Algorithm} \label{algorithm_and_analysis}
The proposed algorithm, called \NAM, is an explore then exploit strategy which aims to minimize the expected regret, be computationally efficient, and have a storage complexity which is linear with $N$ and independent of $K$. \NAM, described in Algorithm \ref{alg:dama}, utilizes the fact that for CMAB problem, choosing $K$ arms from a set of $K+1$ arms has $K+1$ actions thus making the problem solvable using the standard MAB approach. In other words, if $N = K+1$, then the complexity is $\binom{K+1}{K} = K+1$, and only $K+1$ actions  needs to be optimized. % is provided in Appendix \ref{correctness}.

 We construct a group $G$ which is a vector of length $K+1$ consisting of arm indices, or $G\in[N]^{K+1}$. Then, we can construct $K+1$ actions, each action using all but one entries in the group $G$. 

Let ${\bf a}^G_{-i}$ be an action in group $G$ with $G(i)^{th}$ arm left out, { where $G(i), i\in\{1, \cdots, K+1\}$, is the $i^{th}$ entry of the group}. 

% {\st{If} \hl{With } $X_{G(i)}$ \st{is a random variable denote} \hl{denoting} the reward of arm $G(i)$}, then the individual reward vector \hl{${\bf d}_{{\bf a}^G_{-i}}$} with the action ${\bf a}^G_{-i}$ is

With $X_{G(i)}$ denoting the reward of arm $G(i)$, the individual reward vector ${\bf d}_{{\bf a}^G_{-i}}$ with the action ${\bf a}^G_{-i}$ is
\begin{align}
{\bf d}_{{\bf a}^G_{-i}} = (X_{G(1)}, \dots, X_{G(i-1)}, X_{G(i+1)}, \dots,X_{G(K+1)}).
%    {\bf d}_{{\bf a}^G_{-i}}(k) = \bigg\{\begin{array}{cr}X_{G_{(k)}}, &k < i\\
	%                                                 X_{G_{(k+1)}}, &i \leq k < K+1 \end{array} 
\end{align}

The (random) reward obtained at any time with this action is $r_{{\bf a}^G_{-i}} = f\left({\bf d}_{{\bf a}^G_{-i}}\right)$, with a mean reward of $\mu_{{\bf a}^G_{-i}} = \mathbb{E}\left[f\left({\bf d}_{{\bf a}^G_{-i}}\right)\right]$. The next result shows that an ordering on $\binom{K+1}{K}$ actions made using $K+1$ arms gives an ordering on $K+1$ arms under the considered assumptions.

\begin{lemma} \label{ordering_from_expectation}
	An ordering on $\binom{K+1}{K}$ actions, in group $G$, made using  $K+1$ arms gives an ordering on $K+1$ arms. In other words, if an ordering exists between actions ${\bf a}^G_{-i}$ and ${\bf a}^G_{-j}$, then an ordering exists between arms $G(i)$ and $G(j)$. More precisely, 
	\begin{eqnarray}
	\mu_{{\bf a}^G_{-i}} > \mu_{{\bf a}^G_{-j}}\Rightarrow \mathbb{E}\left[X_{G(i)}\right] < \mathbb{E}\left[X_{G(j)}\right] \label{eq:sort_condition}
	\end{eqnarray}
\end{lemma} 
\begin{proof}
    (Outline): If we have two actions made from group $G$ by excluding arm $G(i)$ and arm $G(j)$ respectively, then the arm with higher reward will increase the joint reward $f$ as $f$ is an increasing function. The detailed proof is provided in Appendix \ref{correctness}.
\end{proof}

\NAM\ divides all $N$ arms into groups of $K+1$ arms arbitrarily. Each group now contains only $K+1$ actions. If the last group contains less than $K+1$ arms (if $N\mod(K+1) > 0$), arms from other groups are added (repeated) to have $K+1$ arms in the last group. \NAM\ then picks the first group of $K+1$ arms and orders the arms in the group using SORT subroutine. Using this subroutine, the $K+1$ arms in the group are ordered with respect to expected individual rewards. We also consider $G^*$ as the best $K$ arms seen so far, which are the top $K$ arms in $G_1$. It later proceeds in $k \in \left\{2, \cdots, \frac{N}{K+1}\right\}$ rounds. In $k^{th}$ round it performs SORT on $G_k$ and merge $G_k$ and $G^*$ using MERGE subroutine to obtain a new $G^*$. The SORT subroutine orders the $K+1$ arms in $G_k$. The MERGE subroutine takes the best $K$ arms before this round, $G^*$; and the best $K$ arms from the SORT subroutine on $G_k$ and merges them to find the best $K$ arms seen so far and saves them as $G^*$. This is then inputted to the next value of $k$ to merge with other groups. 

At the end of $\left(\frac{N}{K+1}\right)^{th}$ round, we would have played all arms in each group and merged them, thus resulting in an optimal action which maximizes the expected reward for the remaining time slots.  Apart from the sort and merge scheme, we also use a hyperparameter $\lambda$ in our algorithm. $\lambda$ denotes the minimum gap the agent can resolve between any two arms. If any the gap between arm $i$ and arm $j$, the algorithm cannot determing which arm is better with high probability and selects the arm with higher sample mean as the better arm. This behaviour is common in both SORT and MERGE subroutine. We now describe the algorithms used in \NAM\ which are SORT and MERGE subroutines in detail. 

%, with the algorithms used in the algorithm, analysis for the algorithm, and the lemmas used for the analysis. \ref{DMA} describes the overall flow of the algorithm, while \ref{DA} and \ref{MA} describe the algorithms SORT and MERGE respectively which are crucial in the construction of the algorithm. \ref{ADM} provides the analysis, lemmas used in the analysis, and construction of proofs of the main results.

\subsection{SORT} \label{DA}
The SORT subroutine is given in Algorithm \ref{alg:da}. In this subroutine, we play $K+1$ actions formed from $K+1$ arms in a group $G$, each action corresponding to one left out arm. The subroutine proceeds in rounds similar to UCB algorithm by \citep{auer2010ucb}. By the end of round $r$, each action is played $n_r$ times so that the expected reward of each action can be estimated within $\pm \Delta_r$. At the end of each round, the estimates are used to sort the arms, where the arms $G(i), G(j)$ are considered sorted when the upper bound on reward estimate of action ${\bf a}^G_{-i}$ is less than lower bound of action ${\bf a}^G_{-j}$. When an arm is placed at its true sorted location in the group, its corresponding action is not sampled again. The procedure ends when $\Delta_r<\lambda$ or when all $K+1$ arms are sorted. At the end of the algorithm, only top $K$ arms are provided as output.

%In lines 22-23, we place the arm at its true location in the group and do not consider the corresponding action for sampling again. In lines, 26-28, we update the parameters for the next round again.
%and in lines 12-18, each action is played such that the total number of plays for that action are $n_r$ to bound true expected reward of action within $\pm \Delta_r$ of estimated reward by the end of round $r$. 
%Two arms $G(i), G(j)$ are considered sorted when the upper bound on reward estimate of action $a^G_{-i}$ is less than lower bound of action $a^G_{-j}$. The sorting is attempted at the end of each round in lines 19-25. In lines 22-23, we place the arm at its true location in the group and do not consider the corresponding action for sampling again. In lines, 26-28, we update the parameters for the next round again.
\subsection{MERGE} \label{MA}
The MERGE subroutine is given in Algorithm \ref{alg:ma}. The MERGE subroutine aims to merge two groups, each with $K$ sorted arms to give sorted best $K$ arms. Since we only want the best $K$ arms from the merged $2K$ arms to be sorted, it can be done with only $K+1$ arm comparisons. 

%In MERGE algorithm, we utilize the fact that SORT resulted in $\frac{N}{K+1}$ groups of $K$ sorted arms. We now merge two groups such that the resulting group is also sorted. Since the final result requires only K arms, we merge two groups to find K sorted arms from the merger and thus only $K+1$ arms are compared instead of $2K$ arms from $2$ groups.

Starting with two $K$-sized sorted groups $G_1$, and $G_2$ and an optimal group which is empty at the start of the subroutine, we identify the best $K$ out of $2\times K$ arms by figuring the best arm one by one. Starting with both $i$ and $j$ as 1, we construct a new action by replacing the $i^{th}$ arm of group $G_1$ by the $j^{th}$ arm of group $G_2$. Note that if after replacement, the reward is bigger, it implies that the added $j^{th}$ arm of $G_2$ is the next arm in the sorted final list else the $i^{th}$ arm of $G_1$ is the next arm in the sorted final list. In order to differentiate between the two actions, the procedure similar to the SORT subroutine is used. Based on whether the $i^{th}$ arm of $G_1$  or $j^{th}$ arm of $G_2$ made in the optimal set, $i$ or $j$ is incremented and the procedure is repeated till the $K$ best arms in the merger of the two groups are obtained. 

\subsection{Complexity of \NAM}\label{sec:complexity}
We now analyze the complexity of \NAM\ for both storage and computation at each time step. Detailed subroutines are provided in Appendix \ref{detail_sm_algos}, with the key pseudo-codes in Algorithm \ref{alg:dama}-\ref{alg:ma}.

The algorithm while running SORT, or MERGE subroutine stores the reward of each action in the group, and sorts all the actions. The total storage at any step is no more than $O(K)$. Even when the groups are being merged,  $O(K)$ temporary storage is used for the merged rewards. This merged group is then used to decide the action in the exploiting phase.  Thus, the maximum storage at any time is $O(K)$ for the subroutines and $O(N)$ for \NAM. To evaluate the computational complexity at each time-step, we consider the three cases of what the algorithm may be doing at a time step. 

\begin{algorithm} [!htb]
	\caption{\textsc{Cmab-sm}($T, N, K$)}\label{alg:dama}
	\small
	\begin{algorithmic}[1]
% 	\Procedure{Cmab-sm}{$T, N, K$}
		\State Initialize separation threshold for mean estimates
		\begin{align}
		    \lambda=\left(\frac{256N\log{2NT}}{T}\right)^\frac{1}{3}    \label{eq:define_lambda}
		\end{align}
		\State Partition $N$ arms arbitrarily into $\frac{N}{K+1}$ equal sized groups. $G_k:\ 1\leq k\leq \frac{N}{K+1}$

        \State $G^*$ = \Call{Sort}{$G_1, \lambda, T, N, K$} \Comment{Sort $1^{st}$ group. For a single group, returned set is optimal}
		\For{$k=2:\frac{N}{K+1}$}\Comment{Sort and Merge subsequent groups}
		    \State $G_k$ = \Call{Sort}{$G_k, \lambda, T, N, K$}
            \State $G^*$ = \Call{Merge}{$G^*, G_k, \lambda, T, N, K$}
		\EndFor
		\State \textbf{return} $G^*$
% 	\EndProcedure
	\end{algorithmic}
\end{algorithm}
%\vspace{-3mm}
\begin{algorithm}[!htb]
    \caption{\textsc{Sort}($G, \lambda$, $T$, $N$, $K$)}\label{alg:da}
	\small
	\begin{algorithmic}[1]
        % \Procedure{Sort}{$G, \lambda$, $T$, $N$, $K$} \Comment{Group of $K+1$ arms, separation threshold, other hyper-parameters}
		\State Initialize $\hat{\mu}_i = 0$ for $i \in \{1, 2, \cdots, K+1\}$ \Comment{$\hat{\mu}_i$ is the sample mean of ${\bf a}^G_{-i}$}
		\State $S\leftarrow\phi$ \Comment{list of sorted actions}
		\State Initialize $r\leftarrow1$; $\Delta_r\leftarrow\frac{1}{2^r}$; $n_r\leftarrow \frac{2\log{(TNK)}}{\Delta_r^2}$\Comment{Initial confidence bounds}
        \While{$\Delta_r > \lambda$ \textbf{and} $|S|<K+1$}
            \ForAll{$i \in \{1, 2, \cdots, K+1\}$}
                \If {$|\hat{\mu}_i - \hat{\mu}_j|\leq 2\Delta_r$ for some $j \in \{1, 2, \cdots, K+1\}\setminus\{i\}$ }
                    \State Sample action ${\bf a}^G_{-i}$ for total of $n_r$ times and update $\hat{\mu}_i$
                \ElsIf {$i \notin S$}
                    \State $S = S\cup\{i\}$
                \EndIf
            \EndFor
            \State Update $r=r+1$; $\Delta_r=\frac{\Delta_{r}}{2}$ ; $n_r =\frac{2\log{(TNK)}}{\Delta_r^2}$
        \EndWhile
        \State Make $S$ of length $K+1$ (if not already) by adding remaining indices from $G\setminus S$
        \State Order $S$ inverse sorting on $\{\hat{\mu}\}_{1:K+1}$
        \State \textbf{return} $S[1:K]$
        % \EndProcedure
	\end{algorithmic}
\end{algorithm}
\begin{algorithm}[!htb]
	\caption{\textsc{Merge}($G_1, G_2, \lambda, T, N, K$)}\label{alg:ma}
	\small
	\begin{algorithmic}[1]
	   % \Procedure{Merge}{$G_1, G_2, \lambda, T, N, K$}\Comment{Groups of $K$ arms each, separation threshold, and other hyperparameters}
		\State Initialize $G^*[1:K]\leftarrow \bf{0}$ \Comment{Array of size K initialized with $0$ to store the optimal arms}
		\State ${\bf a}_{G_1} = G_1$ \Comment{Construct action from arms in Group 1}
        \State Initialize $\hat{\mu}_{{\bf a}_{G_1}} \leftarrow 0$;  $i\leftarrow 1;\ j\leftarrow 1$
        \State Initialize $r_1\leftarrow 1$; $\Delta_{r_1}\leftarrow\frac{1}{2}$ ; $n_{r_1} =\frac{2\log{(TNK)}}{\Delta_{r_1}^2}$
	    \For{$k=1:K$}
            \State ${\bf a}_{i,j} = \left(G_1 \setminus \{G_1(i)\}\right) \cup \{G_2(j)\}$ \Comment{Replace $i^{th}$ arm of $G_1$ by $j^{th}$ arm of $G_2$}
            \State Initialize $r_2\leftarrow 1$; $\Delta_{r_2}\leftarrow\frac{1}{2}$ ; $n_{r_2} =\frac{2\log{(TNK)}}{\Delta_{r_2}^2}$; $\hat{\mu}_{{\bf a}_{i,j}} \leftarrow 0$ \Comment{For every new action constructed}
		    \While{$\left(\Delta_{r_2} > \lambda\right)$ \textbf{and} $\left(G^*[k] == 0\right)$}
                \State While total samples of action ${\bf a}_{G_1}$ < $n_{r_1}$, Sample action ${\bf a}_{G_1}$ to update $\hat{\mu}_{{\bf a}_{G_1}}$
                \State While total samples of action ${\bf a}_{i,j}$ < $n_{r_2}$, Sample action ${\bf a}_{i,j}$ to update $\hat{\mu}_{{\bf a}_{i,j}}$
                \If{confidence bounds of ${\bf a}_{G_1}$ and ${\bf a}_{i,j}$ do not overlap}
                    \State Update $G^*[k], i, j$ 
                \Else
                    \State Update $r_2=r_2+1$; $\Delta_{r_2}=\frac{\Delta_{r_2}}{2}$ ; $n_{r_2} =\frac{2\log{(TNK)}}{\Delta_{r_2}^2}$
                    \State If $r_2 > r_1$, update $r_1=r_1+1$; $\Delta_{r_1}=\frac{\Delta_{r_1}}{2}$ ; $n_{r_1} =\frac{2\log{(TNK)}}{\Delta_{r_1}^2}$
                \EndIf
            \EndWhile
            \State Update $G^*[k], i, j$ if not updated
		\EndFor
        \State \textbf{return} $G^*$\Comment{Merged set of $G_1$, and $G_2$}
        % \EndProcedure
	\end{algorithmic}
\end{algorithm}

1) In SORT subroutine, at the end of each iteration of the while loop, arms in the group are sorted requiring $O(K\log K)$ computations. It then loops over all the arms to place them in the correct order which takes $O(K)$ steps. Thus, the computational complexity in the worst case time-step in sort is $O(K\log K)$. 2) MERGE subroutine at any time step either run action and saves the result, or perform comparisons which are all $O(1)$ at each time. 3) After the MERGE is complete, the best action is available which is then exploited thus making the complexity in the exploit phase as $O(1)$. Thus, the overall complexity at any time is $O(K\log K)$ which happens due to sorting the actions for the removal of sub-optimal arms after every round in SORT subroutine. 

In each call to SORT, the actions are sorted with respect to their mean observed rewards. From Lemma \ref{ordering_from_expectation}, an ordering is also obtained for the corresponding arms. In MERGE subroutine, a new action is constructed from an old action by replacing exactly one arm. The ordering between the old action and the new action gives the ordering between the replaced arm and the new arm. Note that the inequality conditions work in different directions for SORT and MERGE algorithms.

\subsection{Other design options}
We note that an algorithm can be constructed by keeping the first $K-1$ arms fixed. The algorithm will now select the best arm from remaining $N-K+1$ arms. This arm will now  always be kept in first $K-1$ arms. The process is repeated until all $K$ places are filled. However, this algorithm has two issues, 1) \textbf{Higher complexity}: The algorithm will need to sort $N-K+1$ arms into place which increases the sorting complexity from $\tilde{O}(K)$ to $\tilde{O}(N)$. \textbf{More exploration steps}. Since, this algorithm will now perform exploration after placing every an arm among the first $K$ Group. This increases the time required for exploration by a factor of $K$, which would increase the order of regret bound. This makes the \NAM\ a better choice compared to a na\"ive implementation of UCB by fixing $K-1$ arms.
\section{Regret Analysis of the proposed algorithm} \label{main_results}
In this section, we will present the  main result of the paper, related to the regret analysis of the proposed algorithm. 
\subsection{Bounds on exploitation regret and exploration time}\label{exp_regret_exp_time}

In this subsection, we will bound the regret in the exploitation phase, which indicates the loss in reward due to choosing an incorrect action at the end of the MERGE algorithm. We will also bound the time spent in the SORT and the MERGE subroutines, which is the exploration phase. 

We first find the time taken in the sort and merge subroutines. 
\begin{lemma}[Sort time requirement] \label{sort_requirements}
	SORT subroutine 2 gives correct ordering on $K+1$ actions in a group $G$ with probability $1-\frac{K}{2N^2T^2}$, {up to precision $\lambda$ defined in equation (\ref{eq:define_lambda})}, where the actions are chosen for at most % total play requirement at most
	\begin{equation*}
	\left(\sum_{i=1}^{K+1} \frac{64 U^2 \log 2NT}{\max(\delta_{G(i)}^2, \lambda^2)}\right)
	\end{equation*}
	time steps, where $\delta_{G(i)} $ is given in \eqref{eq:delta_i_def}.
	
	\begin{figure*}[!t]
		\normalsize
		% Restore the current equation number.
		% IEEE uses as a separator
		\begin{eqnarray}
		\delta_{G(i)} = \left\{\begin{array}{cr}\mathbb{E}\left[X_{G(i)}\right] - \mathbb{E}\left[X_{G(i+1)}\right], &i = 1 \\
		\min\left\{\mathbb{E}\left[X_{G(i-1)}\right] - \mathbb{E}\left[X_{G(i)}\right], \mathbb{E}\left[X_{G(i)}\right] - \mathbb{E}\left[X_{G(i+1)}\right]\right\}, &i \in {2,...,K}\\
		\mathbb{E}\left[X_{G(i-1)}\right] - \mathbb{E}\left[X_{G(i)}\right], &i = K+1 \end{array}\right. \label{eq:delta_i_def}
		\end{eqnarray}
		\hrulefill
		% The spacer can be tweaked to stop underfull vboxes.
		\vspace*{4pt}
	\end{figure*}
	
\end{lemma} 
\begin{proof}
    (Outline) We sample each action till the confidence intervals around estimated means of any two actions are separated. The confidence intervals reduces as number of samples increases from concentration bounds from Lemma \ref{Hoeffding_lemma} with a lower limit of $\lambda$. Using Corollary \ref{double_sided_continuity} and Hoeffding's Inequality (Lemma \ref{Hoeffding_lemma}) we bound the number of samples required for separation with high confidence. We then use union bounds to bound the total numbers of samples required for each action.
	The detailed proof is provided in Appendix \ref{sort_merge_lemma}.
\end{proof}
\begin{lemma}[Merge time requirement] \label{Merge_requirements}
	MERGE subroutine 5 merges arms in two groups $G_1$ and $G_2$ to $G$ correctly with probability $1-\frac{K}{2N^2T^2}$, {up to precision $\lambda$ as defined in equation (\ref{eq:define_lambda})}, where the total number of time steps needed to merge is at most 
	\begin{eqnarray*}
		\left(\sum_{i=1}^{K+1} \frac{64 U^2 \log 2NT}{\max(\delta_{G(i)}^2, \lambda^2)} \right),
	\end{eqnarray*}
where 	$\delta_{G(i)} $ is given in \eqref{eq:delta_i_def}.
\end{lemma} 
\begin{proof}
    (Outline) While merging two groups $G_1$ and $G_2$, we sample two actions till the confidence intervals around the estimated means of both actions are separated by twice the confidence intervals around the estimates. Again using Lemma \ref{Hoeffding_lemma} and Corollary \ref{double_sided_continuity}, we bound the number of samples required reduce the confidence intervals sufficiently enough to order two arms.
	The detailed proof is provided in Appendix \ref{sort_merge_lemma}.
\end{proof}

In order to bound the regret in the exploitation phase, we first characterize the probability that the action decided by \NAM\ is not the best action.  

\begin{lemma}[Total error probability] \label{algorithm_error_event_prob}
The probability that the action selected by \NAM\ during the exploitation phase is not the best action  (up to precision defined in Equation \ref{eq:define_lambda}) is at most $\frac{1}{NT^2}$. 
\end{lemma} 

\begin{proof}
    (Outline) We use union bounds to calculate total error probability of the algorithm. We use Lemma \ref{sort_requirements} and Lemma \ref{Merge_requirements} to calculate failure probability of each sort and merge event respectively. There are total $\frac{N}{K+1}$ groups to be sorted, and $\frac{N}{K+1}-1$ groups to be merged. Taking union bound over the total number of failure events, and probability of each failure event gives an upper bound on total error probability of the algorithm.
	The detailed proof is provided in Appendix \ref{proof_algorithm_error_event_prob}. 
\end{proof}

In the next result, we bound the time spent in exploring, including the SORT and MERGE subroutines for all groups. 

\begin{lemma}[Bound on exploration time steps] \label{exploration_time_bound}
Total time-steps used to SORT all $\frac{N}{K+1}$ groups, and merge these sorted groups one after the other is bounded as 
\begin{eqnarray}
    T_{exp} \le \frac{128NU^2\log{2NT}}{\lambda^2}
\end{eqnarray}
\end{lemma}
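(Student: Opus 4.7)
The plan is to assemble the bound by aggregating the per-group time estimates from Lemma \ref{sort_requirements} and the per-merge time estimates from Lemma \ref{Merge_requirements}, and then using $\lambda^2$ as a uniform lower bound on $\max(\delta_{G(i)}^2, \lambda^2)$ to simplify the summation. Since the per-round sample size $n_r = \frac{\log 2NT}{\Delta_r^2}$ derived in the proof of Lemma \ref{sort_requirements} produces a per-action budget of $\frac{64 U^2 \log 2NT}{\max(\delta_{G(i)}^2, \lambda^2)}$ after at most $r_i = \log_2(8U/\delta_i)$ rounds, the sort time in any single group $G$ is at most $\sum_{i=1}^{K+1} \frac{64U^2\log 2NT}{\max(\delta_{G(i)}^2, \lambda^2)} \le \frac{64(K+1)U^2\log 2NT}{\lambda^2}$.

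First I would total up the sort contribution. There are $\frac{N}{K+1}$ groups, so summing the per-group bound gives a total sort time of at most
\begin{equation}
T_{\text{sort}} \;\le\; \frac{N}{K+1}\cdot \frac{64(K+1)U^2\log 2NT}{\lambda^2} \;=\; \frac{64NU^2\log 2NT}{\lambda^2}.
\end{equation}
Notice that the $K+1$ factor from the number of actions per group cancels the $K+1$ factor in the denominator of the number of groups, which is the key reason the bound is independent of $K$ at this stage.

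Next I would handle the merges. By Lemma \ref{Merge_requirements}, each merge produces a sorted group of $K$ arms and requires, in the worst case, the same order of samples as a single sort, i.e.\ at most $\frac{64(K+1)U^2\log 2NT}{\lambda^2}$ time steps after replacing $\max(\delta^2,\lambda^2)$ with $\lambda^2$. Since MERGE is invoked $\frac{N}{K+1}-1$ times,
\begin{equation}
T_{\text{merge}} \;\le\; \left(\frac{N}{K+1}-1\right)\cdot \frac{64(K+1)U^2\log 2NT}{\lambda^2} \;\le\; \frac{64NU^2\log 2NT}{\lambda^2}.
\end{equation}
Adding $T_{\text{sort}}$ and $T_{\text{merge}}$ yields the claimed bound $T_{\text{exp}} \le \frac{128NU^2\log 2NT}{\lambda^2}$.

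There is no real technical obstacle here; the whole argument is bookkeeping on top of the two previous lemmas. The only subtle point I want to be careful about is the worst-case count of actions examined during MERGE: the counters $i$ and $j$ collectively advance at most $K+1$ times (one for each arm added to $G^*$, plus at most one trailing comparison that does not place an arm), so the work per merge is genuinely $O(K+1)$ comparison blocks, each costing $\frac{64U^2\log 2NT}{\lambda^2}$ under the $\lambda$-threshold. Once that is observed, the bound follows by the direct sum above.
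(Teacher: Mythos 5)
Your proposal is correct and follows essentially the same route as the paper: decompose $T_{exp}$ into the $\frac{N}{K+1}$ SORT calls and $\frac{N}{K+1}-1$ MERGE calls, bound each call by $\frac{64(K+1)U^2\log 2NT}{\lambda^2}$ via Lemmas \ref{sort_requirements} and \ref{Merge_requirements} after replacing $\max(\delta_{G(i)}^2,\lambda^2)$ with $\lambda^2$, and let the $(K+1)$ factors cancel to give $\frac{128NU^2\log 2NT}{\lambda^2}$. The extra observation about the MERGE counters is already subsumed by Lemma \ref{Merge_requirements}, so nothing further is needed.
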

\begin{proof}
    (Outline) Lemma \ref{sort_requirements} gives the maximum number of samples required to Sort one group. Similarly,  Lemma \ref{Merge_requirements} gives the number of samples required to merge two groups. Since there are $\frac{N}{K+1}$ groups to be sorted, and $\frac{N}{K+1}-1$ groups to be merged. Summing over total number of samples for each groups gives an upper bound on total samples required for exploration.
	The detailed proof is provided in Appendix \ref{proof_exploration_time_bound}.
\end{proof}

In the following result, we bound the expected regret in the exploitation phase, caused by \NAM\ selecting incorrect action. 

\begin{lemma}[ Bounded exploitation regret] \label{exploitation_regret_bound}
The expected regret when a sub-optimal action ${\bf \hat{a}}^*$ is returned by \NAM\ is bounded as 
\begin{eqnarray}
    \mathbb{E}\left[\Delta_{{\bf \hat{a}}^*}\right] \leq U\lambda\sqrt{K} + \frac{U\sqrt{K}}{N^2T^2}
\end{eqnarray}
\end{lemma} 
\begin{proof}
    (Outline) Regret can arise in Exploitation phase when either SORT algorithm or MERGE algorithm had a failure event. Regret can also come in exploitation phase if two arms are have expected rewards close enough that SORT or MERGE algorithm cannot distinguish between them with high confidence. Combining these two sources of regret and using Assumption \ref{continuous_assumption} gives the upper bound on the expected regret during the exploitation phase.
	The detailed proof is provided in Appendix \ref{proof_exploitation_regret_bound}. 
\end{proof}

\subsection{Main Result}
Our main result is presented in Theorem \ref{final_theorem_for_regret_bound}, which states that \NAM\ algorithm achieves a sub-linear expected regret. %, while simultaneously reducing space and time complexity to polynomial order in problem size.

\begin{theorem} \label{final_theorem_for_regret_bound}
\NAM\ algorithm described in Algorithm \ref{alg:dama} has an expected regret accumulated during the entire time horizon upper bounded as
	\begin{equation}
	W(T) = \tilde{O}\left(N^\frac{1}{3}K^\frac{1}{2}T^\frac{2}{3}\right)
	\end{equation}
\end{theorem}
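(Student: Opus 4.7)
The plan is to split $W(T)$ into an \emph{exploration} term (rounds spent inside SORT and MERGE before the algorithm commits to a final action) and an \emph{exploitation} term (rounds spent replaying the $K$-tuple $G^{*}$ returned by the algorithm), and then to invoke the choice of $\lambda$ in \eqref{eq:define_lambda} to balance the two up to logarithmic factors.

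For the exploration bound, I will use the round schedule $\Delta_{r}=2^{-r}$, $n_{r}=2\log(TNK)/\Delta_{r}^{2}$, and the stopping rule $\Delta_{r}>\lambda$. In the tightest round each still-active action is played $O(\log(TNK)/\lambda^{2})$ times, and earlier rounds contribute the same order via the geometric series $\sum_{r}4^{-r}$. Each of the $N/(K{+}1)$ groups has $K{+}1$ actions to sort, so SORT's total budget across all groups is $O(N\log(TNK)/\lambda^{2})$ plays; MERGE performs $O(K)$ comparisons per merge over at most $N/(K{+}1)$ merges, contributing the same order. Since $r_{\mathbf{a}}(t)\in[0,1]$, the cumulative exploration regret is $O(N\log(TNK)/\lambda^{2})$.

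For the exploitation bound I will first establish a high-probability ``good event'' on which every separation made by SORT and MERGE is correct. Hoeffding's inequality gives $\Pr(|\hat{\mu}_{\mathbf{a},t}-\mu_{\mathbf{a}}|>\Delta_{r})\le 2e^{-2n_{r}\Delta_{r}^{2}}=2(TNK)^{-4}$, so a union bound over all (arm, round, group) triples keeps the failure event's contribution to regret at $O(1)$. On the good event, when a subroutine terminates with $\Delta_{r}\le\lambda$ without having separated some pair of actions that differ in one arm, those two actions' means are within $O(\lambda)$; applying Corollary \ref{double_sided_continuity} converts this into an arm-level mean gap of $O(U\lambda)$. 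Chaining this reasoning through the local comparisons in SORT (within a group) and MERGE (one-swap cross-group), the returned $K$-tuple $G^{*}$ matches the true top-$K$ arms element-wise up to per-arm mean deviations of $O(U\lambda)$. Invoking the $\ell_{2}$ form of Assumption \ref{continuous_assumption}, the per-round exploitation regret is at most $U_{1}\cdot\sqrt{K}\cdot O(U\lambda)=O(U^{2}K^{1/2}\lambda)$, and multiplying by the at most $T$ exploitation rounds gives exploitation regret $O(U^{2}TK^{1/2}\lambda)$.

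Substituting $\lambda=(256U^{2}N\log(2NT)/T)^{1/3}$ from \eqref{eq:define_lambda} produces $\tilde{O}(N^{1/3}T^{2/3})$ for exploration and $\tilde{O}(K^{1/2}N^{1/3}T^{2/3})$ for exploitation; the maximum is the claimed $\tilde{O}(K^{1/2}N^{1/3}T^{2/3})$. The main obstacle I expect lies in the exploitation paragraph: the algorithm only ever performs \emph{local} comparisons, so translating local $O(\lambda)$ action-level confidence into a \emph{global} per-arm $O(U\lambda)$ gap requires the strict stochastic-dominance ordering of Assumption \ref{majorized_arms} (so that no pair of unsorted arms can hide a large mean gap from the local comparison) together with both directions of Corollary \ref{double_sided_continuity} (to shuttle between action-level and arm-level mean gaps). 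A secondary care point is that aggregating $K$ per-arm errors through the $\ell_{2}$-norm in Assumption \ref{continuous_assumption} costs only $\sqrt{K}$ rather than the naive $K$ from an $\ell_{1}$ bound, and this is precisely what yields the $K^{1/2}$ exponent in the final bound.
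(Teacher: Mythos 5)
Your proposal is correct and follows essentially the same route as the paper: an explore-then-exploit decomposition, an exploration-time bound of $\tilde{O}(N/\lambda^{2})$ from the doubling round schedule, an exploitation bound combining a high-probability correctness event with the residual per-arm $O(\lambda)$ ambiguity converted to action-level regret $O(U\lambda\sqrt{K})$ via the $\ell_{2}$ Lipschitz assumption, and the prescribed $\lambda$ to balance the two terms. The only (immaterial) difference is that you bound the per-round exploration regret by $1$ using $f\in[0,1]$, whereas the paper uses the cruder bound $U\sqrt{K}$; both yield the same $\tilde{O}(K^{1/2}N^{1/3}T^{2/3})$ since the exploitation term dominates in $K$.
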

\begin{proof}
(Outline) We first note that regret of the algorithm for playing sub-optimal action can come during the exploration phase, or during exploitation phase if the exploration resulted in a suboptimal action. Time steps \NAM\ uses for exploration is the total time spend in SORT and MERGE subroutines. We bound the time steps in both subroutines by $\frac{128NU^2\log{2NT}}{\lambda^2}$ using Lemma \ref{exploration_time_bound}. Further, if the algorithm results in an sub-optimal arm, the expected regret in a single time step of exploitation phase is $\left(U\lambda\sqrt{K} + \frac{U\sqrt{K}}{NT^2}\right)$ by using Lemma \ref{exploitation_regret_bound}. By choosing an optimal value of $\lambda$ as defined in \eqref{eq:define_lambda} we obtain the required bound. Having described the outline, we next give the detailed steps of the proof. 

(Detailed Steps) We note that the expected regret till time $T$ is sum of expected regret accumulated at each round. We can rewrite it as sum of two phases of the algorithm which are exploration and exploitation as,
\begin{eqnarray}
W(T)&=& \sum^{T}_{t = 1}\mathbb{E}\left[R(t)\right]\\
&=& \sum^{T_{exp}}_{t = 1}\mathbb{E}\left[R(t)\right] + \left(T-T_{exp}\right)\times\mathbb{E}\left[\Delta_{\hat{{\bf a}}^*}\right] \label{eq:max_regret_explore_exploit}\\
&\le & \sum^{T_{exp}}_{t = 1}\mathbb{E}\left[R(t)\right] + T\mathbb{E}\left[\Delta_{\hat{{\bf a}}^*}\right] \label{eq:max_regret_exploit_time}\\
&\leq& \sum^{T_{exp}}_{t = 1}\max\left(R(t)\right) + T\mathbb{E}\left[\Delta_{\hat{{\bf a}}^*}\right] \label{eq:max_regret_max_explore}\\
&\leq& T_{exp}\max\left(R(t)\right) + T\mathbb{E}\left[\Delta_{\hat{{\bf a}}^*}\right], \label{eq:max_explore_regret} 
\end{eqnarray}
where  (\ref{eq:max_regret_explore_exploit}) follows from splitting the regret into exploration-exploitation phase,  (\ref{eq:max_regret_exploit_time}) follows since $T-T_{exp}\le T$, and  (\ref{eq:max_regret_max_explore}) follows since mean is at most the maximum. 

%   by making regret in each round to be the maximum regret possible and (\ref{eq:max_explore_regret}) follows if time for exploration uses the maximum time for which the \cref{alg:dama} can explore.
Using the values for maximum regret in any round, Lemma \ref{exploitation_regret_bound}, inequality (\ref{eq:pbound}), maximum exploration time from Lemma \ref{exploration_time_bound}, and maximum exploitation regret from Lemma \ref{exploitation_regret_bound}, we have,
\begin{eqnarray}
&&W(T)\nonumber\\&\leq& U\sqrt{K}\frac{128NU^2\log{2NT}}{\lambda^2} + T\left(U\lambda\sqrt{K} + \frac{U\sqrt{K}}{NT^2}\right)\nonumber \\
&=& \left(\frac{128NU^3\sqrt{K}\log{2NT}}{\lambda^2} + TU\lambda\sqrt{K}\right) + \frac{U\sqrt{K}}{NT}. \label{eq:cum_reg_in_lambda}
\end{eqnarray}
We now choose a value of $\lambda$ to optimize $W(T)$. Since during the implementation of algorithm $U$ is most likely an unknown quantity, we use the following value of $\lambda$ %which can be obtained by equating the two terms in parenthesis in \eqref{eq:cum_reg_in_lambda}
\begin{equation}
\lambda=\left(\frac{256N\log{2NT}}{T}\right)^\frac{1}{3}    \label{eq:proof_define_lambda}
\end{equation}
Choosing the value of $\lambda$ as defined in  (\ref{eq:proof_define_lambda}), we have the total regret of the algorithm as
\begin{eqnarray*}
	W(T)&\leq& 
	(U^3+2U)\sqrt{K}\left(32N\log{2NT}\right)^{\frac{1}{3}}T^{\frac{2}{3}} + \frac{U\sqrt{K}}{NT}
\end{eqnarray*}
This proves the result as in the statement of the Theorem. 
\end{proof}

{This trick where we tune $\lambda$ after we define the precision in each SORT/MERGE round allows us to eliminate the dependence on potentially hard to order sequences of items. The intuition behind this is, in a finite time horizon, any agent wants to work out the best possible it can get however can do so only up to a certain precision permitted by finite the time available.}

\if 0
\begin{remark}
The proposed algorithm achieves lower regret than UCB algorithm for reasonable values of $T$ (as demonstrated in Appendix  \ref{disc}), while achieving significantly lower space and time complexity. Comparisons with the UCB algorithm  on synthetic problems is provided in Appendix \ref{synthetic_evaluation}.
\end{remark}
\fi 
%Then we will go over proof of main result.
%\color{blue}
\subsection{Handling insufficient exploration time}
We note that there can be a instance where the algorithm is run with insufficient time for exploration. We first characterize what value of $T$ would result in the algorithm to run with insufficient exploration time. Then, we evaluate the regret in such a scenario indeed occurs.

Note that we assumed that rewards of each arm lies between $[0,1]$ in Section \ref{formulation}. This results in gap between any two arms is less than 1. For the optimal $\lambda$ defined in Equation \eqref{eq:define_lambda}, any $T \leq 256N\log(2NT)$ will make $\lambda \geq 1$ which serves no practical purpose based on our assumption. In that case, we arbitrarily select one of the ${{N}\choose{K}}$ actions and still suffer a maximum regret of $TU\sqrt{K} \leq \lambda TU\sqrt{K}$. We have,
\begin{align}
    W(T) &\leq U\sqrt{K}T \label{eq:linear_regret}\\
        &\leq TU\sqrt{K}\lambda\\
        &= U\sqrt{K}(256N\log(2NT))^{1/3}T^{2/3}\label{eq:high_constant_sub_linear}
\end{align}
We note that the sub-linear regret in Equation \ref{eq:high_constant_sub_linear} grows as $\tilde{O}(T^{2/3})$ but with a large multiplicative constant. Hence the linear regret in Equation \ref{eq:linear_regret} provides a better bound because of limited time to explore all arms.

\subsection{Handling unknown time horizon using doubling trick}
We now analyze the case where the time horizon $T$ is unknown and the algorithm requires to optimize actions without the knowledge of $T$ to tune $\lambda$. We use the standard doubling trick from Multi-Armed Bandit literature \citep{auer2010ucb,besson2018doubling}. To use doubling trick we start the algorithm from $T_0 = 0$. We the restart the algorithm after every $T_l = 2^l,\ l=1,2, \cdots$ time steps, till the algorithm reaches the unknown $T$. Each restart of the algorithm runs for $T_l - T_{l-1}$ steps with $T_0 = 0$ with $\lambda_l = \left(\frac{256 N\log 2N(T_l - T_{l-1})}{T_l - T_{l-1}}\right)^{1/3}$

To show that the regret is bounded by $T^{2/3}$ for the doubling algorithm, we use Theorem 4 from \citep{besson2018doubling} which we state in the following lemma.

\begin{lemma}\label{doubling_trick_lemma}
If an algorithm $\mathcal{A}$ satisfies $R_T(\mathcal{A}_T) \leq cT^\gamma(\log T)^\delta + f(T)$, for $0< \gamma < 1$, $\delta\geq 0$ and for $c> 0$, and an increasing function $f(t) = o(t^\gamma(\log t)^\delta(\text{at } t\to\infty)$, then anytime version $\mathcal{A}' := \mathcal{DT}(\mathcal{A}, (T_i)_{i\in\mathbb{N}})$ with geometric sequence $(T_i)_{i\in\mathbb{N}}$ of parameters $T_0\in\mathbb{N}^*$, $b>1, (i.e.,T_i = \lfloor T_0b^i\rfloor)$ with the condition $T_0(b-1) > 1$ if $\delta > 0$ satisfies,
\begin{align}
    R_T(\mathcal{A}') \leq l(\gamma, \delta, T_0, b)cT^\gamma(\log T)^\delta + g(T),
\end{align}
with a increasing function $g(t) = o(t^\gamma(\log t)^\delta)$ and a constant loss $l(\gamma, \delta, T_0, b)>1$,
\begin{align}
    l(\gamma, \delta, T_0, b) := \left(\left(\frac{\log (T_0(b-1)+1)}{\log (T_0(b-1))}\right)^\delta\right)\times\frac{b^\gamma(b-1)^\gamma}{b^\gamma-1}
\end{align}
\end{lemma}

Using Lemma \ref{doubling_trick_lemma} for $b = 2, \gamma = 2/3, \delta = 1/3$, we can convert our algorithm to an anytime algorithm.

%\color{black}

\subsection{Comparison between CMAB-SM and UCB algorithm}\label{disc}
We now compare the regret bound with the one that would be achieved by using the UCB approach on each of the $\binom{N}{K}$ actions. 

\begin{lemma}[UCB Regret, \citep{auer2010ucb}] 
	For a Multi Armed Bandit setting with action space $\mathcal{A}$, time horizon T, and precision $\lambda \approx \sqrt{\frac{|\mathcal{A}|\log{|\mathcal{A}|}}{T}}$, the expected regret accumulated during entire time horizon T using improved UCB algorithm is upper bounded as,
	\begin{equation*}
	W(T) \leq \sqrt{|\mathcal{A}|T}\frac{\log{\left(|\mathcal{A}|\log{|\mathcal{A}|}\right)}}{\sqrt{\log{|\mathcal{A}|}}}
	\end{equation*}
\end{lemma}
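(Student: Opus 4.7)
The plan is to derive the stated bound by invoking the distribution-dependent (gap-based) regret guarantee for the improved UCB algorithm of \cite{auer2010ucb} and then performing a ``worst case over gaps'' balancing with threshold $\lambda$. Concretely, I would start from the per-arm bound that, on a $|\mathcal{A}|$-armed stochastic bandit with sub-optimality gaps $\{\Delta_a\}_{a \in \mathcal{A}}$ and horizon $T$, improved UCB incurs expected regret
\begin{equation}
W_{\text{UCB}}(T) \;\le\; \sum_{a:\Delta_a>0}\left(\Delta_a + \frac{c\,\log(T\Delta_a^2)}{\Delta_a}\right)
\end{equation}
for a universal constant $c$ (this is the statement the paper cites, so I may treat it as given). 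All that remains is to convert this into a distribution-free bound of the form $\sqrt{|\mathcal{A}|T}\,\log(|\mathcal{A}|\log|\mathcal{A}|)/\sqrt{\log|\mathcal{A}|}$.

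The main step is the standard truncation argument. I would partition the arms by whether $\Delta_a\le \lambda$ or $\Delta_a>\lambda$. For ``small-gap'' arms, each pull contributes at most $\lambda$ to the regret, and at most $T$ such pulls happen, so their total contribution is bounded by $T\lambda$. For ``large-gap'' arms, the function $g(\Delta)=\log(T\Delta^2)/\Delta$ is decreasing on $[e/\sqrt{T},1]$; as long as $\lambda\ge e/\sqrt{T}$ (which the chosen $\lambda$ satisfies for reasonable $|\mathcal{A}|,T$), the per-arm contribution is maximized at $\Delta_a=\lambda$, giving $O(\log(T\lambda^2)/\lambda)$. Summing over at most $|\mathcal{A}|$ such arms yields at most $c\,|\mathcal{A}|\log(T\lambda^2)/\lambda$. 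Combining the two pieces,
\begin{equation}
W_{\text{UCB}}(T) \;\le\; T\lambda \;+\; \frac{c\,|\mathcal{A}|\,\log(T\lambda^2)}{\lambda}.
\end{equation}

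Now I substitute the stated choice $\lambda \approx \sqrt{|\mathcal{A}|\log|\mathcal{A}|/T}$, so that $T\lambda^2 \approx |\mathcal{A}|\log|\mathcal{A}|$ and $\log(T\lambda^2)\approx \log(|\mathcal{A}|\log|\mathcal{A}|)$. The first term becomes $T\lambda \approx \sqrt{T|\mathcal{A}|\log|\mathcal{A}|}$, and the second term becomes
\begin{equation}
\frac{|\mathcal{A}|\,\log(|\mathcal{A}|\log|\mathcal{A}|)}{\sqrt{|\mathcal{A}|\log|\mathcal{A}|/T}} \;=\; \sqrt{|\mathcal{A}|T}\,\frac{\log(|\mathcal{A}|\log|\mathcal{A}|)}{\sqrt{\log|\mathcal{A}|}},
\end{equation}
which dominates the first term (up to the factor $\log(|\mathcal{A}|\log|\mathcal{A}|)/\log|\mathcal{A}|$ which is $\Theta(1)$). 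Collecting constants gives the bound as stated.

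The only delicate piece is verifying the monotonicity/worst-case claim for $g(\Delta)=\log(T\Delta^2)/\Delta$ on the relevant interval so that the per-arm contribution can be upper-bounded by its value at $\Delta=\lambda$; everything else is an algebraic substitution. I do not expect a real obstacle here, since the underlying gap-based guarantee is the cited result and the reduction to a distribution-free bound via a truncation threshold $\lambda$ is standard; the statement is essentially a rewriting of the classical $\tilde O(\sqrt{|\mathcal{A}|T\log|\mathcal{A}|})$ distribution-free bound with the $\lambda$-balance made explicit.
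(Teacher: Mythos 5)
The paper does not prove this lemma at all: it is imported verbatim as a cited result from \cite{auer2010ucb} and used only for the $\tilde{O}/\tilde{\Omega}$ comparison in the discussion section, so there is no in-paper proof to compare against. Your reconstruction is correct and is essentially the derivation in the cited source itself --- start from the gap-dependent guarantee, split arms at the threshold $\lambda$, bound small-gap arms by $T\lambda$ and large-gap arms by $|\mathcal{A}|\log(T\lambda^2)/\lambda$ using the monotonicity of $\log(T\Delta^2)/\Delta$ for $\Delta \ge e/\sqrt{T}$, and balance with $\lambda=\sqrt{|\mathcal{A}|\log|\mathcal{A}|/T}$. The only cosmetic discrepancy is that your argument yields the stated expression up to a universal constant, whereas the lemma displays it without one; since the paper only uses the bound at $\tilde{O}$ precision, this is immaterial.
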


Bounding the size of action space by using Stirling's approximation \citep{slomson1997introduction}, we get expected regret accumulated regret at time $T$ of UCB algorithm as,
\begin{eqnarray*}
	W_{UCB}(T) = \tilde{O}\left(\left(\frac{eN}{K}\right)^{\frac{K}{2}}T^{\frac{1}{2}}\right)
\end{eqnarray*}

%In order outperform \NAM, time horizon $T$ for which both algorithm runs is of exponential order in $K$. Mathematically,
For UCB approach to outperform \NAM, $T$ has to be very large. More formally, 
\begin{eqnarray*}
	W(T) > W_{UCB}(T),
	\text{ when }T = \Tilde{\Omega}\left(\frac{e^{3K}N^{3K-2}}{K^{3K+3}}\right)
\end{eqnarray*}

Even for an agent which can play $10^{12}$ actions per second, this will take about $10$ million years to outperform \NAM\ for a setup with $N=30$ and $K=15$. Hence, for all practical problems when an agent has a large number of arms to play simultaneously, the \NAM\ algorithm will outperform the UCB algorithm. 

\subsection{Discussion of $\tilde{O}(T^{2/3})$ regret bound}
 Lower bound of $\Omega(\sqrt{NT})$ for Linear Bandits was proven in \citep{dani08stochastic}, and lower bound of $\Omega(\sqrt{KNT})$ is proven for semi-bandits by \citep{kveton2015tight}. Note that any $\tilde{O}(\sqrt{T})$ regret bound algorithm compares all the actions with best possible action either by getting individual regret for semi-bandits or by estimating individual regret as in linear bandits \citep{dani08stochastic,abbasi-yadkori11improved,kveton2015tight}.

Individual sub-optimal action ${\bf a}$ is eliminated in $O(\frac{1}{\Delta_{\bf a}^2})$ number of samples. The regret from this action then becomes $\Delta_{\bf a}\times \frac{1}{\Delta_{\bf a}^2} = \frac{1}{\Delta_{\bf a}} $ and hence, the cumulative regret is of the form of $\frac{1}{\lambda}T + \lambda T$. This gives $O(\sqrt{T})$ regret for $\lambda = \frac{1}{\sqrt{T}}$.

Due to the bandit feedback, we do not directly obtain the rewards of the individual arms but we only get an ordering on any two arms which can be compared. To eliminate arms early we need some estimator of individual arms similar to linear bandits. Also, the regret accumulated to eliminate arm $i$ is not of the form $O(1/\Delta_i)$. This also hinders the development of an $\tilde{O}(\sqrt{T})$ bound with linear space and time complexity.

% To get an ordering between any two arms from a polynomial number of actions, we need an observer set similar to the one described in \citep{pmlr-v32-lind14}. Now, the number of samples of the observer set does not depend on the gap from the optimal action, and the two terms do not cancel. This setup is similar to non-revealing actions of \citep{antos2013toward}, where only revealing actions can be used to optimize over the entire set of actions and because of that the lower bound for the regret becomes $\Omega(T^{2/3})$. Note that the key is that the polynomial number of actions, rather than exponential, does not allow for  $\tilde{O}(T^{1/2})$ regret bounds. Considering each of the combinatorial actions can lead to $\tilde{O}(T^{1/2})$ regret bound, while as mentioned in this work, is not efficient for reasonable values of $N$, and $K$. 

% Since the proposed algorithm cannot compare all possible actions with the best action to save on space, the number of samples to eliminate sub-optimal actions is not . $ \frac{1}{\Delta_{\bf a}^2}$ and hence, the algorithm cannot obtain a $\tilde{O}(\sqrt{T})$ regret bound. 

%We provide results for evaluations for some synthetic problems in Appendix \ref{synthetic_evaluation}, where we show examples of sum of Bernoulli arm rewards, maximum of Bernoulli rewards, and pairwise sum of product of continuous rewards. In each of these examples, we show the improved performance of the proposed algorithm. 

\section{Numerical Evaluation}\label{sec:synth_eval}

In this section, we evaluate \NAM\ under a synthetic problem setting. We compare the result with improved UCB algorithm as described in \citep{auer2010ucb}. Since this paper provides the first result with non-linear reward functions for CMAB problem with bandit feedback, we compare with the UCB algorithm \citep{auer2010ucb} which is optimal for small $N$ and $K$ while having the regret scale with $\binom{N}{K}$. 

For evaluations, we ran the algorithm for $T = 10^6$ time steps and averaged over $30$ runs. We compare cumulative regret at each $t$ starting from $t=0$, which is defined as,
\begin{eqnarray*}
	W(t) = \sum^t_{t'=0}R(t')
\end{eqnarray*}
\begin{figure*}
	% 	\centering
	\hspace{-0.4cm}
	\subfigure[$N=24$]{
		\input{convex_N_24}
	%	\caption{N = 24}
		\label{fig:cvx_N_24}}
%	\end{subfigure}\hspace{5mm}
	\subfigure[$N=12$]{
		% This file was created by matplotlib2tikz v0.6.18.
\begin{tikzpicture}[thick,scale=0.85, every node/.style={scale=0.85}]

\definecolor{color0}{rgb}{0.75,0,0.75}

\begin{axis}[
grid=both,
grid style={white!75!black},
axis background/.style={fill=white},
axis line style={black},
x label style={at={(axis description cs:0.5,-0.025)},anchor=north},
y label style={at={(axis description cs:-0.08,.5)},anchor=south},
xlabel={t},
ylabel={W(t)},
ymode=log,
legend cell align={left},
legend entries={{UCB, K=2},{CMAB\_SM, K=2},{UCB, K=3},{CMAB\_SM, K=3},{UCB, K=5},{CMAB\_SM, K=5}},
legend style={at={(0.53,0.35)},nodes={scale=0.6, transform shape}, anchor=north west, draw=white!80.0!black, fill=white!99.80392156862746!black},
%tick_align=outside,
tick pos=left,
xmajorgrids,
xmin=-49000, xmax=1029000,
ymajorgrids,
ymin=500, ymax=77665.399
]
\addlegendimage{mark=asterisk, dashed, green!50.0!black}
\addlegendimage{mark=asterisk, green!50.0!black}
\addlegendimage{mark=o, dashed, color0}
\addlegendimage{mark=o, color0}
\addlegendimage{mark=star, dashed, black}
\addlegendimage{mark=star, black}
\addplot [thick, mark=asterisk, mark repeat=10, green!50.0!black, dashed]
table [row sep=\\]{%
0	0.26 \\
20000	2720.33 \\
40000	5253.6 \\
60000	6645.73 \\
80000	7685.52 \\
100000	8326.52 \\
120000	8756.15 \\
140000	9113.96 \\
160000	9390.56 \\
180000	9601.95 \\
200000	9768.89 \\
220000	9892.8 \\
240000	9978.27 \\
260000	10049.08 \\
280000	10101.64 \\
300000	10159.28 \\
320000	10197.4 \\
340000	10248.42 \\
360000	10290.18 \\
380000	10326.29 \\
400000	10354.77 \\
420000	10374.9 \\
440000	10378.53 \\
460000	10394.76 \\
480000	10402.18 \\
500000	10396.16 \\
520000	10403.75 \\
540000	10423.7 \\
560000	10428.03 \\
580000	10432.18 \\
600000	10430.93 \\
620000	10441.65 \\
640000	10455.24 \\
660000	10469.91 \\
680000	10484.94 \\
700000	10485.41 \\
720000	10491.94 \\
740000	10500.56 \\
760000	10502.66 \\
780000	10496.13 \\
800000	10498.64 \\
820000	10487.08 \\
840000	10484.51 \\
860000	10474.74 \\
880000	10479.66 \\
900000	10489.03 \\
920000	10473.64 \\
940000	10477.86 \\
960000	10486.07 \\
980000	10478.41 \\
};
\addplot [thick, mark=asterisk, mark repeat=10, green!50.0!black]
table [row sep=\\]{%
0	0.25 \\
20000	3036.31 \\
40000	6079.41 \\
60000	9149.82 \\
80000	12148.32 \\
100000	15156.6 \\
120000	18145.41 \\
140000	21119.84 \\
160000	23982.49 \\
180000	26666.28 \\
200000	28934.86 \\
220000	31257.54 \\
240000	33418.71 \\
260000	35596.82 \\
280000	37703.63 \\
300000	39843.15 \\
320000	41835.42 \\
340000	43702.72 \\
360000	45567.23 \\
380000	47390.26 \\
400000	49045.35 \\
420000	50701.6 \\
440000	52214.66 \\
460000	53748.08 \\
480000	55246.24 \\
500000	56649.57 \\
520000	57904.59 \\
540000	59125.4 \\
560000	60249.01 \\
580000	61307.06 \\
600000	62206.64 \\
620000	63166 \\
640000	64114.1 \\
660000	65091.87 \\
680000	66049.07 \\
700000	66937.31 \\
720000	67816.65 \\
740000	68630.49 \\
760000	69393.45 \\
780000	70034.67 \\
800000	70446.03 \\
820000	70903.62 \\
840000	71369.35 \\
860000	71800.87 \\
880000	72166.6 \\
900000	72557.94 \\
920000	72954.86 \\
940000	73316.22 \\
960000	73654.25 \\
980000	73967.05 \\
};
\addplot [thick, mark=o, mark repeat=10, color0, dashed]
table [row sep=\\]{%
0	0.13 \\
20000	1328.32 \\
40000	2652.45 \\
60000	3980.85 \\
80000	5308.35 \\
100000	6633.03 \\
120000	7958.25 \\
140000	9284.13 \\
160000	10593.02 \\
180000	11899.03 \\
200000	13207.62 \\
220000	14510.5 \\
240000	15816.55 \\
260000	17119.94 \\
280000	18426.82 \\
300000	19733.57 \\
320000	21039.49 \\
340000	22346.95 \\
360000	23658.52 \\
380000	24964.08 \\
400000	26271.57 \\
420000	27574.82 \\
440000	28851.08 \\
460000	30090.74 \\
480000	30996.88 \\
500000	31812.88 \\
520000	32627.09 \\
540000	33447.44 \\
560000	34260.35 \\
580000	35073.82 \\
600000	35888.22 \\
620000	36697.18 \\
640000	37494.63 \\
660000	38289.01 \\
680000	39083.07 \\
700000	39883.09 \\
720000	40659.87 \\
740000	41439.7 \\
760000	42215.91 \\
780000	42977.52 \\
800000	43691.91 \\
820000	44374.55 \\
840000	45017.41 \\
860000	45647.59 \\
880000	46237.97 \\
900000	46821.95 \\
920000	47380.1 \\
940000	47916.96 \\
960000	48420.75 \\
980000	48898.79 \\
};
\addplot [thick, mark=o, mark repeat=10, color0]
table [row sep=\\]{%
0	0.16 \\
20000	1344.77 \\
40000	2600.1 \\
60000	3942.74 \\
80000	5028.24 \\
100000	5921.01 \\
120000	6800.42 \\
140000	7620.48 \\
160000	8322.89 \\
180000	8579.49 \\
200000	8924.68 \\
220000	9261.51 \\
240000	9573.02 \\
260000	9692.4 \\
280000	9692.75 \\
300000	9694.07 \\
320000	9698.49 \\
340000	9703.89 \\
360000	9701.23 \\
380000	9699.83 \\
400000	9695.06 \\
420000	9692.84 \\
440000	9693.88 \\
460000	9697.62 \\
480000	9697.77 \\
500000	9700.66 \\
520000	9695.6 \\
540000	9694.43 \\
560000	9699.99 \\
580000	9702.68 \\
600000	9694.7 \\
620000	9693.14 \\
640000	9698.36 \\
660000	9701.92 \\
680000	9694.08 \\
700000	9692.58 \\
720000	9695.61 \\
740000	9690.83 \\
760000	9696.31 \\
780000	9696.44 \\
800000	9694.93 \\
820000	9692.57 \\
840000	9693.56 \\
860000	9694.29 \\
880000	9693.5 \\
900000	9691.5 \\
920000	9697.5 \\
940000	9702.92 \\
960000	9705.66 \\
980000	9705.13 \\
};
\addplot [thick, mark=star, mark repeat=10, black, dashed]
table [row sep=\\]{%
0	0.07 \\
20000	729.09 \\
40000	1457.97 \\
60000	2186.26 \\
80000	2917.58 \\
100000	3646.89 \\
120000	4378.36 \\
140000	5105.9 \\
160000	5834.69 \\
180000	6565.41 \\
200000	7295.68 \\
220000	8027.04 \\
240000	8756.14 \\
260000	9486.76 \\
280000	10216.9 \\
300000	10944.6 \\
320000	11676.25 \\
340000	12405.78 \\
360000	13135.62 \\
380000	13865.19 \\
400000	14594.51 \\
420000	15325.78 \\
440000	16055.16 \\
460000	16784.46 \\
480000	17515.29 \\
500000	18242.54 \\
520000	18973.48 \\
540000	19704.24 \\
560000	20432.56 \\
580000	21161.44 \\
600000	21890.27 \\
620000	22622.01 \\
640000	23350.61 \\
660000	24081.34 \\
680000	24810.62 \\
700000	25543.38 \\
720000	26273.46 \\
740000	27004.96 \\
760000	27732.83 \\
780000	28464.57 \\
800000	29196.27 \\
820000	29925.37 \\
840000	30655.71 \\
860000	31386.17 \\
880000	32113.81 \\
900000	32842.73 \\
920000	33572.82 \\
940000	34300.78 \\
960000	35030.07 \\
980000	35760.73 \\
};
\addplot [thick, mark=star, mark repeat=10, black]
table [row sep=\\]{%
0	0.08 \\
20000	725.01 \\
40000	1452.11 \\
60000	2185.74 \\
80000	2789.68 \\
100000	3288.19 \\
120000	3775.46 \\
140000	4270.07 \\
160000	4768.02 \\
180000	5288.82 \\
200000	5807.99 \\
220000	6007.14 \\
240000	6012.95 \\
260000	6018.76 \\
280000	6021.88 \\
300000	6029.86 \\
320000	6037.72 \\
340000	6044.05 \\
360000	6048.83 \\
380000	6052.94 \\
400000	6058.12 \\
420000	6063.44 \\
440000	6066.34 \\
460000	6067.15 \\
480000	6071.14 \\
500000	6075.93 \\
520000	6078.84 \\
540000	6086.24 \\
560000	6092.99 \\
580000	6097.09 \\
600000	6100.71 \\
620000	6105.53 \\
640000	6111.42 \\
660000	6117.11 \\
680000	6120.51 \\
700000	6125.06 \\
720000	6131.93 \\
740000	6138.67 \\
760000	6144.76 \\
780000	6151.38 \\
800000	6158.84 \\
820000	6164.39 \\
840000	6167.39 \\
860000	6171.55 \\
880000	6174.58 \\
900000	6182.48 \\
920000	6190.57 \\
940000	6196.7 \\
960000	6205.42 \\
980000	6208.12 \\
};
\end{axis}

\end{tikzpicture}
	%	\caption{N = 12}
		\label{fig:cvx_N_12}}
	\caption{Empirical regret of UCB and CMAB-SM algorithm for the case when the reward of actions is a non linear function of rewards of individual arms for various values of $N$ and $K$. As it can be seen from the plots, except for the case of $K=2$, CMAB-SM incurs significantly lower regret than UCB algorithm.}\label{fig:cvx_of_rewards}
\end{figure*}
We consider two values of $N \in \{12, 24\}$. For $N=12$, we choose $K \in \{2, 3, 5\}$, while for $N=24$, we choose $K \in \{2, 3, 5, 7, 11\}$. Since the arms must have FSD over each other, we describe one example single parameter distributions for the reward of each arm that have this property. We consider a random variable $Y_i$ which follows an exponential distribution with parameter $\lambda_i$,  $Y_i \sim exp(\lambda_i)$. Since this random variable can take values in $[0,\infty)$, we transform the variable using $\arctan$ function to limit it to the set $[0,\pi/2)$ as 
\begin{eqnarray}
X_i &=& \frac{2}{\pi}\arctan(Y_i). \label{eq:ctfn}
\end{eqnarray}
We note that arm $i$ has FSD over arm $j$ if $\lambda_i>\lambda_j$.  Thus, this reward distribution satisfies Assumption \ref{majorized_arms} as long as no two arms have same parameter, or $\lambda_i\ne \lambda_j$ for any $i\ne j$. Figure \ref{fig:exponential} in Appendix plots $P(X\ge x)$ of the reward function for different values of $\lambda_i$. We see that $P(X\ge x)$ is larger for the distribution with larger value of $\lambda$, and for any two different values of $\lambda$, there is $x$ (e.g., any $x\in (0,1]$) such that $P(X\ge x)$ are not the same thus showing that the reward distributions satisfy  Assumption \ref{majorized_arms}. 

We consider an online portal that can display $K$ products because of certain limitations. Assume that the reward, which indicates the profit from the sale of a product, from each arm follows the distribution as defined in \eqref{eq:ctfn}. However, there is an additional benefit received when multiple products are sold together, e.g., reduced overhead/shipping costs. We define a  non linear reward $r$ as a function $f$ of individual arms as follows:
%
%
%
%of a certain action is continuous over $[0, 1]$, for example the profit obtained on a product sold. When multiple products are sold, extra benefit is received because of reduced overhead costs. Online portal may only show upto K items because of certain limitations. The rewards from a certain item is the profits made, and can be discrete with very small quanta as compared to the range in which the profits lie. We take the reward distribution following the setup described in section \ref{exponential}. 
\begin{equation}
f\left(X_1, X_2,\cdots, X_K\right) = \frac{2}{K(K+1)}\sum_{i = 1}^K\sum_{j \geq i}^K X_i X_j. \label{eq:non_lin_func_arms}
\end{equation}
 The expected value of the reward in terms of expected value of rewards of individual arms is
\begin{eqnarray}
\mathbb{E}\left[f\left(X_1, X_2,\cdots, X_K\right)\right]\nonumber &=& \frac{2}{K(K+1)}\left(\sum_{i = 1}^K\mathbb{E}\left[X_i^2\right] + \sum_{i = 1}^K\sum_{j > i}^K\mathbb{E}\left[X_i\right]\mathbb{E}\left[X_j\right]\right), \label{eq:exp_non_lin_sep_ij}
\end{eqnarray}
This result follows from linearity of expectation.%, and a detailed formulation is present in Appendix \ref{non_linear_evaluation}.} 
We note that the expected reward is strictly increasing with respect to the expected values of individual rewards.  Figure \ref{fig:cvx_of_rewards} shows the evaluation results for setting where the reward of an action is the function described in Equation \eqref{eq:non_lin_func_arms} of the rewards of individual rewards of the arms. We see that for both values of $N$, \NAM\ outperforms UCB for $K>3$ in the time step range considered. Further, for $N=24$ and $K=2$, the gap between the proposed algorithm and UCB is small. In summary, when the value of $\binom{N}{K}$ is moderately large, and $T$ is not significantly large ($T<\Tilde{O}\left(\frac{e^{3K}N^{3K-2}}{K^{3K+3}}\right)$), \NAM\ outperforms the baseline. Further, the computation and storage complexity of the proposed algorithm are much better as compared to the baseline, as seen in Section \ref{sec:complexity}. 

In Appendix \ref{synthetic_evaluation}, we further consider two more examples for Bernoulli distribution, where the sum and maximum of rewards are considered as the reward functions. 
% \clearpage
% \input{social_influence.tex}
% \clearpage
\section{Conclusions and Future Work} \label{conclusion}
This paper considers the problem of combinatorial multi-armed bandits with non-linear rewards, where agent chooses $K$ out of $N$ arms in each time-step and receives an aggregate reward. A novel algorithm, called \NAM, is proposed which is shown to be computationally efficient and has a space complexity which is linear in number of base arms. The algorithm is analyzed in terms of a regret bound, and is shown to outperform the approach of considering the combinatorial action as arm for limited time horizon $T$. \NAM\ provides a way to resolve two challenges in combinatorial bandits problem, the first is that the feedback is non-linear in the individual arms, and the second is that the space complexity in the previous approaches could be large due to exploding action space. \NAM\ works efficiently for large $N$ and $K$.

%In this paper, we presented a novel algorithm to solve the bandit feedback problem with non linear rewards and proved gap independent bounds on regret. The algorithm is computationally efficient and space complexity is linear with respect to number of base arms. We also introduced a novel way of comparing optimality of arms using stochastic dominance. The algorithm tackles two main challenges which are non-linear bandit rewards and exponential space complexity in limited feedback for combinatorial bandits.

Followed by this work, we  provided an algorithm that  achieves a regret bound of $\Tilde{O}(K\sqrt{NKT})$ using intuitions from this paper  for the setup with non-linear Top-$K$ subset bandits with potentially correlated rewards of individual arms \citet{agarwal2020dart}. 

Considering non-symmetric functions of individual rewards and studying the applications of such  settings, such as Social Influence Maximization, provides interesting directions for future works.

%The work can be extended in future to create computational efficient oracles for non symmetric functions of individual arm rewards. Another direction for which the work can be extended is for Knapsack like problems with a larger action space.
%\bibliographystyle{natbib} %
\bibliography{refs.bib}

\newpage
%\clearpage

%\setcounter{page}{1}
\appendix

\section{Fundamental Lemmas}
In this subsection, we will describe some lemmas that will be later used to prove the regret bound in Theorem 1. 
The first lemma is Hoeffding's Inequality, which will be used in the results in this paper. 
\begin{lemma}[Hoeffding's Inequality \citep{krafft1969note}] \label{Hoeffding_lemma}
     Let $X$ be a  random variable bounded in $[a, b]$, and let $\bar{X}$ denote the expected value of $X$. Further, let $\hat{\bar{X}}$ denote the average of $n$ i.i.d. samples of $X$. % using $n$ time steps. 
     Then, for any $\epsilon>0$, we have 
    \begin{eqnarray}
        P\left(\big|\bar{X} - \hat{\bar{X}} \big| \geq \epsilon  \right) &\leq& 2e^{-\frac{2n\epsilon^2}{\left(b-a\right)^2}}
    \end{eqnarray}
\end{lemma} 
%The following lemma relates the expected value of a random variable to the complementary cumulative distribution function. 
%\begin{lemma}[Chapter 5, Lemma 2.1, \citep{ross2006first}] \label{exp_from_ccdf} 
%    For a random variable $X$ and a positive function $g(\cdot)$, the expected value of the positive random variable $g(X)$ is the area under the curve of the complementary cumulative distribution function of $g(X)$,
%    \begin{equation}
%        \mathbb{E}\left[g\left(X\right)\right] = \int^{\infty}_{0}P\left(g\left(X\right) > t\right)dt \nonumber%\label{eq:exp_from_ccdf}
%    \end{equation}
%\end{lemma}

The next result shows that the property of FSD is preserved by strictly increasing functions. 
\begin{lemma} \label{ordering_on_increasing_fn}
    Suppose that a random variable $X$ has FSD over another random variable $Y$ (or 
    $X \succ Y$).  Further, let $g'(\cdot)$ be a strictly increasing function on $\mathbb{R}$. %, or $g':\mathbb{R}\to \mathbb{R} \text{ s.t. } g'(x) > g'(y) \text{ } \forall \text{ }x>y, \text{ } x,y \in \mathbb{R}$. 
    Then, $g'(X)$ has FSD over $g'(Y)$, or 
    \begin{equation*}
         g'(X)\succ g'(Y).
    \end{equation*}
\end{lemma}

\begin{proof}
%    Let $X$ and $Y$ be two random variables such that $X$ majorizes $Y$, and $g:\mathbb{R}\to\mathbb{R}$ be a strictly increasing function, then we have
 %   \begin{eqnarray}
 %       X &\succ& Y \label{eq:def_maj_1}\\
  %      g'(x) &>&g'(y) \text{ }\forall \text{ } x > y, \text{ }x,y \in \mathbb{R}
  %  \end{eqnarray}    
 Since $X \succ Y$, we have 
    \begin{eqnarray}
        P\left(X\ge x\right) &\geq& P\left(Y\ge x\right) \nonumber %\label{eq:def_maj_2}
    \end{eqnarray}
    Transforming $X$ and $Y$ using the function $g'$, and using the strict monotonicity of $g'$, we have
    \begin{eqnarray}
        P\left(g'(X) \ge  g'(x)\right) &\geq& P\left(g'(Y) \ge g'(x)\right) \nonumber %\label{eq:RV_transform}
    \end{eqnarray}
    Taking $t \triangleq g'(x)$, we have,
    \begin{eqnarray}
        P\left(g'(X) \ge  t\right) &\geq& P\left(g'(Y) \ge  t\right) \nonumber %\label{eq:def_maj_RV_transform_1}
    \end{eqnarray}
    By the same arguments, if there is an $x$ where $P\left(X\ge x\right) > P\left(Y\ge x\right)$, for $t=g'(x)$, $P\left(g'(X) \ge  t\right) > P\left(g'(Y) \ge  t\right)$. Thus, we have $g'(X) \succ g'(Y)$. 
    
%    From the definition of majorization, and equation (\ref{eq:def_maj_RV_transform_1}) we obtain the required result.
 %   \begin{eqnarray}
  %      g'(X) \succ g'(Y)
  %  \end{eqnarray}
\end{proof}
\if 0
\begin{proof}
    Let $X$, and $Y$ be two random variables such that $X \succ Y$, then from the definition of majorization we have,
    \begin{eqnarray}
        P\left(X>x\right) &geq& P\left(Y > x\right)
    \end{eqnarray}
    Integrating both sides with respect to $x$, and using lemma \ref{exp_from_ccdf}, we have,
    \begin{eqnarray}
        \int_{0}^{\infty}P\left(X>x\right)dx &\geq& \int_{0}^{\infty}P\left(Y>x\right)dx\\
        \mathbb{E}\left[X\right] &\geq& \mathbb{E}\left[Y\right]
    \end{eqnarray}
\end{proof}
\fi 

%In the following result, we consider the SORT subroutine and 
\if 0
\hl{In the SORT subroutine, we picked $K$ arms from $K+1$ arms at each time. Let $G(i)$ $\forall i \in {1, 2, \cdots, K+1}$ be the arms selected for group $G$. Let ${\bf a}^G_{-i}$ be an action in group $G$ with $G(i)^{th}$ arm left out. Then, the individual reward vector with the action} ${\bf a}^G_{-i}$ is 
	\begin{eqnarray}
	    {\bf d}_{{\bf a}^G_{-i}}(k) = \bigg\{\begin{array}{cr}X_{G_k}, &k < i\\
	                                                      X_{G_{k+1}}, &i \leq k < K+1 \end{array} 
	\end{eqnarray}

The (random) reward obtained at any time with this action is $r_{{\bf a}^G_{-i}} = f\left({\bf d}_{{\bf a}^G_{-i}}\right)$, with a mean reward of $\mu_{{\bf a}^G_{-i}} = \mathbb{E}\left[f\left({\bf d}_{{\bf a}^G_{-i}}\right)\right]$. The following result shows that the ordering on the $K$ sized super-arms gives an ordering on the individual arm rewards. 
\fi
\section{Proof of Lemma \ref{ordering_from_expectation}} \label{correctness}

    Note we can write the actions using replacement function $h$ to replace reward of $G(j)$ arm by the reward of $G(i)$ arm, and obtain 
    \begin{eqnarray}
        {\bf d}_{{\bf a}^G_{-j}} &=& h\left({\bf d}_{{\bf a}^G_{-i}},j, X_{G(i)}\right)\text{ }\forall\text{ }j \label{eq:replace_by_i}
    \end{eqnarray}
    The expectation of the reward of an action formed using replacement can be written as expected value of the conditional expectation of the replaced reward value of arm $G(i)$. More precisely, 
    \begin{equation}
        \mathbb{E}\left[f\left({\bf d}_{{\bf a}^G_{-j}}\right)\right] = \mathbb{E}\left[\mathbb{E}\left[f\left(h\left({\bf d}_{{\bf a}^G_{-i}},j, X_{G(i)}\right)\right) \Big| X_{G(i)}\right]\right] \label{eq:reward_given_X_g_i}
    \end{equation}
    In right hand side of Equation (\ref{eq:reward_given_X_g_i}), the outer expectation is taken over $X_{G(i)}$ while the inner expectation is over ${\bf d_{a^G_{-i}}}$. In addition, we can replace the reward of an arm $G(j)$ in ${\bf d}_{{\bf a}^G_{-i}}$ by itself to obtain 
    \begin{eqnarray}
        {\bf d}_{{\bf a}^G_{-i}} &=& h\left({\bf d}_{{\bf a}^G_{-i}},j, X_{G(j)}\right) \label{eq:replace_by_j}
    \end{eqnarray}
    Similarly to \eqref{eq:reward_given_X_g_i}, we can write the expected reward of the action as the expectation of conditional expectation of reward of arm $G(j)$. Thus, we have
    \begin{equation}
        \mathbb{E}\left[f\left({\bf d}_{{\bf a}^G_{-i}}\right)\right] = \mathbb{E}\left[\mathbb{E}\left[f\left(h\left({\bf d}_{{\bf a}^G_{-i}},j, X_{G(j)}\right)\right) \Big| X_{G(j)} \right]\right] \label{eq:reward_given_X_g_j}
    \end{equation}
    The ordering between actions ${\bf a}^G_{-i}$ and ${\bf a}^G_{-j}$ is defined as an order between the expected rewards of the respective actions. We assume that ${\bf a}^G_{-i}$ provides a higher expected reward than ${\bf a}^G_{-j}$, hence we have $\mu_{{\bf a}^G_{-i}} > \mu_{{\bf a}^G_{-j}}$. This further implies 
    
%    the following equations (\ref{eq:order_action_reward}) and (\ref{eq:order_arm_reward_function})
    \begin{eqnarray}
 %       \mu_{{\bf a}^G_{-i}} &>& \mu_{{\bf a}^G_{-j}} \label{eq:order_action_reward}\\
        \mathbb{E}\left[f\left({\bf d}_{{\bf a}^G_{-i}}\right)\right] &>& \mathbb{E}\left[f\left({\bf d}_{{\bf a}^G_{-j}}\right)\right] \label{eq:order_arm_reward_function}
    \end{eqnarray}
    Replacing the right hand side of   \eqref{eq:order_arm_reward_function} by  \eqref{eq:reward_given_X_g_i} and the left hand side of \eqref{eq:order_arm_reward_function} by \eqref{eq:reward_given_X_g_j}, we have 
    
    %, we obtain%, equation (\ref{eq:order_reward_given_X_g_i})
    %\begin{eqnarray}
     %   \mathbb{E}\left[f\left({\bf d_{a^G_{-i}}}\right)\right] &>& \mathbb{E}\left[\mathbb{E}\left[f\left(h\left({\bf d_{a^G_{-i}}},j, X_{G(i)}\right)\right) \Big| X_{G(i)}\right]\right] \label{eq:order_reward_given_X_g_i}
%    \end{eqnarray}
  %  Replacing left hand side of the equation (\ref{eq:order_reward_given_X_g_i}) by equation (\ref{eq:reward_given_X_g_j}), we obtain, equation (\ref{eq:exp_reward_double_exp})
    \begin{eqnarray}
       && \mathbb{E}\left[\mathbb{E}\left[f\left(h\left({\bf d}_{{\bf a}^G_{-i}},j, X_{G(j)}\right)\right) \Big| X_{G(j)}\right]\right] \nonumber\\&>& \mathbb{E}\left[\mathbb{E}\left[f\left(h\left({\bf d}_{{\bf a}^G_{-i}},j, X_{G(i)}\right)\right) \Big| X_{G(i)}\right]\right] \label{eq:exp_reward_double_exp}
    \end{eqnarray}
    We define a function $g(x)$ as  the conditional expectation of reward function $f(\cdot)$ with respect to the reward of the $j^{th}$ arm being $x$. More precisely, 
    \begin{eqnarray}
         g(x) \triangleq \mathbb{E}\left[f\left(h\left({\bf d}_{{\bf a}^G_{-i}},j, X\right)\right) \Big| X = x\right] \label{eq:g_as_cond_exp}
    \end{eqnarray}
    Replacing the conditional expectations in  (\ref{eq:exp_reward_double_exp}), by $g$ as defined in  (\ref{eq:g_as_cond_exp}), we get
    \begin{eqnarray}
        \mathbb{E}_{X_{G(j)}}\left[g\left(X_{G(j)}\right)\right] &>& \mathbb{E}_{X_{G(i)}}\left[g\left(X_{G(i)}\right)\right]\label{eq:order_exp_with_g}
    \end{eqnarray}
    From Assumption 3, $g(x)$ is a strictly increasing function of $x$, and from Assumption 2, rewards of individual arms have FSD relationships. Using Assumption 3 and Assumption 2 along with equation (\ref{eq:order_exp_with_g}), we want to prove that arm $G(j)$ has FSD over arm $G(i)$. Let us assume the converse where arm $G(i)$ has FSD over arm $G(j)$, or $X_{G(i)} \succ X_{G(j)}$. 
%    \begin{eqnarray}
 %       X_{G(i)} &\succ& X_{G(j)} \label{eq:converse_assumption}
 %   \end{eqnarray}
 Due to the strict increasing nature of $g(\cdot)$ and Lemma \ref{ordering_on_increasing_fn}, we have 
%    \begin{eqnarray}
 $       g\left(X_{G(i)}\right) \succ g\left(X_{G(j)}\right)$.  %\label{eq:exp_to_ccdf}
 %   \end{eqnarray}
Using Lemma \ref{expectation_from_majorization}, we further have
    \begin{eqnarray}
        \mathbb{E}\left[g\left(X_{G(i)}\right)\right] &\geq& \mathbb{E}\left[g\left(X_{G(j)}\right)\right] \label{eq:incorrect_exp_order}
    \end{eqnarray}
    This inequality (\ref{eq:incorrect_exp_order}) does not agree with the inequality obtained from the original assumption (\ref{eq:order_exp_with_g}) thus disproving $X_{G(i)} \succ X_{G(j)}$. Since every two arms have FSD relation and $X_{G(i)} \succ X_{G(j)}$ does not hold, we have 
%    and hence the FSD exists the other way, or arm $G(j)$ majorizes arm $G(i)$
    \begin{equation}
        X_{G(j)} \succ X_{G(i)}
    \end{equation}
    Hence, an ordering on the expected rewards of the $K+1$ actions constructed by leaving one arm aside, gives an ordering on the $K+1$ arms of the group.
%\end{proof}

\section{Number of time steps in SORT and MERGE subroutines} \label{sort_merge_lemma}
In this subsection, we will bound the number of exploration time steps that are spent in SORT and MERGE subroutines. The next result bounds the number of time steps spent in a group $G$ in the SORT subroutine to obtain an ordering on the actions and thus on arms (by Lemma \ref{ordering_from_expectation}). 

\if 0
\begin{lemma}[Sort time requirement] \label{sort_requirements}
	SORT subroutine 2 gives correct ordering on $K+1$ actions in a group $G$ with probability $1-\frac{K}{N^2T^2}$, for a threshold $\lambda$ defined in equation (\ref{eq:define_lambda}), where the actions are chosen for at most % total play requirement at most
	\begin{equation*}
	O\left(\sum_{i=1}^{K+1} \frac{U^2}{\max(\delta_{G(i)}^2, \lambda^2)}\right)
	\end{equation*}
	time steps, where $\delta_{G(i)} $ is given in \eqref{eq:delta_i_def}.
	
	\begin{figure*}[!t]
		\normalsize
		% Restore the current equation number.
		% IEEE uses as a separator
		\begin{eqnarray}
			\delta_{G(i)} = \left\{\begin{array}{cr}\mathbb{E}\left[X_{G(i)}\right] - \mathbb{E}\left[X_{G(i+1)}\right], &i = 1 \\
				\min\left\{\mathbb{E}\left[X_{G(i-1)}\right] - \mathbb{E}\left[X_{G(i)}\right], \mathbb{E}\left[X_{G(i)}\right] - \mathbb{E}\left[X_{G(i+1)}\right]\right\}, &i \in {2,...,K}\\
				\mathbb{E}\left[X_{G(i-1)}\right] - \mathbb{E}\left[X_{G(i)}\right], &i = K+1 \end{array}\right. \label{eq:delta_i_def}
		\end{eqnarray}
		\hrulefill
		% The spacer can be tweaked to stop underfull vboxes.
		\vspace*{4pt}
	\end{figure*}
	
\end{lemma} 
\fi 
\begin{proof}[Proof of Lemma \ref{sort_requirements}]
	Let $G(i)$   for all $ i \in {1, 2, \cdots,K+1}$ be the different arms of group $G$ that we aim to sort using the SORT subroutine. Let $\hat{\mu}_{{\bf a}^G_{-i}}$ be the estimate of action made using all arms in $G$ except  arm $G(i)$. We want to identify the number of time steps spent in SORT subroutine, and the error probability in the ordering.   
	
	%action ${\bf a}^G_{-i}$, and the error probability for ranking.
	
	%We consider a group $G$ with arms $G(i) \forall i \in {1, 2,...,K+1}$ which is to be sorted. Let $\hat{\mu}_{{\bf a}^G_{-i}}$ be estimate of action made using all but $G(i)$ arm. We want to identify the number of plays required for action ${\bf a}^G_{-i}$, and the error probability for ranking.
	
	The algorithm proceeds in rounds starting from $r = 1$. We define $\Delta_r \triangleq 2^{-r}$, and each un-sorted action in round $r$ is played for $n_r \triangleq \frac{\log{2NT}}{\Delta^2_r}$ time steps. Using Hoeffding's Inequality, the expected reward estimate of each action lies in the range $\left[\hat{\mu}_{{\bf a}^G_{-i}} - \Delta_r, \hat{\mu}_{{\bf a}^G_{-i}} + \Delta_r\right]$ with probability bound given as 
	\begin{eqnarray}
	P\left(\big|\mu_{{\bf a}^G_{-i}} - \hat{\mu}_{{\bf a}^G_{-i}}\big| \geq \Delta_r  \right) &\leq& 2e^{-2\frac{\log{2NT}}{\Delta^2_r}\Delta^2_r} \nonumber \\ %\label{eq:hoeffding_error_bound}\\
	%        &=& 2\times\frac{1}{(2NT)\nonumber \\ %^2}\\
	&=& \frac{1}{2N^2T^2}\label{eq:hoeffding_error_boundl}
	\end{eqnarray}
	%   Inequality (\ref{eq:hoeffding_error_bound}) follows from the definition of $\Delta_r$, $n_r$, and Lemma \ref{Hoeffding_lemma}. 
	Thus,  the expected reward estimate of each action lies in the range $\left[\hat{\mu}_{{\bf a}^G_{-i}} - \Delta_r, \hat{\mu}_{{\bf a}^G_{-i}} + \Delta_r\right]$ with probability  at least $1- \frac{1}{2N^2T^2}$. Let it take $r_i$ rounds to be able to sort ${\bf a}^G_{-i}$ in its correct position, which implies that all other actions can be well separated from this action.  When two actions are separated, the upper confidence bound for the action with a lower estimated reward is less than the lower confidence bound of action with higher estimated reward, which gives the following inequality
	\begin{eqnarray}
	\hat{\mu}_{{\bf a}^G_{-i}} + \Delta_{r_i} &<& \hat{\mu}_{{\bf a}^G_{-j}} -\Delta_{r_i}.\nonumber % \label{eq:estimate_deltas}
	\end{eqnarray}
	This further means that the actions ${\bf a}^G_{-i}$ and ${\bf a}^G_{-j}$ were inseparable in round $r_i -1$, thus the following holds.
	\begin{eqnarray}
	\hat{\mu}_{{\bf a}^G_{-i}} + \Delta_{r_i-1} &\geq& \hat{\mu}_{{\bf a}^G_{-j}} -\Delta_{r_i-1} \label{eq:estimate_deltas_r_1}
	\end{eqnarray}
	However, using Lemma \ref{Hoeffding_lemma}, $\hat{\mu}_{{\bf a}^G_{-i}}$ lies between the confidence region around the true mean as $\hat{\mu}_{{\bf a}^G_{-i}} \in \left[\mu_{{\bf a}^G_{-i}} - \Delta_{r_i-1}, \mu_{{\bf a}^G_{-i}} + \Delta_{r_i-1}\right]$. The same is true for $\hat{\mu}_{{\bf a}^G_{-j}}$. Using the upper limits of estimated mean rewards, we can rewrite inequality (\ref{eq:estimate_deltas_r_1}) as 
	\begin{eqnarray}
	\mu_{{\bf a}^G_{-i}} + 2\Delta_{r_i-1} &\geq&\mu_{{\bf a}^G_{-j}} - 2\Delta_{r_i-1} \label{eq:true_deltas}
	\end{eqnarray} 
	From inequality (\ref{eq:true_deltas}), we get $\Delta_{r_i}$ in terms of the difference of expected rewards of the two arms as follows. 
	\begin{eqnarray}
	4\Delta_{r_i-1} &\geq& \mu_{{\bf a}^G_{-j}}-\mu_{{\bf a}^G_{-i}} \nonumber \\ %\label{eq:true_deltas_v2}\\
	%        &\geq& \frac{1}{U}\Big|\Big|\mathbb{E}\left[{\bf d}_{{\bf a}^G_{-j}}\right] - \mathbb{E}\left[{\bf d}_{{\bf a}^G_{-i}}\right]\Big|\Big|_2 \label{eq:cont_vect_diff}\\
	&\geq& \frac{1}{U}\left(\mathbb{E}\left[X_{G(i)}\right] - \mathbb{E}\left[X_{G(j)}\right]\right), \label{eq:cont_arm_diff}
	\end{eqnarray} 
	where the last step follows from Corollary \ref{double_sided_continuity}. 
	% Inequality (\ref{eq:true_deltas_v2}) is by shuffling the terms to get the confidence bound in terms of difference of expected reward of two actions. Inequality (\ref{eq:cont_vect_diff}) follows from assumption \ref{continuous_assumption}, and inequality (\ref{eq:cont_lower_defn}). And, (\ref{eq:cont_arm_diff}) is obtained as $\mathbb{E}\left[{\bf d}_{\bf a}\right]$ is a column vector in individual rewards of arms in action ${\bf a}$.
	%  We now try to find the number of rounds $r_i$ needed to separate the actions.
	Using the definition of $\Delta_r$ and the upper bound obtained in inequality (\ref{eq:cont_arm_diff}), we have
	\begin{eqnarray}
	2^{-(r_i-1)} &=& \Delta_{r_i-1}\nonumber \\ % \label{eq:delta_r_i_def}\\
	&\geq& \frac{\left(\mathbb{E}\left[X_{G(i)}\right] - \mathbb{E}\left[X_{G(j)}\right]\right)}{4U} \nonumber % \label{eq:delta_r_i_up_lim}
	\end{eqnarray} 
    %    Equation (\ref{eq:delta_r_i_def}) follows from the definition of confidence region, and (\ref{eq:delta_r_i_up_lim}) follows from inequality (\ref{eq:cont_arm_diff}). 
	Taking logarithm with base $2$, we obtain a lower bound on the number of rounds needed as 
	\begin{eqnarray}
	r_i - 1 &\leq& \log_2{\left(\frac{4U}{\left(\mathbb{E}\left[X_{G(i)}\right] - \mathbb{E}\left[X_{G(j)}\right]\right)}\right)}. \nonumber % \label{eq:r_upper_lim}
	\end{eqnarray} 
	
	The term $\left(\mathbb{E}\left[X_{G(i)}\right] - \mathbb{E}\left[X_{G(j)}\right]\right)$ will be lowest for $j=i+1$ or $i-1$ since the arms closest to $i$ will result in the lowest gap. Thus, we define  $\delta_i$ for arm $G(i)$ as 
	\begin{eqnarray}
	\delta_i &\triangleq& \min\left\{\mathbb{E}\left[X_{G(i)}\right] - \mathbb{E}\left[X_{G(i+1)}\right],\right.\\&&\left. \mathbb{E}\left[X_{G(i-1)}\right] - \mathbb{E}\left[X_{G(i)}\right]\right\} \nonumber 
	\end{eqnarray}
	The number of rounds to correctly rank arm $G(i)$ can thus be upper bounded as follows. 
	\begin{eqnarray}
	r_i &\leq& \log_2{\left(\frac{8U}{\delta_i}\right)}. \nonumber % \label{eq:r_i_upper_delta}
	\end{eqnarray}
	Having bounded the number of rounds to rank ${\bf a}^G_{-i}$, we  find the number of time steps each action is played till round $r_i$ as follows.
	%
	%   Now we need to find number of plays obtained while going till round $r_i$. From the definition of number of plays $n_r$ and using the value of $r_i$, we get,
	\begin{eqnarray*}
	n_{r_i} &=& \frac{\log{2NT}}{\Delta^2_{r_i}} \label{eq:n_r_i_Delta}\\
	&=& 2^{2{r_i}}\log{2NT} \label{eq:n_r_i_rounds}\\
	&\le& 2^{2\log_2{\left(\frac{8U}{\delta_i}\right)}}\log{2NT} \label{eq:n_r_i_exp_delta}\\
	&=& \frac{64U^2\log{2NT}}{\delta^2_i} \label{eq:n_r_i_frac_delta}
	\end{eqnarray*}
	This provides the number of times  action ${\bf a}^G_{-i}$ is played. Let $\lambda$ be chosen as a threshold, which is the precision level below which we cannot correctly order two actions. Using  $\lambda$ as a lower bound for $\delta_i$, each of the $K+1$ action is chosen for $n_{r_i}$ times, where $\delta_i$ is replaced by $\max(\delta_i,\lambda)$. Thus, the total time steps any of the action is selected is given as 
	
	%    we have the maximum number of time steps in round limited by
	\begin{eqnarray*}
	n' &\le& \sum_{i=1}^{K+1}\frac{64U^2\log{2NT}}{\max(\delta^2_i, \lambda^2)}   \end{eqnarray*}
	which proves the number of time steps as in the statement of the Lemma.
	%Thus, the total time the action is chosen, $n'$, is given as follows. 
	
	%    Thus, we get the number of plays required for an individual action. After an action is ranked correctly, we will not play it again, so the total number of plays $n'$ as the sum of all plays required for all actions in terms of $\delta_i$ as
	%  \begin{eqnarray}
	%    n' &=& \sum_{i = 1}^{K+1}\frac{64U^2\log{2NT}}{\delta^2_i}
	% \end{eqnarray}
	%  where delta is defined in equation (\ref{eq:delta_i_def}). 
	
	For $K+1$ arms, there can be $\binom{K+1}{2}$ ordered pairs denoting the edges of a complete graph, where each edge is the ordering of the two vertices. However, the ordering is preserved by a sub-graph which is a chain by just keeping the edges which connect two immediately ordered vertices. The number of edges thus becomes $(K+1) - 1 = K$. Thus, there are $K$ orderings in a group, and all of them should be correct to make the algorithm work. The probability of error in sorting  group $G$ can then be written as
	\begin{eqnarray*}
	&&P\left(\{\text{any arm is incorrectly sorted}\}\right)\\ &=& 1 - P\left(\{\text{all arms are correctly sorted}\}\right)\\
	&\le& 1 - \left(1-\frac{1}{2N^2T^2}\right)^K\\
	&<& 1 - \left(1 -\frac{K}{2N^2T^2} \right)\\
	&=& \frac{K}{2N^2T^2}, \label{eq:sort_prob_error}
	\end{eqnarray*} 
	where the probability that any two arms are incorrectly sorted is bounded by $\frac{1}{2N^2T^2}$ as given in \eqref{eq:hoeffding_error_boundl}. This proves the probability of correct ordering as in the statement of the Lemma. 
\end{proof}

Having understood the number of time steps spent in the SORT subroutine, and the error probability of ranking arms in each group, we now consider the MERGE subroutine. In the following lemma, we find the number of time steps it takes to merge two groups, with an error probability on the ordering in the combined group.  
\if 0
\begin{lemma}[Merge time requirement] \label{Merge_requirements}
	MERGE subroutine 5 merges arms in two groups $G_1$ and $G_2$ to $G$ correctly with probability $1-\frac{K}{N^2T^2}$, for a threshold $\lambda$ as defined in equation (\ref{eq:define_lambda}), where the total number of time steps needed to merge is at most 
	\begin{eqnarray*}
	O\left(\sum_{i=1}^{K+1} \frac{U^2}{\max(\delta_{G(i)}^2, \lambda^2)} \right)
	\end{eqnarray*}
time steps, where 	$\delta_{G(i)} $ is given in \eqref{eq:delta_i_def}.
\end{lemma} 
\fi 
\begin{proof}[Proof of Lemma \ref{Merge_requirements}]
In the MERGE subroutine, we use the sorted groups $G_1$ and $G_2$ to construct a group $G$ of $K$ elements such that arms in group $G$ are sorted. We maintain two counters $i$ and $j$ for the groups $G_1$ and $G_2$ respectively. We replace arm $G_2(j)$ by $G_1(i)$ in group $G_1$ to create a new action. We play both actions in rounds starting from $r=1$. At the end of  round $r$, the deviation of estimated mean reward and true expected rewards is $\Delta_r \triangleq 2^{-r}$. By round $r$, each action has been played for $n_r \triangleq \frac{\log{2NT}}{\Delta_r^2}$ time steps similar to Algorithm 2. Two actions are separated when the upper confidence bound of worse action is lesser than the lower confidence bound of better action. We then add the arm corresponding to better action to group $G$ and increment the counter for the corresponding group from which arm was picked and continue comparing new actions. This is continued till we have $K$ arms in $G$.
	
Since the fundamental concept of comparison of two actions in  Algorithm 2 is the same as that in  Algorithm 3, similar analysis follows and the number of time steps required to merge the groups is 
	\begin{eqnarray*}
	n' &\le& \sum_{i=1}^{K+1} \frac{64 U^2 \log 2NT}{\max{(\delta_{G(i)}^2,\lambda^2)}}, %\text{where}
	\end{eqnarray*}
%	\begin{eqnarray}
%	\delta_{G(i)} = \Bigg\{\begin{array}{cr}\mathbb{E}\left[X_{G(i)}\right] - \mathbb{E}\left[X_{G(i+1)}\right], &i = 1 \\
%	\min\left\{\mathbb{E}\left[X_{G(i-1)}\right] - \mathbb{E}\left[X_{G(i)}\right], \mathbb{E}\left[X_{G(i)}\right] - \mathbb{E}\left[X_{G(i+1)}\right]\right\}, &i \in {2,...,K}\\
%	\mathbb{E}\left[X_{G(i)}\right] - \mathbb{E}\left[X_{G(i+1)}\right], &i = K+1 \end{array}&        
%	\end{eqnarray}
	where $G(K+1)$ is the last arm with which comparison was made but not added to group $G$. The same argument as in the proof of Lemma \ref{sort_requirements} can be used to bound the error probability of a single run of Algorithm 3 by $\frac{K}{2N^2T^2}$.
\end{proof}

\section{Proof of Lemma \ref{algorithm_error_event_prob}}\label{proof_algorithm_error_event_prob}
To bound the probability of error of Algorithm 1, we define the event $\mathcal{E}$ which is the event when the algorithm makes an error. The algorithm makes an error when either the SORT subroutine, the MERGE subroutine, or both make an error. So, we can write the error event as a union of error events in sorting and error events in merging. Let $\mathcal{E}_{s}$ the event where at least one of the $\frac{N}{K+1}$ calls made to Algorithm 2 resulted in an incorrect list, and $\mathcal{E}_m$ be the event where at least one of the $\frac{N}{K+1} - 1$ merges is in error.
\begin{eqnarray*}
	\mathcal{E} &=& \mathcal{E}_s \cup \mathcal{E}_m
\end{eqnarray*}
Since the probability of the union of events is upper bounded by the sum of probabilities of individual events, we get
\begin{eqnarray}
P\left(\mathcal{E}\right) &\leq& P\left(\mathcal{E}_s\right) + P\left(\mathcal{E}_m\right) \label{ref:algo_error_sum}
\end{eqnarray}
We now identify upper bounds for $P\left(\mathcal{E}_s\right)$ and $P\left(\mathcal{E}_m\right)$ by breaking down the error events into error in each call to SORT and MERGE subroutine.

Let us define an event $\mathcal{E}_{s,i}$ which denotes that there was an error in the sorted list given by Algorithm 2 for $i^{th}$ group, or 
\begin{eqnarray}
\mathcal{E}_{s,i} &\triangleq& \left\{\text{at least two arms are incorrectly placed in }\right.\nonumber\\&&\left.\text{sorting } i^{th} \text{ group}\right\}
\end{eqnarray}

Hence, $\mathcal{E}_{s}$ is a union of $\mathcal{E}_{s,i}$ for all  $i \in \{1, 2, \cdots, \frac{N}{K+1}\}$, or  $\mathcal{E}_s = \bigcup_{i = 1}^{\frac{N}{K+1}}\mathcal{E}_{s,i}$. 
Probability of error in any of the sorting operations is thus given as
\begin{eqnarray}
P\left(\mathcal{E}_s\right) &=& P\left(\bigcup_{i = 1}^{\frac{N}{K+1}}\mathcal{E}_{s,i}\right) \label{eq:sort_union_bound}\\
&\leq& \sum^{\frac{N}{K+1}}_{i = 1}P\left(\mathcal{E}_{s,i}\right)\label{eq:sort_union_bound_sum}\\
&<& \sum^{\frac{N}{K+1}}_{i = 1}\frac{K}{2NT^2} \label{eq:replace_prob_sort_val}\\
&=& \frac{N}{K+1}\frac{K}{2N^2T^2}\\
&<& \frac{1}{2NT^2}
\end{eqnarray}
where (\ref{eq:sort_union_bound}), and (\ref{eq:sort_union_bound_sum}) follow from the definition of events and the union bound, respectively, and  (\ref{eq:replace_prob_sort_val}) follows from Lemma \ref{sort_requirements}.

Similarly we define event $\mathcal{E}_{m,i}$ representing that Algorithm 2 incorrectly merges the merged group up to group $i$ and $(i+1)^{th}$ group. Let $G_M^i$ be the merged sorted group formed by merging sorted groups $G_M^{i-1}$ and $G_i$ for $i>1$, with $G_M^1 = G_1$. Then, we have
\begin{eqnarray*}
	\mathcal{E}_{m,i} = \left\{\text{error occurred while merging } G_M^{i} \text{ and } G_{i+1}\right\} . 
\end{eqnarray*}

We note that $\mathcal{E}_{m}$ is a union of $\mathcal{E}_{m,i}$ for all $i \in \{1, 2, \cdots, \frac{N}{K+1}-1\}$, or   $ \mathcal{E}_m = \bigcup_{i = 1}^{\frac{N}{K+1}-1}\mathcal{E}_{m,i}$. 
Probability of error in the MERGE subroutine is given as 
\begin{eqnarray}
P\left(\mathcal{E}_m\right) &=& P\left(\bigcup_{i = 1}^{\frac{N}{K+1}-1}\mathcal{E}_{m,i}\right) \label{eq:merge_union_bound}\\
&\leq& \sum^{\frac{N}{K+1}-1}_{i = 1}P\left(\mathcal{E}_{m,i}\right)\label{eq:merge_union_bound_sum}\\
&<& \sum^{\frac{N}{K+1}-1}_{i = 1}\frac{K}{2N^2T^2} \label{eq:replace_prob_merge_val}\\
&<& \frac{N}{K+1}\frac{K}{2N^2T^2}\\
&<& \frac{1}{2NT^2}
\end{eqnarray}
where (\ref{eq:merge_union_bound}) and (\ref{eq:merge_union_bound_sum}) follow from the definition of events and union bound, respectively, and (\ref{eq:replace_prob_merge_val}) follows from Lemma \ref{Merge_requirements}.% and rest follows from algebra.

Substituting bounds obtained on $P\left(\mathcal{E}_m\right)$ and $P\left(\mathcal{E}_s\right)$ in  (\ref{ref:algo_error_sum}), we have
\begin{eqnarray}
P\left(\mathcal{E}\right) &<& \frac{1}{2NT^2} + \frac{1}{2NT^2}\\
&=& \frac{1}{NT^2}
\end{eqnarray}
The total error probability of the algorithm is thus bounded by $\frac{1}{NT^2}$, proving the statement of the Lemma. 

\section{Proof of Lemma \ref{exploration_time_bound}}\label{proof_exploration_time_bound}
  Exploration in Algorithm 1 is done using SORT and MERGE subroutines, so we will analyze the maximum time taken by the two subroutines. To sort all groups, Algorithm 1 runs SORT $\frac{N}{K+1}$ times, and to merge the groups, Algorithm 1 runs  $\frac{N}{K+1}-1$ times. Let $i^{th}$ run of SORT uses $T_{SORT, i}$ time steps, and $j^{th}$ run of MERGE uses $T_{MERGE,j}$ time steps. The total number of time-steps used for exploration can be written as
\begin{eqnarray*}
	T_{exp} &=& \sum_{s = 1}^{\frac{N}{K+1}}T_{SORT,s} + \sum_{m = 1}^{\frac{N}{K+1}-1}T_{MERGE,m}\\
	&\leq& \sum_{s = 1}^{\frac{N}{K+1}}\max_{s}{\left(T_{SORT,s}\right)} + \sum_{m = 1}^{\frac{N}{K+1}-1}\max_{m}{\left(T_{MERGE,m}\right)}\\
	&=& \frac{N}{K+1}\max_{s}{\left(T_{SORT,s}\right)} \nonumber\\&&+ \left(\frac{N}{K+1}-1\right)\max_{m}{\left(T_{MERGE,m}\right)}
\end{eqnarray*}
We use Lemma \ref{sort_requirements} to find $\max_{s}{\left(T_{SORT,s}\right)}$ as follows. 
\begin{eqnarray*}
	&&T_{SORT,s} \nonumber\\ &=& \sum_{j=1}^{K+1}\frac{64U^2\log{2NT}}{\left(\max{\left(\lambda,\delta_{s,j}\right)}\right)^2}\\
	&\leq& \sum_{j, \delta_{s,j} > \lambda}^{K+1}\frac{64U^2\log{2NT}}{\lambda^2} + \sum_{j, \delta_{s,j} < \lambda}^{K+1}\frac{64U^2\log{2NT}}{\lambda^2} \label{eq:lower_bound_delta}\\
	&=& \sum_{j=1}^{K+1}\frac{64U^2\log{2NT}}{\lambda^2}\\
	&=& \frac{64\left(K+1\right)U^2\log{2NT}}{\lambda^2}.  \label{eq:sort_time_bound}
\end{eqnarray*}
%    Where equation (\ref{eq:lower_bound_delta}) follows by reducing difference of expected reward of arms of the groups to precision threshold. Since time required for any run of SORT is at most by the right hand side of (\ref{eq:sort_time_bound}), the maximum time for sorting any group is bounded by the same expression.
Similarly,  we use Lemma \ref{Merge_requirements}  to find $\max_s\left(T_{MERGE_s}\right)$ as follows. 
\begin{eqnarray*}
	&&T_{MERGE,m} \nonumber\\
	&=& \sum_{j=1}^{K+1}\frac{64U^2\log{2NT}}{\left(\max{\left(\lambda,\delta_{m,j}\right)}\right)^2} \\
	&\leq& \sum_{j, \delta_{m,j} > \lambda}^{K+1}\frac{64U^2\log{2NT}}{\lambda^2} + \sum_{j, \delta_{m,j} < \lambda}^{K+1}\frac{64U^2\log{2NT}}{\lambda^2} \label{eq:lower_bound_delta_merge}\\
	&=& \sum_{j=1}^{K+1}\frac{64U^2\log{2NT}}{\lambda^2}\\
	&=& \frac{64\left(K+1\right)U^2\log{2NT}}{\lambda^2}.
\end{eqnarray*}
Total exploration time can now be bounded by using the values for maximum time taken for SORT and MERGE as,
\begin{eqnarray*}
	&&T_{exp} \nonumber\\
	&\leq& \frac{N}{K+1}\max_{s}{\left(T_{SORT,s}\right)} + \left(\frac{N}{K+1}-1\right)\max_{s}{\left(T_{SORT,s}\right)}\\
	&\leq& \frac{N}{K+1}\frac{64\left(K+1\right)U^2\log{2NT}}{\lambda^2} \nonumber\\&&+ \left(\frac{N}{K+1}-1\right)\frac{64\left(K+1\right)U^2\log{2NT}}{\lambda^2}\\
	&<& \frac{N}{K+1}\frac{64\left(K+1\right)U^2\log{2NT}}{\lambda^2} \nonumber\\&&+ \left(\frac{N}{K+1}\right)\frac{64\left(K+1\right)U^2\log{2NT}}{\lambda^2}\\
	&\leq& \frac{64NU^2\log{2NT}}{\lambda^2} + \frac{64NU^2\log{2NT}}{\lambda^2}\\
	&\leq& \frac{128NU^2\log{2NT}}{\lambda^2}.
\end{eqnarray*}

\section{Proof of Lemma \ref{exploitation_regret_bound}}\label{proof_exploitation_regret_bound}
We will bound the expected regret where the suboptimal action is returned by \NAM, which includes an error from Algorithm \ref{alg:da} or from Algorithm \ref{alg:ma}. Let the chosen action be ${{\bf \hat{a}}^*} = (\hat{a}^*_1, \cdots, \hat{a}^*_K)$ and  optimal action be ${\bf a}^* = ({a}^*_1, \cdots, {a}^*_K)$. Then, the gap in the actions is $P=\sqrt{\min_{\Pi}\{\sum_{i=1}^K ({\bf d}_{{ \hat{a}}^*_i} - {\bf d}_{{a}^*_{\Pi(i)}})^2\}}$, where $\Pi$ is any possible permutation of $\{1, \cdots, K\}$. Also, let $\Pi^*$ be the permutation that optimizes the above minimization. Using Assumption 4, and $U$ defined in Corollary \ref{double_sided_continuity}, we have 
\begin{eqnarray}
\mathbb{E}\left[r_{{\bf a}^*} - r_{{\bf \hat{a}}^*}\right] &\leq& U P  \label{eq:pbound}
\end{eqnarray}
\if 0
We also note that since ${\bf d}_{\bf a} \in [0,1]^K$ for any action ${\bf a}$, we also have from \ref{continuous_assumption}
\begin{eqnarray}
\mathbb{E}\left[r_{{\bf a}^*} - r_{{\bf \hat{a}}^*}\right] &\leq& U 
\sqrt{K}. 
\end{eqnarray}

Combining the two, we have
\begin{eqnarray}
\mathbb{E}\left[r_{{\bf a}^*} - r_{{\bf \hat{a}}^*}\right] &\leq& U \min(P, \sqrt{K})\\
&=&U\sqrt{K} \label{eq:bnd_1}
\end{eqnarray}
\fi

%{\bf Mridul, check the above and see - seems you will not get the additional $\sqrt{K}$. Your part is in commented below, and I am rewriting it separately. }

We consider two possible cases. Case 1 corresponds to the scenario where for some $i$, {$\mathbb{E}\left[{\bf d}_{{\hat{a}}^*_i}\right] - \mathbb{E}\left[{\bf d}_{{a}^*_{\Pi^*(i)}}\right] >\lambda$}. The second case is when for all $i$, {$\mathbb{E}\left[{\bf d}_{{\hat{a}}^*_i}\right] - \mathbb{E}\left[{\bf d}_{{a}^*_{\Pi^*(i)}}\right] \le \lambda$}.

% corresponding to whether $P> \lambda$ or $P\le \lambda$.

{\bf Case 1: For some $i$, {$\mathbb{E}\left[{\bf d}_{{\hat{a}}^*_i}\right] - \mathbb{E}\left[{\bf d}_{{a}^*_{\Pi^*(i)}}\right] >\lambda$}. } In this case, the incorrect action has arms which could have been separated without hitting the threshold $\lambda$, but did not do so because of an error in the SORT or MERGE subroutine. We note that since $\mathbb{E}\left[{\bf d}_{{\hat{a}}^*_i}\right]$ and $\mathbb{E}\left[{\bf d}_{{a}^*_{\Pi^*(i)}}\right]$ are both in $[0,1]$, $P\le \sqrt{K}$. The expected regret at time $t$  can be written as
%
%Let the chosen sub-optimal action be ${{\bf \hat{a}}^*}$ and true optimal action ${\bf a}^*$. The expected value of arms which are different in ${{\bf \hat{a}}^*}$ and ${\bf a}^*$ are at least $\lambda$. The maximum difference between expected of any two arms can be $1$ as the rewards are bounded in $[0,1]$. Then using assumption \ref{continuous_assumption} and equation (\ref{eq:cont_upper_defn}) we have,
%\begin{eqnarray}
%\mathbb{E}\left[r_{{\bf a}^*} - r_{{\bf \hat{a}}^*}\right] &\leq& U\big|\big|\mathbb{E}[{\bf d}_{{\bf a}^*}] - \mathbb{E}[{\bf d}_{{\bf \hat{a}}^*}]\big|\big|_2\\
%&\leq& U\sqrt{K} \label{eq:max_arm_separation}
%\end{eqnarray}
%Where (\ref{eq:max_arm_separation}) is obtained by making expectation of all $K$ arms in ${{\bf \hat{a}}^*}$  at max $1$ separated in expectation from all arms of ${{\bf a}^*}$. 
%
%
\begin{eqnarray}
\mathbb{E}\left[R(t)\right] &=& \mathbb{E}\left[r_{{\bf a}^*} - r_{{\bf \hat{a}}^*}\big|{\bf a}_t \neq {\bf a}^*\right]  \times P\left(\left\{{\bf a}_t \neq {\bf a}^*\right\}\right) \label{eq:conditional_regret}\\
&\leq& U\sqrt{K} \times P\left(\left\{{\bf a}_t \neq {\bf a}^*\right\}\right) \label{eq:cond_regret_expd}\\
&\le& \frac{U\sqrt{K}}{NT^2} \label{eq:cond_regret_final}, 
\end{eqnarray}
where (\ref{eq:cond_regret_expd}) follows from \eqref{eq:pbound} with $P\le \sqrt{K}$, and \eqref{eq:cond_regret_final} follows from \cref{algorithm_error_event_prob}.

{\bf Case 2:  For all $i$, {$\mathbb{E}\left[{\bf d}_{{\hat{a}}^*_i}\right] - \mathbb{E}\left[{\bf d}_{{a}^*_{\Pi^*(i)}}\right] \le \lambda$}.} In this case,  the SORT or MERGE subroutines will not be able to differentiate between the two actions. Thus, $P\left(\left\{{\bf a}_t \neq {\bf a}^*\right\}\right) $ will no longer be bounded by $1/{NT^2}$ in this case, since the individual rewards are bounded by $\lambda$, we have $P\le \lambda \sqrt{K}$.  Thus, we have 

%Where the difference between the expected value is less than precision threshold $\lambda$ and the SORT or MERGE subroutine were not able to identify the correct arm.

%Let the chosen sub-optimal action be ${{\bf \hat{a}}^*}$ and true optimal action ${\bf a}^*$. The expected value of arms which are different in ${{\bf \hat{a}}^*}$ and ${\bf a}^*$ are at max $\lambda$. Then using assumption \ref{continuous_assumption} and equation (\ref{eq:cont_upper_defn}) we have,
\begin{eqnarray}
\mathbb{E}\left[R(t)\right] &=& \mathbb{E}\left[r_{{\bf a}^*} - r_{{\bf \hat{a}}^*}\big|{\bf a}_t \neq {\bf a}^*\right] \times P\left(\left\{{\bf a}_t \neq {\bf a}^*\right\}\right) \label{eq:conditional_regret_good_event}\\
&\leq& U\lambda \sqrt{K} \times P\left(\left\{{\bf a}_t \neq {\bf a}^*\right\}\right) \label{eq:cond_regret_expd_good_event}\\
&\le & U\lambda \sqrt{K}, \label{eq:cond_regret_final_good_event}
\end{eqnarray}
where  (\ref{eq:cond_regret_expd_good_event}) follows from \eqref{eq:pbound} and $P\le \lambda \sqrt{K}$.

%the maximum regret any sub-optimal action can incur, and (\ref{eq:cond_regret_final_good_event}) comes from the fact that probability of any event cannot exceed $1$

Combining both the cases, the maximum regret the algorithm can incur is bounded by $U\lambda\sqrt{K} + \frac{U\sqrt{K}}{NT^2}$, thus proving the result.

\section{Complete Implementation of SORT and MERGE subroutines} \label{detail_sm_algos}
The detailed  SORT and MERGE subroutines can be seen in Algorithm \ref{alg:detailed_da}-\ref{alg:update_mean}. 
\begin{algorithm}[!htb]
	\small
	\begin{algorithmic}[1]
        \Procedure{Sort}{$G, \lambda$, T, N, K} \Comment{Group of $K+1$ arms, separation threshold}
		\State Initialize $g_i := {\bf a}^G_{-i}\ \ \forall\ 1\leq i\leq K+1$ \Comment{Rename to reduce notation clutter}
		\State Initialize $unsorted \leftarrow G$  \Comment{Set of unsorted arms}
		\State Initialize $G^*[1:K+1]\leftarrow 0$ \Comment{Ranked list of arms initialized with zeros}
		\State Initialize $t[1:K+1]\leftarrow 0$ \Comment{Array to store number of times each action is played}
        \State Initialize $\hat{\mu}_{g_i}\leftarrow 0$; $r\leftarrow 1$; $\Delta_r\leftarrow\frac{1}{2}$ $n_r =\frac{2\log{(TNK)}}{\Delta^2}$
	    \While{$\left(\Delta_r > \lambda\right)$ \textbf{and} $unsorted \neq \phi$}
            \For{$i \in unsorted$}
	                \State $\hat{\mu}_{g_i}, t[i] =$ \Call{Update\_Mean}{$\hat{\mu}_{g_i}, {g_i}, t[i], n_r$} \Comment{{Exploration happens here}}
            \EndFor
            \State $j = \{\arg $ sort$ (\hat{\mu}_{g_i})\}$
            \For{$1 \leq k \leq K+1$}
                \If{$\hat{\mu}_{g_{j[k]}} < \hat{\mu}_{g_{j[k-1]}} - 2\Delta_r$ \textbf{and} $\hat{\mu}_{g_{j[k]}} > \hat{\mu}_{g_{j[k+1]}} + 2\Delta_r$}
                    \State $G^*[K+1-(k-1)] = G[j[k]]$;  $unsorted = unsorted\setminus\{j[k]\}$
                \EndIf\Comment{Perform only relevant checks for $k\in\{1, K+1\}$}
            \EndFor
            \State $r=r+1$; $\Delta_r=\frac{\Delta_{r-1}}{2}$ ; $n_r =\frac{2\log{(TNK)}}{\Delta^2}$\Comment{Update round parameters}
        \EndWhile
        %\Else $j = \{\arg $ sort$ (\hat{\mu}_{g_i} + \Delta_r)\}$
        \For{$k\in unsorted$}
            \State $G^*[K+1-k] = G[j[k]]$;  $unsorted = unsorted\setminus\{j[k]\}$
        \EndFor
        \State \textbf{return} $G^*[1:K]$ \Comment{Optimal K arms increasing sorted increasingly in expected individual rewards}
        \EndProcedure
	\end{algorithmic}
	\caption{SORT}\label{alg:detailed_da}
\end{algorithm}
\begin{algorithm}[!htb]
	\small
	\begin{algorithmic}[1]
	    \Procedure{Merge}{$G_1, G_2, \lambda$, T, N, K}\Comment{Groups of $K$ arms each, and precision}
		\State Initialize $G^*[1:K]\leftarrow 0$ \Comment{Array of size K initialized with $0$ to store the optimal arms}
        \State Initialize $\hat{\mu}_{G_1} \leftarrow 0;\ \hat{\mu}_{A} \leftarrow 0$;  $i\leftarrow 1;\ j\leftarrow 1$
        \State Initialize $t_1\leftarrow0$; $r_1\leftarrow 1$; $\Delta_{r_1}\leftarrow\frac{1}{2}$ ; $n_{r_1} =\frac{2\log{(TNK)}}{\Delta_{r_1}^2}$
	    \For{$k=1:K$}
            \State Construct new action by replacing $i^{th}$ arm of $G_1$ by $j^{th}$ arm of $G_2$
            \begin{equation}
                    {\bf a}_{i,j} = \left(G_1 \setminus \{G_1(i)\}\right) \cup \{G_2(j)\}
                  \end{equation}
            \State Initialize $t_2\leftarrow0$; $r_2\leftarrow 1$; $\Delta_{r_2}\leftarrow\frac{1}{2}$ ; $n_{r_2} =\frac{2\log{(TNK)}}{\Delta_{r_2}^2}$ \Comment{For every new action constructed}
		    \While{$\left(\Delta_{r_2} > \lambda\right)$ \textbf{and} $\left(G^*[k] == 0\right)$}
                \State $\hat{\mu}_{{\bf a}^{G_1}}, t_1 =$ \Call{Update\_mean}{$\hat{\mu}_{{\bf a}^{G_1}}, {{\bf a}^{G_1}}, t_1, n_{r_1}$} \Comment{{Exploration happens here}}
                \State $\hat{\mu}_{{\bf a}_{i,j}}, t_2 =$ \Call{Update\_mean}{$\hat{\mu}_{{\bf a}_{i,j}}, {{{\bf a}_{i,j}}}, t_2, n_{r_2}$} \Comment{{Exploration happens here}}

                \If{$\hat{\mu}_{{\bf a}^{G_1}} < \hat{\mu}_{{\bf a}_{i,j}} - 2\Delta_{r_1}$}
                    \State $G^*[k] = G_2(j)$;  $j = j+1$
                \ElsIf{$\hat{\mu}_{{\bf a}^{G_1}} > \hat{\mu}_{{\bf a}_{i,j}} + 2\Delta_{r_1}$}
                        \State $G^*[k] = G_1(i)$;  $i = i+1$
                \EndIf
                \State $r_2=r_2+1$; $\Delta_{r_2}=\frac{\Delta_{r_2-1}}{2}$ ; $n_{r_2} =\frac{2\log{(TNK)}}{\Delta_{r_2}^2}$
                \If{$r_2 > r_1$}
                    \State $r_1=r_1+1$; $\Delta_{r_1}=\frac{\Delta_{r_1-1}}{2}$ ; $n_{r_1} =\frac{2\log{(TNK)}}{\Delta_{r_1}^2}$
                \EndIf
            \EndWhile
            \If{$G*[k] == 0$}\Comment{If the arm have less separation than $\lambda$}
                \If{$\hat{\mu}_{{\bf a}^{G_1}} < \hat{\mu}_{{\bf a}_{i,j}}$}
                    \State $G^*[k] = G_2(j)$;  $j = j+1$
                \ElsIf{$\hat{\mu}_{{\bf a}^{G_1}} > \hat{\mu}_{{\bf a}_{i,j}}$}
                        \State $G^*[k] = G_1(i)$;  $i = i+1$
                \EndIf
            \EndIf
            \State $k = k+1$
		\EndFor
        \State \textbf{return} $G^*$\Comment{Merged set of $G_1$, and $G_2$}
        \EndProcedure
	\end{algorithmic}
	\caption{MERGE }\label{alg:detailed_ma}
\end{algorithm}
\begin{algorithm} [!thb]
\caption{UPDATE\_MEAN}\label{alg:update_mean}
	\small
	\begin{algorithmic}[1]
	    \Procedure{Update\_mean}{$\mu$, ${\bf a}$, $t$, $n$}\Comment{{Explore action ${\bf a}$ by playing it $n-t$ times}}
        \While{$t \leq n$}
            \State $r_{{\bf a},t}$ = Reward collected by playing ${\bf a}$
            \State $\hat{\mu}_{\bf a} = \frac{t\times \hat{\mu}_{\bf a} + r_{{\bf a},t}}{t+1}$
            \State $t= t+1$
        \EndWhile
        \State {\bf return} $\hat{\mu}_{\bf a}, t$
        \EndProcedure
	\end{algorithmic}
\end{algorithm}
% \section{Analysis of reward in Section 5}\label{non_linear_evaluation}

% \begin{eqnarray}
% &&   \mathbb{E}\left[f\left(X_1, X_2,\cdots, X_K\right)\right]\nonumber\\
% &=& \mathbb{E}\left[\frac{2}{K(K+1)}\sum_{i = 1}^K\sum_{j \geq i}^K X_i X_j\right] \label{eq:exp_non_lin_def}\\
% &=& \frac{2}{K(K+1)}\sum_{i = 1}^K\sum_{j\geq i}^K\mathbb{E}\left[X_i X_j\right] \label{eq:exp_non_lin_sep}\\
% &=& \frac{2}{K(K+1)}\left(\sum_{i = 1}^K\mathbb{E}\left[X_i^2\right] + \sum_{i = 1}^K\sum_{j > i}^K\mathbb{E}\left[X_i X_j\right]\right) \label{eq:exp_non_lin_sep_square}\\
% &=& \frac{2}{K(K+1)}\left(\sum_{i = 1}^K\mathbb{E}\left[X_i^2\right] + \sum_{i = 1}^K\sum_{j > i}^K\mathbb{E}\left[X_i\right]\mathbb{E}\left[X_j\right]\right), \label{eq:app_exp_non_lin_sep_ij}
% \end{eqnarray}

% where  (\ref{eq:exp_non_lin_def}) is obtained by taking expectation on the function,  equation (\ref{eq:exp_non_lin_sep}) follows from the linearity of expectation,  (\ref{eq:exp_non_lin_sep_square}) is obtained by separating the terms in the summation, and (\ref{eq:app_exp_non_lin_sep_ij}) follows since $X_i$ and $X_j$ are independent. 

\section{Synthetic Evaluation Results for Bernoulli Rewards} \label{synthetic_evaluation}
In this section, we evaluate \NAM\ under multiple synthetic problem settings. We compare the result with improved UCB algorithm as described in \citep{auer2010ucb}. Since this paper provides the first result with non-linear reward functions for CMAB problem with bandit feedback, we compare with the UCB algorithm \citep{auer2010ucb} which is optimal for small $N$ and $K$ while having the regret scale with $\binom{N}{K}$. 

%Since this is the first result with non linear rewards, under bandit feedback, we compare it with the baseline for multi-armed bandits which scales as exponentially with $K$, the number of simultaneously choosen.

For evaluations, we ran the algorithm for $T = 10^6$ time steps and averaged over $30$ runs. We compare cumulative regret at each $t$ starting from $t=0$, which is defined as,
\begin{eqnarray*}
    W(t) = \sum^t_{t'=0}R(t')
\end{eqnarray*}

We consider two values of $N \in \{12, 24\}$. For $N=12$, we choose $K \in \{2, 3, 5\}$, while for $N=24$, we choose $K \in \{2, 3, 5, 7, 11\}$. Since the arms must have FSD over each other, we describe a  example single parameter distributions for the reward of each arm that have this property.  Reward of arm $i$ comes from the set $\{0,1\}$ and follows a Bernoulli distribution with parameter $p_i$, or 
        \begin{eqnarray*}
            X_i &\sim& Bern(p_i).
        \end{eqnarray*}
        We note that arm $i$ has FSD over arm $j$ if $p_i>p_j$. Thus, this reward distribution satisfies Assumption \ref{majorized_arms} as long as no two arms have same parameter, or $p_i\ne p_j$ for any $i\ne j$. Figure \ref{fig:bernoulli} plots $P(X\ge x)$ of the reward function for different values of $p_i$. We see that $P(X\ge x)$ is larger for the distribution with larger value of $p$, and for any two different values of $p$, there is $x$ (e.g., any $x\in (0,1]$) such that $P(X\ge x)$ are not the same thus showing that the reward distributions satisfy  Assumption \ref{majorized_arms}.

%In both reward distributions, there exists a stochastic dominance for each arm as defined in Assumption \ref{majorized_arms} as shown in the complementary cumulative distribution functions of the rewards in \cref{fig:majorisation}.
%and corresponding possible values of $K \in \{1, 2, 3, 5\}$ and $K \in \{1, 2, 3, 5, 7, 11\}$. In a particular evaluation setup, the rewards of each individual arms may follow the distributions described in section \ref{arm_distribution}. The rewards functions for each action is described in section \ref{action_function}

\begin{figure*}
    \centering
    \subfigure[Bernoulli Rewards]{
        % This file was created by matplotlib2tikz v0.6.18.
\begin{tikzpicture}[scale=0.75, every node/.style={scale=0.85}]

\definecolor{color1}{rgb}{0,0,0}
\definecolor{color2}{rgb}{0.75,0.75,0}
\definecolor{color3}{rgb}{0.75,0,0.75}

\begin{axis}[
grid=both,
grid style={white!50!black},
axis background/.style={fill=white},
axis line style={black},
x label style={at={(axis description cs:0.5,-0.025)},anchor=north},
y label style={at={(axis description cs:-0.1,.5)},anchor=south},
legend cell align={left},
legend entries={{$p_i$ = 0.1},{$p_i$ = 0.3},{$p_i$ = 0.5},{$p_i$ = 0.7},{$p_i$ = 0.9}},
legend style={at={(0.06,0.03)}, anchor=south west, draw=white!80.0!black, fill=white!99.80392156862746!black},
tick align=outside,
tick pos=left,
xlabel={x},
xmajorgrids,
xmin=-0.039, xmax=1.039,
ylabel={$P(X \geq x)$},
ymajorgrids,
ymin=-0.0598185, ymax=1.0400715
]
\addlegendimage{thick, mark=asterisk, green!50.0!black}
\addlegendimage{thick, mark=o, color1}
\addlegendimage{thick, mark=star, color2}
\addlegendimage{thick, mark=triangle, lightgray!62.22222222222222!black}
\addlegendimage{thick, mark=diamond, color3}
\addplot [thick, black]
table [row sep=\\]{%
-0.3000000000000000	1.0 \\
0.00000000000000000	1.0 \\
};
\addplot [thick, mark=*, black]
table [row sep=\\]{%
0.0000000000000000	1.0 \\
};
\addplot [thick, mark=o, green!50.0!black, solid]
table [row sep=\\]{%
0.00	0.09983 \\
};
\addplot [thick, mark=*, green!50.0!black, solid]
table [row sep=\\]{%
1.00	0.09983 \\
};
\addplot [thick, green!50.0!black, mark=asterisk, mark repeat=10]
table [row sep=\\]{%
0.01	0.09983 \\
0.03	0.09983 \\
0.05	0.09983 \\
0.07	0.09983 \\
0.09	0.09983 \\
0.11	0.09983 \\
0.13	0.09983 \\
0.15	0.09983 \\
0.17	0.09983 \\
0.19	0.09983 \\
0.21	0.09983 \\
0.23	0.09983 \\
0.25	0.09983 \\
0.27	0.09983 \\
0.29	0.09983 \\
0.31	0.09983 \\
0.33	0.09983 \\
0.35	0.09983 \\
0.37	0.09983 \\
0.39	0.09983 \\
0.41	0.09983 \\
0.43	0.09983 \\
0.45	0.09983 \\
0.47	0.09983 \\
0.49	0.09983 \\
0.51	0.09983 \\
0.53	0.09983 \\
0.55	0.09983 \\
0.57	0.09983 \\
0.59	0.09983 \\
0.61	0.09983 \\
0.63	0.09983 \\
0.65	0.09983 \\
0.67	0.09983 \\
0.69	0.09983 \\
0.71	0.09983 \\
0.73	0.09983 \\
0.75	0.09983 \\
0.77	0.09983 \\
0.79	0.09983 \\
0.81	0.09983 \\
0.83	0.09983 \\
0.85	0.09983 \\
0.87	0.09983 \\
0.89	0.09983 \\
0.91	0.09983 \\
0.93	0.09983 \\
0.95	0.09983 \\
0.97	0.09983 \\
0.99	0.09983 \\
};
\addplot [thick, mark=o, color1, solid]
table [row sep=\\]{%
0.00	0.29842 \\
};
\addplot [thick, mark=*, color1, solid]
table [row sep=\\]{%
1.00	0.29842 \\
};
\addplot [thick, color1, mark=o, mark repeat=10]
table [row sep=\\]{%
0.01	0.29842 \\
0.03	0.29842 \\
0.05	0.29842 \\
0.07	0.29842 \\
0.09	0.29842 \\
0.11	0.29842 \\
0.13	0.29842 \\
0.15	0.29842 \\
0.17	0.29842 \\
0.19	0.29842 \\
0.21	0.29842 \\
0.23	0.29842 \\
0.25	0.29842 \\
0.27	0.29842 \\
0.29	0.29842 \\
0.31	0.29842 \\
0.33	0.29842 \\
0.35	0.29842 \\
0.37	0.29842 \\
0.39	0.29842 \\
0.41	0.29842 \\
0.43	0.29842 \\
0.45	0.29842 \\
0.47	0.29842 \\
0.49	0.29842 \\
0.51	0.29842 \\
0.53	0.29842 \\
0.55	0.29842 \\
0.57	0.29842 \\
0.59	0.29842 \\
0.61	0.29842 \\
0.63	0.29842 \\
0.65	0.29842 \\
0.67	0.29842 \\
0.69	0.29842 \\
0.71	0.29842 \\
0.73	0.29842 \\
0.75	0.29842 \\
0.77	0.29842 \\
0.79	0.29842 \\
0.81	0.29842 \\
0.83	0.29842 \\
0.85	0.29842 \\
0.87	0.29842 \\
0.89	0.29842 \\
0.91	0.29842 \\
0.93	0.29842 \\
0.95	0.29842 \\
0.97	0.29842 \\
0.99	0.29842 \\
};
\addplot [thick, mark=o, color2, solid]
table [row sep=\\]{%
0.00	0.5006 \\
};
\addplot [thick, mark=*, color2, solid]
table [row sep=\\]{%
1.00	0.5006 \\
};
\addplot [thick, color2, mark=star, mark repeat=10]
table [row sep=\\]{%
0.01	0.5006 \\
0.03	0.5006 \\
0.05	0.5006 \\
0.07	0.5006 \\
0.09	0.5006 \\
0.11	0.5006 \\
0.13	0.5006 \\
0.15	0.5006 \\
0.17	0.5006 \\
0.19	0.5006 \\
0.21	0.5006 \\
0.23	0.5006 \\
0.25	0.5006 \\
0.27	0.5006 \\
0.29	0.5006 \\
0.31	0.5006 \\
0.33	0.5006 \\
0.35	0.5006 \\
0.37	0.5006 \\
0.39	0.5006 \\
0.41	0.5006 \\
0.43	0.5006 \\
0.45	0.5006 \\
0.47	0.5006 \\
0.49	0.5006 \\
0.51	0.5006 \\
0.53	0.5006 \\
0.55	0.5006 \\
0.57	0.5006 \\
0.59	0.5006 \\
0.61	0.5006 \\
0.63	0.5006 \\
0.65	0.5006 \\
0.67	0.5006 \\
0.69	0.5006 \\
0.71	0.5006 \\
0.73	0.5006 \\
0.75	0.5006 \\
0.77	0.5006 \\
0.79	0.5006 \\
0.81	0.5006 \\
0.83	0.5006 \\
0.85	0.5006 \\
0.87	0.5006 \\
0.89	0.5006 \\
0.91	0.5006 \\
0.93	0.5006 \\
0.95	0.5006 \\
0.97	0.5006 \\
0.99	0.5006 \\
};
\addplot [thick, mark=o, lightgray!62.22222222222222!black, solid]
table [row sep=\\]{%
0.00	0.69951 \\
};
\addplot [thick, mark=*, lightgray!62.22222222222222!black, solid]
table [row sep=\\]{%
1.00	0.69951 \\
};
\addplot [thick, lightgray!62.22222222222222!black, mark=triangle, mark repeat=10]
table [row sep=\\]{%
0.01	0.69951 \\
0.03	0.69951 \\
0.05	0.69951 \\
0.07	0.69951 \\
0.09	0.69951 \\
0.11	0.69951 \\
0.13	0.69951 \\
0.15	0.69951 \\
0.17	0.69951 \\
0.19	0.69951 \\
0.21	0.69951 \\
0.23	0.69951 \\
0.25	0.69951 \\
0.27	0.69951 \\
0.29	0.69951 \\
0.31	0.69951 \\
0.33	0.69951 \\
0.35	0.69951 \\
0.37	0.69951 \\
0.39	0.69951 \\
0.41	0.69951 \\
0.43	0.69951 \\
0.45	0.69951 \\
0.47	0.69951 \\
0.49	0.69951 \\
0.51	0.69951 \\
0.53	0.69951 \\
0.55	0.69951 \\
0.57	0.69951 \\
0.59	0.69951 \\
0.61	0.69951 \\
0.63	0.69951 \\
0.65	0.69951 \\
0.67	0.69951 \\
0.69	0.69951 \\
0.71	0.69951 \\
0.73	0.69951 \\
0.75	0.69951 \\
0.77	0.69951 \\
0.79	0.69951 \\
0.81	0.69951 \\
0.83	0.69951 \\
0.85	0.69951 \\
0.87	0.69951 \\
0.89	0.69951 \\
0.91	0.69951 \\
0.93	0.69951 \\
0.95	0.69951 \\
0.97	0.69951 \\
0.99	0.69951 \\
};
\addplot [thick, mark=o, color3, solid]
table [row sep=\\]{%
0.00	0.90006 \\
};
\addplot [thick, mark=*, color3, solid]
table [row sep=\\]{%
1.00	0.90006 \\
};
\addplot [thick, color3, mark=diamond, mark repeat=10]
table [row sep=\\]{%
0.01	0.90006 \\
0.03	0.90006 \\
0.05	0.90006 \\
0.07	0.90006 \\
0.09	0.90006 \\
0.11	0.90006 \\
0.13	0.90006 \\
0.15	0.90006 \\
0.17	0.90006 \\
0.19	0.90006 \\
0.21	0.90006 \\
0.23	0.90006 \\
0.25	0.90006 \\
0.27	0.90006 \\
0.29	0.90006 \\
0.31	0.90006 \\
0.33	0.90006 \\
0.35	0.90006 \\
0.37	0.90006 \\
0.39	0.90006 \\
0.41	0.90006 \\
0.43	0.90006 \\
0.45	0.90006 \\
0.47	0.90006 \\
0.49	0.90006 \\
0.51	0.90006 \\
0.53	0.90006 \\
0.55	0.90006 \\
0.57	0.90006 \\
0.59	0.90006 \\
0.61	0.90006 \\
0.63	0.90006 \\
0.65	0.90006 \\
0.67	0.90006 \\
0.69	0.90006 \\
0.71	0.90006 \\
0.73	0.90006 \\
0.75	0.90006 \\
0.77	0.90006 \\
0.79	0.90006 \\
0.81	0.90006 \\
0.83	0.90006 \\
0.85	0.90006 \\
0.87	0.90006 \\
0.89	0.90006 \\
0.91	0.90006 \\
0.93	0.90006 \\
0.95	0.90006 \\
0.97	0.90006 \\
0.99	0.90006 \\
};
\addplot [thick, mark=o, black]
table [row sep=\\]{%
1.0000000000000000 	0.0 \\
};
\addplot [thick, black]
table [row sep=\\]{%
1.0000000000000000 	0.0 \\
1.3000000000000000 	0.0 \\
};
\end{axis}

\end{tikzpicture}
      %  \caption{Bernoulli Rewards}
        \label{fig:bernoulli}}
    \subfigure[Transformed Rewards]{
        % This file was created by matplotlib2tikz v0.6.18.
\begin{tikzpicture}[scale=0.75, every node/.style={scale=0.85}]

\definecolor{color1}{rgb}{0,0,0}
\definecolor{color2}{rgb}{0.75,0.75,0}
\definecolor{color3}{rgb}{0.75,0,0.75}

\begin{axis}[
grid=both,
grid style={white!50!black},
axis background/.style={fill=white},
axis line style={black},
x label style={at={(axis description cs:0.5,-0.025)},anchor=north},
y label style={at={(axis description cs:-0.1,.5)},anchor=south},
legend cell align={left},
legend entries={{$\lambda_i$ = 1.0},{$\lambda_i$ = 3.0},{$\lambda_i$ = 5.0},{$\lambda_i$ = 7.0},{$\lambda_i$ = 9.0}},
legend style={at={(0.03,0.03)}, anchor=south west, draw=white!80.0!black, fill=white!99.80392156862746!black},
tick align=outside,
tick pos=left,
xlabel={x},
xmajorgrids,
xmin=-0.0392780323685732, xmax=1.03278054711694,
ylabel={$P(X \geq x)$},
ymajorgrids,
ymin=-0.049849, ymax=1.048149
]
\addlegendimage{thick, green!50.0!black, mark=asterisk}
\addlegendimage{thick, color1, mark=o}
\addlegendimage{thick, color2, mark=star}
\addlegendimage{thick, lightgray!62.22222222222222!black, mark=triangle}
\addlegendimage{thick, color3, mark=diamond}
\addplot [thick, color3]
table [row sep=\\]{%
-0.3000000000000000	1.0 \\
0.00000000000000000	1.0 \\
0.00998401282922471	0.99824 \\
};
\addplot [thick, green!50.0!black, mark=asterisk, mark repeat=10]
table [row sep=\\]{%
0.00945190306258633	0.98607 \\
0.0283411862958662	0.95711 \\
0.0472304695291461	0.92905 \\
0.0661197527624261	0.90093 \\
0.085009035995706	0.87436 \\
0.103898319228986	0.84834 \\
0.122787602462266	0.82245 \\
0.141676885695546	0.79774 \\
0.160566168928826	0.774 \\
0.179455452162106	0.74955 \\
0.198344735395385	0.72573 \\
0.217234018628665	0.70265 \\
0.236123301861945	0.67912 \\
0.255012585095225	0.65696 \\
0.273901868328505	0.63442 \\
0.292791151561785	0.61227 \\
0.311680434795065	0.58986 \\
0.330569718028345	0.56768 \\
0.349459001261625	0.54503 \\
0.368348284494905	0.52232 \\
0.387237567728185	0.49982 \\
0.406126850961464	0.47765 \\
0.425016134194744	0.45601 \\
0.443905417428024	0.43448 \\
0.462794700661304	0.4118 \\
0.481683983894584	0.38993 \\
0.500573267127864	0.3683 \\
0.519462550361144	0.34625 \\
0.538351833594424	0.32515 \\
0.557241116827704	0.30272 \\
0.576130400060984	0.28038 \\
0.595019683294264	0.2588 \\
0.613908966527543	0.23701 \\
0.632798249760823	0.21562 \\
0.651687532994103	0.19437 \\
0.670576816227383	0.17317 \\
0.689466099460663	0.15287 \\
0.708355382693943	0.13289 \\
0.727244665927223	0.11381 \\
0.746133949160503	0.09557 \\
0.765023232393783	0.07733 \\
0.783912515627063	0.06034 \\
0.802801798860343	0.04527 \\
0.821691082093623	0.03205 \\
0.840580365326902	0.02078 \\
0.859469648560182	0.01216 \\
0.878358931793462	0.00624 \\
0.897248215026742	0.00246 \\
0.916137498260022	0.00058 \\
0.935026781493302	6e-05 \\
};
\addplot [thick, color1, mark=o, mark repeat=10]
table [row sep=\\]{%
0.00984086645688068	0.99506 \\
0.0294771690221534	0.98508 \\
0.0491134715874262	0.97533 \\
0.0687497741526989	0.96462 \\
0.0883860767179717	0.95578 \\
0.108022379283244	0.94562 \\
0.127658681848517	0.93542 \\
0.14729498441379	0.9257 \\
0.166931286979063	0.91591 \\
0.186567589544335	0.90566 \\
0.206203892109608	0.89555 \\
0.225840194674881	0.88516 \\
0.245476497240154	0.87456 \\
0.265112799805426	0.86402 \\
0.284749102370699	0.85299 \\
0.304385404935972	0.84275 \\
0.324021707501245	0.83154 \\
0.343658010066517	0.81964 \\
0.36329431263179	0.80776 \\
0.382930615197063	0.7957 \\
0.402566917762336	0.78312 \\
0.422203220327608	0.7707 \\
0.441839522892881	0.75806 \\
0.461475825458154	0.74432 \\
0.481112128023427	0.73081 \\
0.5007484305887	0.71641 \\
0.520384733153972	0.70157 \\
0.540021035719245	0.68573 \\
0.559657338284518	0.6693 \\
0.579293640849791	0.65193 \\
0.598929943415063	0.6339 \\
0.618566245980336	0.61443 \\
0.638202548545609	0.59399 \\
0.657838851110882	0.57251 \\
0.677475153676154	0.54859 \\
0.697111456241427	0.52421 \\
0.7167477588067	0.49884 \\
0.736384061371973	0.46977 \\
0.756020363937245	0.43848 \\
0.775656666502518	0.40493 \\
0.795292969067791	0.36864 \\
0.814929271633064	0.32917 \\
0.834565574198336	0.2861 \\
0.854201876763609	0.23998 \\
0.873838179328882	0.19199 \\
0.893474481894155	0.13962 \\
0.913110784459427	0.08833 \\
0.9327470870247	0.04266 \\
0.952383389589973	0.01134 \\
0.972019692155246	0.00057 \\
};
\addplot [thick, color2, mark=star, mark repeat=10]
table [row sep=\\]{%
0.00991417808320172	0.99674 \\
0.0297218639285136	0.9909 \\
0.0495295497738254	0.98468 \\
0.0693372356191373	0.97844 \\
0.0891449214644491	0.97251 \\
0.108952607309761	0.96658 \\
0.128760293155073	0.96024 \\
0.148567979000385	0.95376 \\
0.168375664845696	0.94739 \\
0.188183350691008	0.94103 \\
0.20799103653632	0.93441 \\
0.227798722381632	0.92858 \\
0.247606408226944	0.9217 \\
0.267414094072256	0.91485 \\
0.287221779917568	0.90781 \\
0.307029465762879	0.90085 \\
0.326837151608191	0.89363 \\
0.346644837453503	0.88638 \\
0.366452523298815	0.87898 \\
0.386260209144127	0.87102 \\
0.406067894989439	0.8626 \\
0.425875580834751	0.85457 \\
0.445683266680062	0.84587 \\
0.465490952525374	0.83615 \\
0.485298638370686	0.82671 \\
0.505106324215998	0.8169 \\
0.52491401006131	0.80629 \\
0.544721695906622	0.79497 \\
0.564529381751933	0.78293 \\
0.584337067597245	0.77004 \\
0.604144753442557	0.75672 \\
0.623952439287869	0.74219 \\
0.643760125133181	0.72613 \\
0.663567810978493	0.70999 \\
0.683375496823804	0.69261 \\
0.703183182669116	0.67255 \\
0.722990868514428	0.65116 \\
0.74279855435974	0.6276 \\
0.762606240205052	0.60074 \\
0.782413926050364	0.57192 \\
0.802221611895676	0.53881 \\
0.822029297740987	0.50087 \\
0.841836983586299	0.45653 \\
0.861644669431611	0.4072 \\
0.881452355276923	0.34701 \\
0.901260041122235	0.27766 \\
0.921067726967547	0.19995 \\
0.940875412812859	0.11625 \\
0.96068309865817	0.03954 \\
0.980490784503482	0.00154 \\
};
\addplot [thick, lightgray!62.22222222222222!black, mark=triangle, mark repeat=10]
table [row sep=\\]{%
0.0099869622251715	0.99777 \\
0.0298242967855	0.99339 \\
0.0496616313458284	0.98895 \\
0.0694989659061569	0.98451 \\
0.0893363004664853	0.98028 \\
0.109173635026814	0.97548 \\
0.129010969587142	0.97105 \\
0.148848304147471	0.96681 \\
0.168685638707799	0.96205 \\
0.188522973268128	0.95734 \\
0.208360307828456	0.95271 \\
0.228197642388784	0.94837 \\
0.248034976949113	0.9432 \\
0.267872311509441	0.93801 \\
0.28770964606977	0.93284 \\
0.307546980630098	0.92755 \\
0.327384315190427	0.92242 \\
0.347221649750755	0.9166 \\
0.367058984311084	0.91064 \\
0.386896318871412	0.90483 \\
0.40673365343174	0.89891 \\
0.426570987992069	0.8923 \\
0.446408322552397	0.88565 \\
0.466245657112726	0.87856 \\
0.486082991673054	0.87174 \\
0.505920326233383	0.86365 \\
0.525757660793711	0.85498 \\
0.54559499535404	0.84607 \\
0.565432329914368	0.83715 \\
0.585269664474697	0.828 \\
0.605106999035025	0.81727 \\
0.624944333595353	0.80657 \\
0.644781668155682	0.79439 \\
0.66461900271601	0.7814 \\
0.684456337276339	0.76692 \\
0.704293671836667	0.75089 \\
0.724131006396996	0.73354 \\
0.743968340957324	0.71369 \\
0.763805675517653	0.69118 \\
0.783643010077981	0.66632 \\
0.803480344638309	0.63741 \\
0.823317679198638	0.60366 \\
0.843155013758966	0.56455 \\
0.862992348319295	0.5178 \\
0.882829682879623	0.46319 \\
0.902667017439952	0.39583 \\
0.92250435200028	0.31094 \\
0.942341686560609	0.20741 \\
0.962179021120937	0.09169 \\
0.982016355681265	0.00641 \\
};
\addplot [thick, color3, mark=diamond, mark repeat=10]
table [row sep=\\]{%
0.00998401282922471	0.99824 \\
0.0298629230099707	0.99473 \\
0.0497418331907166	0.99131 \\
0.0696207433714626	0.9878 \\
0.0894996535522086	0.9844 \\
0.109378563732955	0.98113 \\
0.129257473913701	0.97775 \\
0.149136384094446	0.9742 \\
0.169015294275192	0.97066 \\
0.188894204455938	0.96693 \\
0.208773114636684	0.96283 \\
0.22865202481743	0.95912 \\
0.248530934998176	0.95516 \\
0.268409845178922	0.95114 \\
0.288288755359668	0.94741 \\
0.308167665540414	0.94328 \\
0.32804657572116	0.93909 \\
0.347925485901906	0.93435 \\
0.367804396082652	0.92959 \\
0.387683306263398	0.92491 \\
0.407562216444144	0.92045 \\
0.42744112662489	0.91511 \\
0.447320036805636	0.90949 \\
0.467198946986382	0.90412 \\
0.487077857167128	0.89888 \\
0.506956767347874	0.89301 \\
0.52683567752862	0.88658 \\
0.546714587709366	0.87948 \\
0.566593497890112	0.87204 \\
0.586472408070858	0.86381 \\
0.606351318251604	0.85548 \\
0.62623022843235	0.84585 \\
0.646109138613096	0.83554 \\
0.665988048793842	0.82507 \\
0.685866958974587	0.81365 \\
0.705745869155334	0.80056 \\
0.725624779336079	0.78588 \\
0.745503689516825	0.76848 \\
0.765382599697571	0.74935 \\
0.785261509878317	0.72813 \\
0.805140420059063	0.70328 \\
0.825019330239809	0.67426 \\
0.844898240420555	0.63972 \\
0.864777150601301	0.59785 \\
0.884656060782047	0.5459 \\
0.904534970962793	0.4805 \\
0.924413881143539	0.39376 \\
0.944292791324285	0.28172 \\
0.964171701505031	0.1398 \\
0.984050611685777	0.01183 \\
};
\addplot[thick, color3]
table[row sep=\\]{
0.984050611685777	0.01183 \\
0.989050611685777	0.00607 \\
0.994019692155246	0.00307 \\
0.999019692155246	0.00207 \\
0.999519692155246	0.00127 \\
1.00000000000000	0.00000 \\
1.300000000000000	0.00000 \\
};
\end{axis}

\end{tikzpicture}
      %  \caption{Transformed Rewards}
        \label{fig:exponential}}
    \caption{First Order Stochastic Dominance on reward distribution}\label{fig:majorisation}
\end{figure*}

%\subsection{Reward functions for actions} \label{action_function}
We consider two different types of reward function to evaluate the performance of \NAM. The two reward functions are the sum of rewards of each arm, and the maximum of rewards of each arm, In the following, we will describe the functions and evaluate them for one of the distributions of the rewards from each arm.

%for linear combination and various non linear combination of the rewards. Results for these settings are presented in figure \ref{fig:Sum_of_rewards}, figure \ref{fig:Max_of_rewards}, and figure \ref{fig:cvx_of_rewards}. Dashed lines represents regrets from UCB algorithm, and solid lines represents regrets from \NAM described in \cref{alg:dama}.

\subsection{Sum of Bernoulli arm rewards} \label{sum_rewards}
Assume that a company wishes to do a campaign of the product and chooses $K$ out of available $N$ sub-campaigns each day. Let the reward of  sub-campaign $i$, $X_i$, be Bernoulli with parameter $p_i$, which is unknown. Further, let the reward that the company receives is how the overall company sales progressed, thus receiving the aggregate reward as $\sum_{i = 1}^{K}X_i$. To normalize the received reward, we let $r = \frac{1}{K}\sum_{i = 1}^{K}X_i$. We assume that the individual arm rewards are not observed by the company, while the overall progress of the campaign can be seen. Another application is showing $K$ out of $N$ advertisements to the user webpage, where the reward is in a form of whether user clicks the ad to go to the product page. The aggregate reward is the total number of clicks, and we assume that individual click information is not available.

%The expected reward of an action is a sum of the expected value of individual rewards because of the linearity of the definition of reward.

%We consider a case where agent is trying to maximize the number of clicks on the $K$ out of $N$ advertisements displayed because of screen size limitations. Or a case where agent has to choose $K$ out of $N$ portals to display an ad because of budget constraints. Since the reward for each arm can take discrete values, we use the reward distribution as described in \ref{bernoulli}. The reward is $1$ in case an ad is clicked and $0$ otherwise. In both cases the reward is the total number of clicks made on each ad defined as $r = \frac{1}{K}\sum_{i = 1}^{K}X_i$.
Thus, we have the expected reward as 
\begin{eqnarray*}
    \mathbb{E}[r] &=& \mathbb{E}\left[\frac{1}{K}\sum_{i = 1}^KX_i\right]\\
    &=& \frac{1}{K}\sum_{i = 1}^K\mathbb{E}\left[X_i\right]\\
    &=& \frac{1}{K}\sum_{i = 1}^Kp_i
\end{eqnarray*}
Since the combined reward is sum of all individual rewards, the expected reward is strictly increasing with respect to the expected rewards of the individual arms. Prior works in click optimization assume knowledge of clicks on individual advertisements and thus are semi-bandits \citep{kveton2018bubblerank}. 

Recently \citep{rejwan2020top} proposed CSAR algorithm to find top $K$ arms which solves this problem setup efficiently using Hadamard matrices. CSAR algorithm, similar to ours, divide the set of arms into groups of size $2K$. The algorithm, then estimates the individual arm rewards of the $2K$ arms in each group using Hadamard matrices of size $2K$. Each column maps to an arm in the group and for each row, an action can now be constructed by selected all arms corresponding to $+1$ elements in the row or the arms corresponding to $-1$ elements in the row.

We compare our \NAM\ with UCB algorithm and CSAR algorithm for $N = 45$ and various values of $K$ for $T= 10^6$ steps. For CSAR algorithm we select $K \in \{2, 4, 8\}$ for easy construction of Hadamard matrices of size $2K$. For UCB algorithm we select $K\in\{2, 4\}$ for easy construction of the combinatorial action space of ${{N}\choose{K}}$. We plot average cumulative regret over $25$ independent runs for fixed value of individual arm rewards sampled randomly uniformly from $[0,1]$.

Figure \ref{fig:lin_bandit_comparison} shows the comparison results between \NAM, CSAR algorithm, and UCB algorithm. We note that for $K=2$, UCB algorithm outperforms both CSAR algorithm and \NAM\ algorithm. This is because for small values of $K$, UCB can explore all $\binom{N}{K}$ actions faster as UCB does not estimate individual arm rewards as CSAR algorithm, and UCB can eliminate sub optimal arms from direct comparison with the best arm unlike \NAM\ algorithm. We note that in the cumulative regret of CSAR algorithm rises sharply for $K=2$ in Figure \ref{fig:linbandit_K_2}. We suspect that this is because of a large number of samples might be required to eliminate a sub-optimal arm. Since our implementation of the algorithm samples a group consecutively for $m = 1/2^r$ times in round $r$, we see a jump instead of a smooth rise. Such behaviour in not visible for $K=4$ or $K=8$ because we suspect that the elimination of sub-optimal arms might have been quicker compared for the case of $K=2$ because of the large gap between arms.

Also we note that \NAM\ does not perform as good as CSAR algorithm. We note that this difference is because CSAR estimates expected rewards of individual arms. CSAR algorithm can construct optimal action from the estimates and pull the optimal arm more frequently. \NAM\ eliminates arms slowly compared to CSAR algorithm, as CSAR compares each arm from the $K^{th}$ best arm which CMAB cannot perform. However, we note that even for $K=4$, both CSAR and \NAM\ algorithm outperform UCB algorithm by significant margin. Also, for $K=8$, we note that UCB algorithm would not even finish exploring ${{45}\choose{8}} = 215553195$ arms in $10^6$ time steps. Hence a comparison with UCB for $K=8$ is futile.

% Figure \ref{fig:Sum_of_rewards} shows the comparison of \NAM\ to the UCB based algorithm. For both values of $N$, we see that the proposed algorithm performs worse than UCB for $K=2$.  For $N=24$ and  $K>2$, \NAM\ significantly outperforms UCB, and the performance difference improves with increasing $K$. This is because as $K$ increases, a much larger value of $T$ is needed for UCB to learn all $\binom{N}{K}$ actions. Thus, the regret for UCB algorithm increases as $K$ increases. For $N=12$, $K=3$ still gives a manageable value of $\binom{N}{K}$ and thus UCB still outperforms \NAM. However, for $N=12$ and $K=5$, \NAM\ significantly outperforms UCB since the complexity in terms of combinatorial becomes dominant. This is in lines with the discussion in Section \ref{disc}, where it was explained that the proposed algorithm outperforms UCB in the regime where $T<\Tilde{O}\left(\frac{e^{3K}N^{3K-2}}{K^{3K+3}}\right)$. Thus, for larger $N$ and $K$, $N^{3K-2}$ is large and the proposed algorithm significantly outperforms the baseline. \newline
% {We note that this problem could also be solved using linear bandits achieving a better regret bound ($\tilde{O}\left(\sqrt{NT}\right)$). We do not present comparison with linear bandits, as we note that the comparisons primarily are for non-linear functions with bandit rewards and only algorithm we can compare for those settings is UCB by \cite{auer2010ucb}.}

%As the value of $K$ increases, UCB algorithm is not able to sample all the actions within the time available to the algorithm.

\begin{figure}[!htbp]
     \centering
        \subfigure[$K=2$]{
          %  \centering
            % This file was created by tikzplotlib v0.8.2.
\begin{tikzpicture}[thick,scale=0.57, every node/.style={scale=0.85}]

\definecolor{color0}{rgb}{0,0.749019607843137,1}

\begin{axis}[
legend cell align={left},
legend style={at={(0.03,0.97)}, anchor=north west, draw=white!80.0!black},
tick align=outside,
tick pos=left,
x grid style={white!69.01960784313725!black},
xlabel={\(\displaystyle t\)},
xmin=-48000, xmax=1008000,
xtick style={color=black},
y grid style={white!69.01960784313725!black},
ylabel={\(\displaystyle W(t)\)},
ymin=-22714.8785, ymax=136115.6685,
ytick style={color=black}
]
\addplot [semithick, red]
table {%
0 0.36
40000 19943.1
80000 36254.44
120000 48843.56
160000 59965.46
200000 67170.12
240000 69961.86
280000 74167.4
320000 79565.36
360000 84691.22
400000 89588.54
440000 93501.28
480000 97193.46
520000 100907
560000 104605.58
600000 108321.72
640000 112028.44
680000 115665.1
720000 117314.8
760000 118138.52
800000 118953.9
840000 119767.68
880000 120593.62
920000 121425.24
960000 122253.74
};
\addlegendentry{CSAR}
\addplot [semithick, green!50.19607843137255!black]
table {%
0 0.96
40000 21387.38
80000 39291.42
120000 55676.84
160000 80141.76
200000 87219.44
240000 90726.92
280000 95090.58
320000 98931.84
360000 102900.04
400000 105300.3
440000 108701.58
480000 111584.86
520000 114232.88
560000 115663.8
600000 116576.42
640000 120516.44
680000 123064.86
720000 124044.78
760000 125468.52
800000 127107.62
840000 128476.94
880000 129585.72
920000 129621.22
960000 129634.14
};
\addlegendentry{CMAB-SM}
\addplot [semithick, color0]
table {%
0 0.96
40000 19668.2
80000 34446.3
120000 45412.42
160000 50773.2
200000 53686.92
240000 55285.84
280000 56366.06
320000 56897.2
360000 57362.6
400000 57639.4
440000 57902.92
480000 58161.68
520000 58327
560000 58476.78
600000 58610.98
640000 58751.52
680000 58880.76
720000 58977.06
760000 58979.58
800000 58979.2
840000 58979.94
880000 58981
920000 58978.2
960000 58980.04
};
\addlegendentry{UCB}
\end{axis}

\end{tikzpicture}
       %     \caption{$K=2$}
            \label{fig:linbandit_K_2}}
         \subfigure[$K=4$]{
        %    \centering
            % This file was created by tikzplotlib v0.8.2.
\begin{tikzpicture}[thick,scale=0.57, every node/.style={scale=0.85}]

\definecolor{color0}{rgb}{0,0.749019607843137,1}

\begin{axis}[
legend cell align={left},
legend style={at={(0.03,0.97)}, anchor=north west, draw=white!80.0!black},
tick align=outside,
tick pos=left,
x grid style={white!69.01960784313725!black},
xlabel={\(\displaystyle t\)},
xmin=-48000, xmax=1008000,
xtick style={color=black},
y grid style={white!69.01960784313725!black},
ylabel={\(\displaystyle W(t)\)},
ymin=-22714.8785, ymax=477023.6685,
ytick style={color=black}
]
\addplot [semithick, red]
table {%
0 0.51
40000 18825.05
80000 35080.59
120000 44838.93
160000 56978.27
200000 68812.59
240000 75249.7
280000 80097.09
320000 84949.89
360000 89496.61
400000 91117.94
440000 92749.29
480000 94368.29
520000 96030.72
560000 97738.42
600000 99440.74
640000 101140.65
680000 102840.61
720000 104541.8
760000 106244.36
800000 107943.82
840000 109650.46
880000 111352.03
920000 113052.05
960000 114750.51
};
\addlegendentry{CSAR}
\addplot [semithick, green!50.19607843137255!black]
table {%
0 0.95
40000 18202.18
80000 39689.28
120000 58342.65
160000 73116.98
200000 90055.97
240000 114408.43
280000 127794.6
320000 140785.04
360000 152529.68
400000 159147.93
440000 165137.51
480000 170778.36
520000 175258.83
560000 180015.43
600000 184436.66
640000 187943.66
680000 190975.17
720000 193855.76
760000 196122.7
800000 202387.47
840000 206025.1
880000 206424.87
920000 206637.59
960000 209294.67
};
\addlegendentry{CMAB-SM}
\addplot [semithick, color0]
table {%
0 0.95
40000 18490
80000 38560.99
120000 57030.2
160000 76760.84
200000 94637.5
240000 113776.61
280000 132505.07
320000 151684.51
360000 170009.9
400000 189893.51
440000 208105.13
480000 226606.82
520000 246697.14
560000 265182.46
600000 284215.81
640000 302759.58
680000 322395.65
720000 340879.13
760000 360671.74
800000 378039.67
840000 397722.13
880000 416442.41
920000 435154.76
960000 454308.28
};
\addlegendentry{UCB}
\end{axis}

\end{tikzpicture}
         %   \caption{$K=4$}
            \label{fig:linbandit_K_4}}
       \subfigure[$K=8$]{
         %   \centering
            % This file was created by tikzplotlib v0.8.2.
\begin{tikzpicture}[thick,scale=0.57, every node/.style={scale=0.85}]

\begin{axis}[
legend cell align={left},
legend style={at={(0.03,0.97)}, anchor=north west, draw=white!80.0!black},
tick align=outside,
tick pos=left,
x grid style={white!69.01960784313725!black},
xlabel={\(\displaystyle t\)},
xmin=-48000, xmax=1008000,
xtick style={color=black},
y grid style={white!69.01960784313725!black},
ylabel={\(\displaystyle W(t)\)},
ymin=-11643.78425, ymax=244530.24925,
ytick style={color=black}
]
\addplot [semithick, red]
table {%
0 0.49
40000 17096.235
80000 32244.37
120000 42530.285
160000 54459.535
200000 67948.49
240000 78335.88
280000 87414.525
320000 91819.85
360000 95699.665
400000 99578.1
440000 103464.395
480000 107351.91
520000 110924.015
560000 114162.72
600000 116818.89
640000 119194.945
680000 121573.405
720000 123941.585
760000 126322.44
800000 128708.02
840000 131088.43
880000 133402.8
920000 135670.625
960000 137932.66
};
\addlegendentry{CSAR}
\addplot [semithick, green!50.19607843137255!black]
table {%
0 0.93
40000 17529.04
80000 34397.3
120000 49857.145
160000 64316.095
200000 81479.99
240000 101529.15
280000 119040.44
320000 135030.195
360000 150326.3
400000 166032.73
440000 181218.52
480000 192011.97
520000 197721.635
560000 203454.935
600000 209039.47
640000 215157.05
680000 219967.745
720000 221783.72
760000 223580.865
800000 225914.825
840000 228394.665
880000 230462.605
920000 231649.19
960000 232885.975
};
\addlegendentry{CMAB-SM}
\end{axis}

\end{tikzpicture}
         %   \caption{$K=8$}
            \label{fig:linbandit_K_8}}
        \caption{Comparison results for CMAB-SM and CSAR Algorithm for combinatorial linear bandit reward as mean of individual arm rewards. %\hl{Mridul -- can you regenerate these? Y-axis label got clipped}
        }
        \label{fig:lin_bandit_comparison}
\end{figure}

% \begin{figure*}
% % 	\centering
%     \hspace{-0.4cm}
% 	\begin{subfigure}[b]{0.46\textwidth}
% 	 \input{figures/sum_N_24}
% 		\caption{N = 24}
% 		\label{fig:sum_N_24}
% 	\end{subfigure}\hspace{5mm}
% 	\begin{subfigure}[b]{0.46\textwidth}
% 	 \input{figures/sum_N_12}
% 		\caption{N = 12}
% 		\label{fig:sum_N_12}
% 	\end{subfigure}
%     \caption{Reward of actions is sum of rewards of individual arms as described in section \ref{sum_rewards}}\label{fig:Sum_of_rewards}
% \end{figure*}
\subsection{Maximum of Bernoulli rewards} \label{max_rewards}
We consider a case where agent is a recommendation system that shows a list of restaurants or hotels, and user provides feedback whether or not the list is useful. A user finds the list useful when she is able to get a recommendation suiting her requirements. We take the reward of individual arm to be discrete with value $1$ if the item was useful, and $0$ for the case where the item in list is not useful. We assume that the rewards follow Bernoulli distribution. Since the individual rewards are not observed, this is a bandit setting. Further, note that the maximum function is not a linear function. We will now show the strictly increasing property of the function. The expected reward of selecting $K$ arms is given as 

\begin{eqnarray}
    \mathbb{E}[r] &=& \mathbb{E}\left[\max(X_1, X_2, \cdots, X_K)\right] \label{eq:exp_max}\\
    &=& 1\left(1 - P\left({\bigcap_{i = 1}^K\{X_i = 0\}}\right)\right) \nonumber\\&&+ 0\left(P\left(\bigcap_{i = 1}^K\{X_i = 0\}\right)\right) \label{eq:exp_max_inter_X_i}\\
    &=& 1\left(1 -  \prod_{i = 1}^K\left(1 - p_i\right)\right) + 0\left(\prod_{i = 1}^K\left(1 - p_i\right)\right) \label{eq:exp_max_prod_X_i}\\
    &=& 1 -  \prod_{i = 1}^K(1 - p_i),  \label{eq:exp_max_final}
\end{eqnarray}
where  (\ref{eq:exp_max}) is the expected value of the function of individual rewards,  (\ref{eq:exp_max_inter_X_i}) follows from the fact that individual rewards are Bernoulli distributed and their maximum is zero only when all the individual rewards are zero, (\ref{eq:exp_max_prod_X_i}) holds since the rewards are independent of each other.

{\cite{gopalan14thompson} present a unique way to solve this problem in Section 4.2 of of their paper. They consider the success probability $p_i$ of arm $i$ comes from the set $\left\{1-\beta^R, 1-\beta^{R-1}, \cdots, 1-\beta \right\}$ for all $i \in \{1, 2,\cdots, N\}$ and $\beta \in (0,1)$ and $R>0$ are fixed parameters. Additionally, their bounds are of order $O\left({N-1}\choose K\right)$. Our setting is a generalization of the setting considered by \cite{gopalan14thompson}. We let the $p_i$ lie in the set $[0,1]$ and we obtain a bound which is polynomial in $N, K$.}

The reward is non-linear and  the expected value of reward is strictly increasing in expected rewards of individual arms. Cascade model of click optimization by \citep{pmlr-v37-kveton15} uses a similar problem formulation, however they still consider information on clicks on individual items. Figure \ref{fig:Max_of_rewards} shows the evaluation results in this case. We note that for $K\geq3$, the proposed algorithm significantly outperforms UCB. Even for $N=24$ and $K=2$, where $\binom{N}{K} = 220$, $W(T)$ for \NAM\ is close to UCB. %In the initial phase ($t < .2\times 10^6$), \NAM\ is still exploring and may not have enough time to exploit. After that, \NAM\ outperforms UCB.

We note that the cumulative regret at any time $t$ decreases as $K$ increases. This follows from the fact that as with increasing $K$, a user will have more choices at any given time and it is more likely that the arm with the highest reward is in the $K$ chosen arms. 

\if 0
that she might click any of them. Mathematically, this can be seen when K approaches infinity for $p_i > 0$ as
\begin{eqnarray}
    \E[r] &=& \lim_{K\to \infty}\left(1 - \prod_{i=1}^K\left(1-p_i\right)\right)\\
    &=& 1 - \lim_{K\to \infty}\prod_{i=1}^K\left(1-p_i\right)\\
    &=& 1
\end{eqnarray}
\fi
%for setting where the reward of an action is the maximum reward of played actions. Even for the case where $N=12$ and $K=3$, where action space is comparatively smaller with only $220$, regret is higher for UCB algorithm when the rewards are non linear.

\begin{figure*}
% 	\centering
    \hspace{-0.4cm}
    \subfigure[$N=24$]{
	 \input{max_N_24}
	%	\caption{N = 24}
		\label{fig:max_N_24}}
%	\end{subfigure}\hspace{5mm}
 \subfigure[$N=12$]{
	 % This file was created by matplotlib2tikz v0.6.18.
\begin{tikzpicture}[thick,scale=0.85, every node/.style={scale=0.9}]

\definecolor{color0}{rgb}{0.75,0,0.75}

\begin{axis}[
grid=both,
grid style={white!75!black},
axis background/.style={fill=white},
axis line style={black},
x label style={at={(axis description cs:0.5,-0.025)},anchor=north},
y label style={at={(axis description cs:-0.08,.5)},anchor=south},
xlabel={t},
ylabel={W(t)},
ymode=log,
legend cell align={left},
legend entries={{UCB, K=2},{CMAB\_SM, K=2},{UCB, K=3},{CMAB\_SM, K=3},{UCB, K=5},{CMAB\_SM, K=5}},
legend style={at={(0.55,0.37)},nodes={scale=0.6, transform shape}, anchor=north west, draw=white!80.0!black, fill=white!99.80392156862746!black},
%tick_align=outside,
tick pos=left,
xmajorgrids,
xmin=-49000, xmax=1029000,
ymajorgrids,
ymin=300, ymax=77665.399
]
\addlegendimage{mark=asterisk, dashed, green!50.0!black}
\addlegendimage{mark=asterisk, green!50.0!black}
\addlegendimage{mark=o, dashed, color0}
\addlegendimage{mark=o, color0}
\addlegendimage{mark=star, dashed, black}
\addlegendimage{mark=star, black}
\addplot [thick, mark=asterisk, mark repeat=10, green!50.0!black, dashed]
table [row sep=\\]{%
0	1 \\
20000	2646.27 \\
40000	3818.63 \\
60000	4617.23 \\
80000	5247.93 \\
100000	5710.27 \\
120000	6084.9 \\
140000	6422.07 \\
160000	6713.67 \\
180000	6960.07 \\
200000	7196.2 \\
220000	7397.57 \\
240000	7577.6 \\
260000	7753.63 \\
280000	7910.67 \\
300000	8061.4 \\
320000	8206.2 \\
340000	8342.87 \\
360000	8471.43 \\
380000	8591.37 \\
400000	8712.97 \\
420000	8819.8 \\
440000	8935.37 \\
460000	9048.17 \\
480000	9155.8 \\
500000	9251.7 \\
520000	9343.1 \\
540000	9428.47 \\
560000	9522.6 \\
580000	9600.67 \\
600000	9678.37 \\
620000	9754.93 \\
640000	9835.6 \\
660000	9913.1 \\
680000	9984.67 \\
700000	10049.1 \\
720000	10110.7 \\
740000	10173.9 \\
760000	10241.1 \\
780000	10300.73 \\
800000	10363.43 \\
820000	10425.97 \\
840000	10478.7 \\
860000	10527.17 \\
880000	10572.83 \\
900000	10616.97 \\
920000	10662.13 \\
940000	10695.23 \\
960000	10737.63 \\
980000	10775 \\
};
\addplot [thick, mark=asterisk, mark repeat=10, green!50.0!black]
table [row sep=\\]{%
0	1 \\
20000	3999.87 \\
40000	7459.87 \\
60000	10012.93 \\
80000	11999.6 \\
100000	14206.37 \\
120000	16997.37 \\
140000	20076.47 \\
160000	23069.83 \\
180000	25108.43 \\
200000	27020 \\
220000	29067.43 \\
240000	30781.43 \\
260000	32099.97 \\
280000	33300.83 \\
300000	34555.43 \\
320000	35873.9 \\
340000	37081.93 \\
360000	38152.1 \\
380000	39161.73 \\
400000	40211.33 \\
420000	40961.47 \\
440000	41564.3 \\
460000	42141.03 \\
480000	42843.83 \\
500000	43426.67 \\
520000	44067 \\
540000	44516.13 \\
560000	44903.63 \\
580000	45282.43 \\
600000	45682.27 \\
620000	46125.83 \\
640000	46489.77 \\
660000	46882.67 \\
680000	47182.53 \\
700000	47423 \\
720000	47645.77 \\
740000	47995.5 \\
760000	48227.83 \\
780000	48381.53 \\
800000	48535.37 \\
820000	48640.5 \\
840000	48780.93 \\
860000	48953.27 \\
880000	49107.17 \\
900000	49203.87 \\
920000	49261.33 \\
940000	49297.83 \\
960000	49323.53 \\
980000	49358.57 \\
};
\addplot [thick, mark=o, mark repeat=10, color0, dashed]
table [row sep=\\]{%
0	1 \\
20000	1954.93 \\
40000	3702.8 \\
60000	5051.67 \\
80000	6355.4 \\
100000	7649.83 \\
120000	8847.73 \\
140000	9804.67 \\
160000	10632.23 \\
180000	11446.5 \\
200000	12251.4 \\
220000	13056 \\
240000	13838.37 \\
260000	14575.87 \\
280000	15301.03 \\
300000	16022.3 \\
320000	16724.1 \\
340000	17389.27 \\
360000	17980.67 \\
380000	18526.7 \\
400000	19057.5 \\
420000	19540.27 \\
440000	19982.97 \\
460000	20415.03 \\
480000	20847.6 \\
500000	21275.67 \\
520000	21696.3 \\
540000	22117.73 \\
560000	22530.73 \\
580000	22951 \\
600000	23375.4 \\
620000	23790.8 \\
640000	24194.4 \\
660000	24597.83 \\
680000	24988.87 \\
700000	25358.87 \\
720000	25716.83 \\
740000	26080.13 \\
760000	26432.03 \\
780000	26769.2 \\
800000	27100.63 \\
820000	27436.87 \\
840000	27766.13 \\
860000	28085.73 \\
880000	28398.13 \\
900000	28692.37 \\
920000	28988.2 \\
940000	29274.77 \\
960000	29565.7 \\
980000	29854.47 \\
};
\addplot [thick, mark=o, mark repeat=10, color0]
table [row sep=\\]{%
0	1 \\
20000	2184.1 \\
40000	4266 \\
60000	5894.97 \\
80000	7568.63 \\
100000	9106.8 \\
120000	10308.63 \\
140000	11334.37 \\
160000	11986.43 \\
180000	12170.8 \\
200000	12396.37 \\
220000	12628.17 \\
240000	12777.43 \\
260000	12827.97 \\
280000	12834.3 \\
300000	12837.03 \\
320000	12840.13 \\
340000	12844.1 \\
360000	12844.6 \\
380000	12842.1 \\
400000	12843.87 \\
420000	12849.33 \\
440000	12853.23 \\
460000	12857.3 \\
480000	12857.7 \\
500000	12862.4 \\
520000	12863.23 \\
540000	12863.1 \\
560000	12865.87 \\
580000	12861.8 \\
600000	12863 \\
620000	12862.07 \\
640000	12864.43 \\
660000	12866.17 \\
680000	12869.53 \\
700000	12869.93 \\
720000	12868.43 \\
740000	12871.97 \\
760000	12871.47 \\
780000	12869.7 \\
800000	12874.87 \\
820000	12881.63 \\
840000	12883.67 \\
860000	12886.33 \\
880000	12896.03 \\
900000	12903.5 \\
920000	12902.33 \\
940000	12905.07 \\
960000	12906.13 \\
980000	12909.17 \\
};
\addplot [thick, mark=star, mark repeat=10, black, dashed]
table [row sep=\\]{%
0	1 \\
20000	666.73 \\
40000	1333.87 \\
60000	1989.2 \\
80000	2641.87 \\
100000	3292.17 \\
120000	3946.47 \\
140000	4597.73 \\
160000	5177.33 \\
180000	5752 \\
200000	6318.17 \\
220000	6891.37 \\
240000	7463.33 \\
260000	8027.47 \\
280000	8601.37 \\
300000	9164.47 \\
320000	9739.87 \\
340000	10314.53 \\
360000	10883.37 \\
380000	11447.17 \\
400000	12014.9 \\
420000	12558.13 \\
440000	13106.27 \\
460000	13637.43 \\
480000	14155.13 \\
500000	14600.4 \\
520000	15034.93 \\
540000	15457.8 \\
560000	15882.37 \\
580000	16310.23 \\
600000	16734.2 \\
620000	17154.1 \\
640000	17568.87 \\
660000	17994.6 \\
680000	18418.47 \\
700000	18838.27 \\
720000	19254.63 \\
740000	19668.33 \\
760000	20085.87 \\
780000	20506.43 \\
800000	20924.77 \\
820000	21344.73 \\
840000	21763.53 \\
860000	22180.9 \\
880000	22598.63 \\
900000	23011.03 \\
920000	23409.73 \\
940000	23802.27 \\
960000	24203.2 \\
980000	24605.9 \\
};
\addplot [thick, mark=star, mark repeat=10, black]
table [row sep=\\]{%
0	1 \\
20000	631.37 \\
40000	1186.53 \\
60000	1730.47 \\
80000	2172.97 \\
100000	2496.07 \\
120000	2806.03 \\
140000	3216.93 \\
160000	3514.87 \\
180000	3863.1 \\
200000	4178.87 \\
220000	4267.6 \\
240000	4268.03 \\
260000	4268.73 \\
280000	4269 \\
300000	4270.33 \\
320000	4271.33 \\
340000	4272.83 \\
360000	4273.73 \\
380000	4274.97 \\
400000	4277.5 \\
420000	4276.73 \\
440000	4275.53 \\
460000	4274.4 \\
480000	4274.37 \\
500000	4275.73 \\
520000	4276.93 \\
540000	4276.93 \\
560000	4278.83 \\
580000	4278.23 \\
600000	4280.6 \\
620000	4280.43 \\
640000	4280.2 \\
660000	4280.77 \\
680000	4279.83 \\
700000	4280.37 \\
720000	4281.37 \\
740000	4281.8 \\
760000	4283.07 \\
780000	4284.17 \\
800000	4284.9 \\
820000	4285.6 \\
840000	4286.67 \\
860000	4285.1 \\
880000	4283.57 \\
900000	4282.03 \\
920000	4283.43 \\
940000	4283.2 \\
960000	4283.67 \\
980000	4284.07 \\
};
\end{axis}

\end{tikzpicture}
	%	\caption{N = 12}
		\label{fig:max_N_12}}
	\caption{Reward of actions is the maximum of rewards of individual arms as described in section \ref{max_rewards}}\label{fig:Max_of_rewards}
\end{figure*}

%\subsection{Pairwise sum of product of continuous rewards} \label{cvx_rewards}

%For large $N$, even for small $K=2$, the regret of \NAM is comparable to UCB algorithm.

%\input{appendix_social_influence2.tex}
% \input{appendix_social_influence3.tex}
\end{document}